\documentclass{article}

\usepackage{microtype}
\usepackage{graphicx}
\usepackage{subcaption}
\usepackage{booktabs} % for professional tables
\usepackage{multirow}

\usepackage{etoc}
\usepackage{hyperref}

\usepackage[preprint]{icml2026}

\usepackage{amsmath}
\usepackage{amssymb}
\usepackage{mathtools}
\usepackage{amsthm}

\usepackage[capitalize,noabbrev]{cleveref}

\theoremstyle{plain}
\newtheorem{theorem}{Theorem}[section]
\newtheorem{proposition}[theorem]{Proposition}
\newtheorem{lemma}[theorem]{Lemma}

\theoremstyle{definition}
\newtheorem{definition}[theorem]{Definition}
\newtheorem{assumption}[theorem]{Assumption}
\theoremstyle{remark}

\usepackage{ifthen}
\newboolean{edit}
\setboolean{edit}{true}
\newcommand{\editenable}[1]{\setboolean{edit}{true}}
\newcommand{\editdisable}[1]{\setboolean{edit}{false}}

\usepackage{amsthm}
\usepackage{amssymb}
\usepackage{amsmath}
\usepackage{acronym}

\usepackage{commath,amsmath,amssymb,amsfonts}

\usepackage{accents}
\usepackage[inkscapearea=page]{svg}

\usepackage{mathtools}
\usepackage{hyperref}
\usepackage{amssymb}
\usepackage{bm}
\usepackage{graphicx}
\usepackage{float}
\usepackage{amsmath}
\usepackage{tikz}
\usepackage{multicol}
\usepackage{dsfont}
\usepackage{tcolorbox}
\usepackage{bbm}
\usepackage{multirow}
\usetikzlibrary{positioning, backgrounds, fit, shapes.arrows}
\usetikzlibrary{decorations.markings}
\usepackage{cuted}
\usepackage{enumitem}
\usepackage{caption}
\usepackage{booktabs}
\usepackage{subcaption}
\tikzset{block/.style = {draw, fill=white, rectangle,
		minimum height=3em, minimum width=2cm},
	input/.style = {coordinate},
	output/.style = {coordinate},
	pinstyle/.style = {pin edge={to-,t,black}}
	radiation/.style={{decorate,decoration={expanding waves,angle=90,segment   length=4pt}}}
	
}
\usepackage{smartdiagram}

\tikzstyle{block} = [draw, rectangle, minimum height=2em, minimum width=2em]
\tikzstyle{sum} = [draw, circle,minimum width=0.1 cm]
\tikzstyle{input} = [coordinate]
\tikzstyle{output} = [coordinate]
\tikzstyle{dummy} = [coordinate]
\tikzstyle{pinstyle} = [pin edge={to-,thin,black}]
\usetikzlibrary{positioning, fit, arrows.meta}
\usetikzlibrary{positioning}
\usetikzlibrary{shapes,arrows}
\tikzstyle{frame_cyan} = [thick, draw=blue, solid,inner sep=0.3em]
\tikzstyle{frame_red} = [thick, draw=red, solid,inner sep=0.3em]
\tikzstyle{frame_green} = [thick, draw=green, solid,inner sep=0.3em]

\usepackage{tikz}
\usepackage{xcolor}
\definecolor{fc}{HTML}{1E90FF}
\definecolor{h}{HTML}{228B22}
\definecolor{bias}{HTML}{87CEFA}
\definecolor{noise}{HTML}{8B008B}
\definecolor{conv}{HTML}{FFA500}
\definecolor{pool}{HTML}{B22222}
\definecolor{up}{HTML}{B22222}
\definecolor{view}{HTML}{FFFFFF}
\definecolor{bn}{HTML}{FFD700}
\tikzset{fc/.style={black,draw=black,fill=fc,rectangle,minimum height=1cm}}
\tikzset{h/.style={black,draw=black,fill=h,rectangle,minimum height=1cm}}
\tikzset{bias/.style={black,draw=black,fill=bias,rectangle,minimum height=1cm}}
\tikzset{noise/.style={black,draw=black,fill=noise,rectangle,minimum height=1cm}}
\tikzset{conv/.style={black,draw=black,fill=conv,rectangle,minimum height=1cm}}
\tikzset{pool/.style={black,draw=black,fill=pool,rectangle,minimum height=1cm}}
\tikzset{up/.style={black,draw=black,fill=up,rectangle,minimum height=1cm}}
\tikzset{view/.style={black,draw=black,fill=view,rectangle,minimum height=1cm}}
\tikzset{bn/.style={black,draw=black,fill=bn,rectangle,minimum height=1cm}}
 
\usepackage{xspace}

\tikzstyle{dummy} = [coordinate]
\pgfkeys{/pgf/.cd,
  parallelepiped offset x/.initial=2mm,
  parallelepiped offset y/.initial=2mm
}
\pgfdeclareshape{parallelepiped}
{
  \inheritsavedanchors[from=rectangle] % this is nearly a rectangle
  \inheritanchorborder[from=rectangle]
  \inheritanchor[from=rectangle]{north}
  \inheritanchor[from=rectangle]{north west}
  \inheritanchor[from=rectangle]{north east}
  \inheritanchor[from=rectangle]{center}
  \inheritanchor[from=rectangle]{west}
  \inheritanchor[from=rectangle]{east}
  \inheritanchor[from=rectangle]{mid}
  \inheritanchor[from=rectangle]{mid west}
  \inheritanchor[from=rectangle]{mid east}
  \inheritanchor[from=rectangle]{base}
  \inheritanchor[from=rectangle]{base west}
  \inheritanchor[from=rectangle]{base east}
  \inheritanchor[from=rectangle]{south}
  \inheritanchor[from=rectangle]{south west}
  \inheritanchor[from=rectangle]{south east}
  \backgroundpath{
    \southwest \pgf@xa=\pgf@x \pgf@ya=\pgf@y
    \northeast \pgf@xb=\pgf@x \pgf@yb=\pgf@y
    \pgfmathsetlength\pgfutil@tempdima{\pgfkeysvalueof{/pgf/parallelepiped offset x}}
    \pgfmathsetlength\pgfutil@tempdimb{\pgfkeysvalueof{/pgf/parallelepiped offset y}}
    \def\ppd@offset{\pgfpoint{\pgfutil@tempdima}{\pgfutil@tempdimb}}
    \pgfpathmoveto{\pgfqpoint{\pgf@xa}{\pgf@ya}}
    \pgfpathlineto{\pgfqpoint{\pgf@xb}{\pgf@ya}}
    \pgfpathlineto{\pgfqpoint{\pgf@xb}{\pgf@yb}}
    \pgfpathlineto{\pgfqpoint{\pgf@xa}{\pgf@yb}}
    \pgfpathclose
    \pgfpathmoveto{\pgfqpoint{\pgf@xb}{\pgf@ya}}
    \pgfpathlineto{\pgfpointadd{\pgfpoint{\pgf@xb}{\pgf@ya}}{\ppd@offset}}
    \pgfpathlineto{\pgfpointadd{\pgfpoint{\pgf@xb}{\pgf@yb}}{\ppd@offset}}
    \pgfpathlineto{\pgfpointadd{\pgfpoint{\pgf@xa}{\pgf@yb}}{\ppd@offset}}
    \pgfpathlineto{\pgfqpoint{\pgf@xa}{\pgf@yb}}
    \pgfpathmoveto{\pgfqpoint{\pgf@xb}{\pgf@yb}}
    \pgfpathlineto{\pgfpointadd{\pgfpoint{\pgf@xb}{\pgf@yb}}{\ppd@offset}}
  }
}
\pgfdeclareshape{document}{
\inheritsavedanchors[from=rectangle] % this is nearly a rectangle
\inheritanchorborder[from=rectangle]
\inheritanchor[from=rectangle]{center}
\inheritanchor[from=rectangle]{north}
\inheritanchor[from=rectangle]{north east}
\inheritanchor[from=rectangle]{north west}
\inheritanchor[from=rectangle]{south}
\inheritanchor[from=rectangle]{south east}
\inheritanchor[from=rectangle]{south west}
\inheritanchor[from=rectangle]{west}
\inheritanchor[from=rectangle]{east}
\backgroundpath{%
\southwest \pgf@xa=\pgf@x \pgf@ya=\pgf@y
\northeast \pgf@xb=\pgf@x \pgf@yb=\pgf@y
\pgf@xc=\pgf@xb \advance\pgf@xc by-5pt % this should be a parameter
\pgf@yc=\pgf@ya \advance\pgf@yc by5pt
\pgfpathmoveto{\pgfpoint{\pgf@xa}{\pgf@ya}}
\pgfpathlineto{\pgfpoint{\pgf@xa}{\pgf@yb}}
\pgfpathlineto{\pgfpoint{\pgf@xb}{\pgf@yb}}
\pgfpathlineto{\pgfpoint{\pgf@xb}{\pgf@yc}}
\pgfpathlineto{\pgfpoint{\pgf@xc}{\pgf@ya}}
\pgfpathclose
\pgfpathmoveto{\pgfpoint{\pgf@xc}{\pgf@ya}}
\pgfpathlineto{\pgfpoint{\pgf@xc}{\pgf@yc}}
\pgfpathlineto{\pgfpoint{\pgf@xb}{\pgf@yc}}
\pgfpathclose
}
}
\tikzstyle{block} = [draw, fill=white, rectangle, 
    minimum height=3em, minimum width=6em]
    
\usetikzlibrary{backgrounds}
\usepackage{pifont}% http://ctan.org/pkg/pifont
\tikzstyle{startstop} = [rectangle, rounded corners, minimum width=2cm, minimum height=0.7cm,text centered, draw=black, fill=lime!30]
\usepackage{stackengine}

\makeatletter
\newcounter{phase}[algorithm]
\newlength{\phaserulewidth}
\newcommand{\setphaserulewidth}{\setlength{\phaserulewidth}}

\makeatother
\setphaserulewidth{.7pt}
\makeatother
\usepackage{colortbl}

\usepackage{amsmath}
\usetikzlibrary{arrows.meta,positioning,calc,fit,matrix,shapes.geometric}

\definecolor{wbox}{RGB}{252,244,206}
\definecolor{rbox}{RGB}{176,154,201}
\definecolor{mgray}{RGB}{235,235,235}
\definecolor{bline}{RGB}{26,44,92}
\definecolor{plin}{RGB}{170,130,190}
\definecolor{actc}{RGB}{132,171,219}
\definecolor{origc}{RGB}{200,200,200}
\usepackage{arydshln} % for dashed lines
\newcommand\MakeUppercaseGreek[1]{%%
  \begingroup
    \let\psi\Psi
    \let\omega\Omega
    \let\gamma\Gamma
    \MakeUppercase{#1}%%
  \endgroup}
\newcommand{\vectorsym}[1]{\bm{#1}}
\newcommand{\randomvec}[1]{\MakeUppercaseGreek{\bm{#1}}}
\newcommand{\expectation}[2]{\mathbb{E}_{#2}\left[#1\right]}

\newcommand{\brackets}[1]{\left(#1\right)}

\newcommand{\matsym}[1]{\mathbf{#1}}
\newcommand{\squareb}[1]{\left[{#1}\right]}

\newcommand{\floor}[1]{\left\lfloor#1\right\rfloor}
 
\newcommand{\probP}{\text{I\kern-0.15em P}}

\newcommand{\Y}[0]{\mathrm{Y}}
\newcommand{\X}[0]{\randomvec{X}}

\newcommand{\Q}[0]{\mathrm{Q}}

\newcommand{\y}[0]{\vectorsym{y}}

\newcommand{\x}[0]{\vectorsym{x}}

\newcommand{\shiftvector}[0]{\vectorsym{v}}
\usepackage{accents}
\newcommand{\dbtilde}[1]{\accentset{\approx}{#1}}

\usepackage[textsize=tiny]{todonotes}

\usepackage[dvipsnames]{xcolor}

\newcommand{\edit}[1]{{#1}}
\newcommand{\MN}{LATMiX}

\icmltitlerunning{LATMiX: Learnable Affine Transformations for Microscaling Quantization of LLMs}

\makeatletter
\newcommand{\l@apxsection}{\@dottedtocline{1}{0em}{2.3em}}

\newcommand{\listofappendixsections}{%
  \section*{Appendix Contents}%
  \@starttoc{apx}% reads the .apx file
}

\newcommand{\apxsection}[1]{%
  \section{#1}%
  \addcontentsline{apx}{apxsection}{\protect\numberline{\thesection}#1}%
}
\makeatother

\begin{document}
\raggedbottom
\twocolumn[
  % \icmltitle{LATMiX: Learning Affine Transforms for Microscaled LLM Quantization with Structure and Distribution Awareness}
  \icmltitle{LATMiX: Learnable Affine Transformations for \\ Microscaling Quantization of LLMs}

  % It is OKAY to include author information, even for blind submissions: the
  % style file will automatically remove it for you unless you've provided
  % the [accepted] option to the icml2026 package.

  % List of affiliations: The first argument should be a (short) identifier you
  % will use later to specify author affiliations Academic affiliations
  % should list Department, University, City, Region, Country Industry
  % affiliations should list Company, City, Region, Country

  % You can specify symbols, otherwise they are numbered in order. Ideally, you
  % should not use this facility. Affiliations will be numbered in order of
  % appearance and this is the preferred way.
  \icmlsetsymbol{equal}{*}

  \begin{icmlauthorlist}
    \icmlauthor{Ofir Gordon}{comp}
    \icmlauthor{Lior Dikstein}{comp}
    \icmlauthor{Arnon Netzer}{comp}
    %\icmlauthor{}{sch}
    \icmlauthor{Idan Achituve}{equal,comp}
    \icmlauthor{Hai Victor Habi}{equal,comp}
    %\icmlauthor{}{sch}
    %\icmlauthor{}{sch}
  \end{icmlauthorlist}

  % \icmlaffiliation{yyy}{Department of XXX, University of YYY, Location, Country}
  %\icmlaffiliation{comp}{Arm Holdings, Cambridge, United Kingdom}
    \icmlaffiliation{comp}{Arm Holdings, Israel}
  % \icmlaffiliation{sch}{School of ZZZ, Institute of WWW, Location, Country}

  % \icmlcorrespondingauthor{Firstname1 Lastname1}{first1.last1@xxx.edu}
  \icmlcorrespondingauthor{Hai Victor Habi}{haivictor.habi@arm.com}
  % You may provide any keywords that you find helpful for describing your
  % paper; these are used to populate the "keywords" metadata in the PDF but
  % will not be shown in the document
  \icmlkeywords{Machine Learning, ICML}

  \vskip 0.3in
]

% this must go after the closing bracket ] following \twocolumn[ ...

% This command actually creates the footnote in the first column listing the
% affiliations and the copyright notice. The command takes one argument, which
% is text to display at the start of the footnote. The \icmlEqualContribution
% command is standard text for equal contribution. Remove it (just {}) if you
% do not need this facility.

% Use ONE of the following lines. DO NOT remove the command.
% If you have no special notice, KEEP empty braces:
% \printAffiliationsAndNotice{}  % no special notice (required even if empty)
% Or, if applicable, use the standard equal contribution text:
\printAffiliationsAndNotice{\icmlEqualContribution}
% \icmlEqualContribution{}

% \begingroup
% \let\clearpage\relax

% \include{files/abstract}
% \include{files/introduction}
% \include{files/background}
% \include{files/method}

% \include{files/experiments}
% \include{files/conclusion}
% \include{files/impact}

\begin{abstract}
Post-training quantization (PTQ) is a widely used approach for reducing the memory and compute costs of large language models (LLMs). Recent studies have shown that applying invertible transformations to activations can significantly improve quantization robustness by reducing activation outliers; however, existing approaches are largely restricted to rotation or Hadamard-based transformations.
Moreover, most studies focused primarily on traditional quantization schemes, whereas modern hardware increasingly supports the microscaling (MX) data format. Attempts to combine both showed severe performance degradation, leading prior work to introduce assumptions on the transformations.
In this work, we take a complementary perspective. First, we provide a theoretical analysis of transformations under MX quantization by deriving a bound on the quantization error. Our analysis emphasizes the importance of accounting for both the activation distribution and the underlying quantization structure.
Building on this analysis, we propose \MN{}, a method that generalizes outlier reduction to learnable invertible affine transformations optimized using standard deep learning tools. 
Experiments show consistent improvements in average accuracy for MX low-bit quantization over strong baselines on a wide range of zero-shot benchmarks, across multiple model sizes.
\end{abstract}
\section{Introduction}
Large language models (LLMs) have become a foundational component in a wide range of applications, including natural language understanding, code generation, reasoning, and multimodal systems \cite{brown2020language, chen2021evaluating, alayrac2022flamingo, wei2022chain}. Recent years have witnessed a rapid increase in both the scale and capability of these models, leading to substantial gains in performance and generalization ability \cite{kaplan2020scaling, hoffmann2022training, zhao2023survey}. However, this scaling trend comes at a significant computational and memory cost, making efficient deployment increasingly challenging. Post-training quantization (PTQ)~\cite{gholami2022survey} has therefore emerged as a key technique for reducing inference latency, memory footprint, and energy consumption while preserving model accuracy, without requiring expensive retraining \cite{lin2016fixed, nagel2020up}. As a result, a rich body of work has proposed diverse PTQ methods for LLMs, demonstrating that carefully designed quantization strategies can substantially lower deployment costs with mild performance degradation \cite{dettmers2022gpt3, FrantarAHA23, xiao2023smoothquant}.

However, existing PTQ methods for LLMs often struggle in severely resource-constrained settings that require low bit widths, such as 4-bit quantization, where accuracy degradation becomes more pronounced. A key factor limiting performance in this regime is the presence of activation outliers, which dominate quantization error and prevent effective low-precision representation. Motivated by the intuition that redistributing energy more evenly across dimensions improves quantization robustness, several recent works mitigate activation outliers using invertible transformations, most notably rotation-based methods \cite{chee2023quip, ashkboos2024quarot, Liu0FSCKCTB25}. 

In parallel, new data formats have emerged, such as the microscaling (MX) format, introduced by the Open Compute Project \cite{mxfp}. The MX format is specifically designed to better accommodate the numerical characteristics of large models  and is endorsed by Microsoft, AMD, Arm, Intel, Meta, and NVIDIA. The key idea behind MX is to partition the tensors into blocks, where each block is assigned its own scaling factor that can be dynamically selected \cite{rouhani2023microscaling}. This allows finer-grained control over the quantization error.  The fine-grained scale enables both efficient training \cite{chmiel2025fp} and inference \cite{lee2025mx} using MX quantization.

\begin{figure*}[t]
    \centering
    \includegraphics[width=1.0\linewidth]{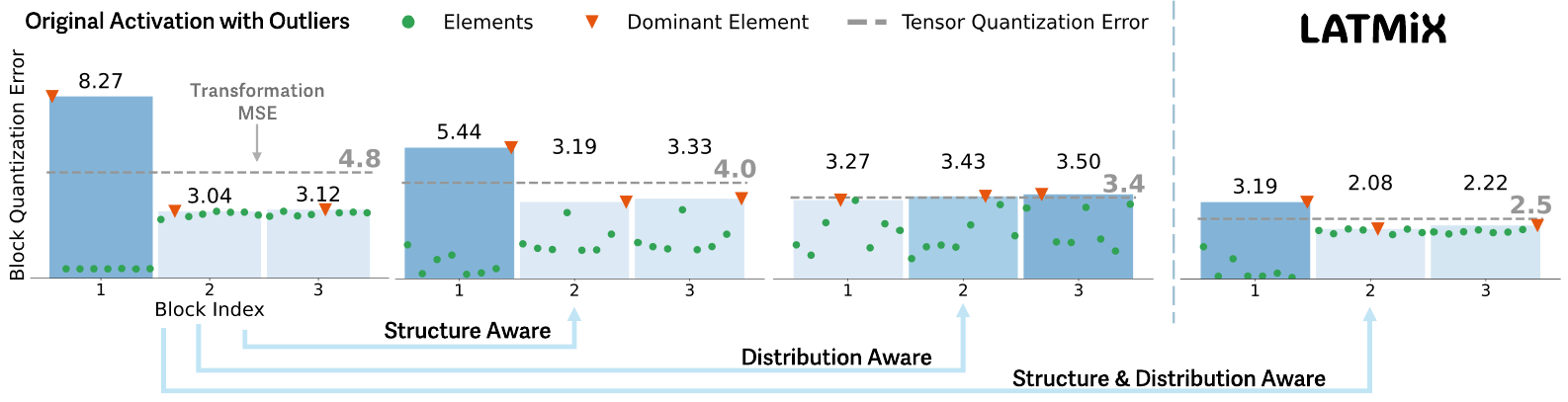}
    \caption{\MN{} takes into account both the MX block structure and the distribution of features to diffuse outliers. In the figure, energy is distributed both within the block and among blocks to obtain lower quantization error.}
    \label{fig:main_idea}
\end{figure*}

However, recent studies have shown that naively combining MX quantization with global rotation- or Hadamard-based transformations leads to severe performance degradation \cite{egiazarian2025bridging, shao2025block}. 
As a workaround, these studies suggest to apply such transformations independently within each MX block using block-diagonal rotation matrices. While this design choice avoids the immediate mismatch between global transformations and block-wise scaling, it fundamentally restricts the transformation to operate on small, isolated subspaces, preventing cross-block redistribution of activation mass. 
As a result, dominant outliers cannot be effectively diffused across the full tensor, limiting outlier suppression and leading to suboptimal quantization accuracy in low-bit regimes. 

In this study, we address this gap by first presenting a theoretical and numerical analysis showing that quantization error depends on both the \textit{feature distribution} and the \textit{MX block structure}, with block-diagonal matrices representing only a special case. To more generally account for both factors, we propose \MN{}. Specifically, we define the class of admissible transformations as all invertible affine transformations, without imposing independence assumptions. The transformations are learned by optimizing free-form parameters corresponding to LU and QR decompositions of general invertible matrices, using a distillation loss and a volume-preserving regularization. The former encourages the predictions of the transformed quantized model to match those of the full-precision model, while the latter maintains invertibility throughout optimization. Since the learned transformations can be folded into the linear layers, they incur \edit{negligible} inference cost. \edit{ We validate this claim through throughput measurement   on real hardware. }  
% when biases are present in the LLM, and only negligible overhead when biases are absent. 
This enables a substantially richer family of transformations, allowing more effective redistribution of activation mass and improved outlier mitigation.

%However, allowing such general transformations breaks the strict functional equivalence between the transformed network and the original full-precision model that prior methods enforce. To tackle that, we propose using a distillation loss and volume-preserving regularization, which encourages the predictions of the transformed quantized model to match those of the full-precision teacher. This relaxation introduces additional degrees of freedom in shaping activation distributions while preserving the same memory footprint and inference-time complexity as existing transformation-based quantization approaches.  

To summarize, in this study we make the following novel contributions: (1) We conduct a theoretical and numerical study of MX quantization, emphasizing the importance of both the block structure and feature distribution; (2) We propose \MN{}, a method for learning affine transformations that can be folded into weight matrices based on LU and QR parameterizations; (3)  We present experimental results on seven zero-shot reasoning benchmarks and WikiText2 using various LLM models. Across all evaluations \MN{} shows consistent improvements over leading baselines.

\section{Background \& Notation}
We use bold lowercase letters to denote vectors (e.g., $\x$) and bold uppercase letters to denote matrices (e.g., $\matsym{W}$). For notational convenience, and without loss of generality, when not written explicitly we assume that vectors lie in $\mathbb{R}^d$ and matrices in $\mathbb{R}^{d \times d}$. Throughout the paper, $\x$ is used as a generic notation for neural network activations. %, including input embeddings as well as the outputs of attention and fully connected layers. %We denote by $\matsym{W}$ the weight matrix of a fully connected layer.

\begin{figure*}
    \centering
    \begin{subfigure}[b]{0.3\textwidth}
        \centering
        \includesvg[width=1.0\linewidth]{images/Llama3.2-1B_qe_block_size.svg}
        \caption{Quantization Error}
        \label{sfig:qe_vs_block_size}
    \end{subfigure}%
    \begin{subfigure}[b]{0.3\textwidth}
        \centering
        \includesvg[width=1.0\linewidth]{images/ppl_per_block_error.svg}
        \caption{Perplexity}
        \label{sfig:ppl_anaylsis}
    \end{subfigure}
    \begin{subfigure}[b]{0.3\textwidth}
        \centering
        \includesvg[width=1.0\linewidth]{images/Llama3.2-1B_q_error_first.svg}
        \caption{MX Block Quantization Error}
        \label{sfig:mx_block_error_vs_index}
    \end{subfigure}
    \caption{Analysis of various transformation types: (1) Vanilla: no transformation applied; (2) Hadamard: Full Hadamard transform; (3) Block Hadamard: a block-diagonal matrix in which each block corresponds to an MX block with an Hadamard matrix; (4) a learned rotation matrix; and (5) a learned affine transformation that minimizes the objective in Eq. \eqref{eq:q_error}. In Fig.~\ref{sfig:qe_vs_block_size}, the Hadamard and learned rotation curves are superimposed.}
    \label{fig:mx-block-analysis}
\end{figure*}
\subsection{Microscaling (MX) Quantization} 
% We briefly outline Microscaling (MX) quantization \cite{mxfp}, focusing on 8-bit \emph{dynamic} power-of-two scaling \ia{why 8-bit?}. As standard quantization using a single scale must cover the largest values, smaller values get represented with smaller precision. MX quantization attempts to resolve that issue by partitioning tensors to small groups, each of which is assigned with a scale factor, instead of sharing one scale for the entire tensor. More formally,
% Let $\x$ be the vector to quantize, $B$ the MX block size, and $N_B = \frac{d}{B}$ the number of MX blocks. The quantization of the $j^{th}$ element belonging to the $i^{\text{th}} \in \{0, ..., B-1\}$ block is given by:
% \begin{subequations}\label{eq:q_mx}
% \begin{align}
%     \text{Compute scale:} \quad & s_i=2^{\floor{\log_2\brackets{\max\limits_{j\in\mathcal{I}_i}\abs{\x_j}}}-r_{max}}\\
% \text{Quantize:}\quad&\hat{\x}_j=Q_{e}\brackets{\frac{\x_j}{s_i}},\\
% % \text{Dequantized:}\quad&\quad\widehat{\x}_j=s_i\tilde{\x}_j,
% \end{align}
% \end{subequations}
% where $\mathcal{I}_i=[i\cdot N_B,\hdots, \brackets{i+1}\cdot N_B)$ denotes the index set of the $i^{th}$ block \ia{Hai - please verify}, $Q_e$ is the corresponding quantizer (e.g., FP8 \cite{micikevicius2023ocp}), and $r_{max}$ is the largest binade in the element data format.
% Throughout this work, we write $\hat{\x}=Q_{e}\brackets{\x}$ for the MX quantization defined in \eqref{eq:q_mx}, applied to the entire tensor $\x$.
Here we briefly outline MX quantization \cite{mxfp}, focusing on block-wise power-of-two dynamic scaling commonly used in conjunction with low-precision element formats such as FP4. 
% In standard tensor-wise quantization, a single scale must accommodate the largest-magnitude value in the tensor, causing smaller values to be represented with unnecessarily low precision. MX quantization mitigates this issue by partitioning the tensor into small blocks, each assigned its own scale factor, rather than sharing a single global scale.
In standard tensor-wise quantization, using one shared scale for an entire tensor can be suboptimal when values vary widely in magnitude. MX quantization addresses this by partitioning the tensor into small blocks, each of which is assigned its own scale factor rather than sharing a single global scale. More formally, let $\x \in \mathbb{R}^d$ be the vector to be quantized, $B$ the MX block size, and $N_B = d / B$ the number of blocks (assuming $B$ divides~$d$). The quantization of an element $\x_j$ belonging to block $i \in \{0,\dots,N_{B}-1\}$ is defined as:
\begin{align}\label{eq:q_mx}
    \text{Compute scale:} \quad & s_i=2^{\floor{\log_2\brackets{\max\limits_{j\in\mathcal{I}_i}\abs{\x_j}}}-r_{\max}},\nonumber\\
\text{Quantize:}\quad&\Q\brackets{\x}_j=s_i Q_{e}\brackets{\frac{\x_j}{s_i}},
% \text{Dequantized:}\quad&\quad\widehat{\x}_j=s_i\tilde{\x}_j,\nonumber
\end{align}
where $\mathcal{I}_i=[i\cdot B,\hdots, \brackets{i+1}\cdot B - 1]$ denotes the index set of the $i^{th}$ block,  $Q_e(\cdot)$ is the element-wise quantizer corresponding to the chosen low-precision format (e.g., FP8, INT4, FP4% \cite{rouhani2023microscaling,micikevicius2023ocp}
), and $r_{\max}$ denotes the maximum exponent representable in that format. Throughout this work, we denote by $ Q(\x)$ the MX quantization operator defined in Eq. \eqref{eq:q_mx} applied on to the entire tensor $\x$.

\subsection{Outlier Reduction Using Rotation Matrices}
In the literature, it has been observed that rotating the activations of a neural network (NN) prior to quantization helps reduce outliers, thereby lowering the quantization error \cite{tseng2024quip, ashkboos2024quarot, Liu0FSCKCTB25}. Rotation transformations, as a subset of invertible linear transformations, are typically chosen in order to preserve computational invariance \cite{AshkboosCNHH24}, namely, that the network with rotated activations remains functionally equivalent to the original network. This invariance is achieved by injecting the inverse transformation into the network after rotating the activations. More formally, consider a linear layer with activation $\x$, a weight matrix $\matsym{W}$, and a rotation matrix $\matsym{A}$, then,
$\x^\top \matsym{W} = (\matsym{A}\x)^\top (\matsym{A}^{-\top}\matsym{W})$. 

Recently, it was proposed in \cite{Liu0FSCKCTB25} to learn the rotation matrices from a small calibration set, departing from previous approaches which set it in advance to some value. During the optimization process, weights are kept in full precision, while only the rotated activations are quantized. After learning the rotation matrices, the corresponding transformed weights are quantized as well. However, the approach proposed in \cite{Liu0FSCKCTB25} is restricted to rotation matrices only, and the optimization, which is performed directly on the Stiefel manifold, can be less amenable to common deep learning optimization techniques \cite{LiLT20, huang2023normalization}. Furthermore, recent studies \cite{egiazarian2025bridging, shao2025block} have shown that when MX quantization is applied to rotated activations, the outlier suppression can be severely harmed, resulting in substantial performance degradation. Hence, these studies suggested applying the rotations separately per MX block.

% Quantization is subsequently applied separately to the rotated activations $\matsym{A}\x$ and to the transformed weights $\matsym{A}^{-1}\matsym{W}$. In most cases, activations are quantized using standard low-precision formats (e.g., FP4), while weights are quantized using methods such as round-to-nearest (RTN) or GPTQ \cite{frantar2022gptq}. Commonly, two types of rotations are distinguished: those that can be absorbed into existing matrix multiplications in the NN (often referred to as folded rotations), and those that cannot, known as online rotations. While online rotations may incur additional computational overhead during inference, folded rotations introduce no extra runtime and memory cost. In the example above, the key idea is that instead of quantizing $\matsym{W}$ directly, one quantizes $\matsym{A}^{-1}\matsym{W}$, which can be precomputed offline.

\section{Method}
\label{sec:method}
We first conduct a theoretical and numerical study of quantization error under various transformations in MX quantization. Our findings show that both the activation distribution and the MX block structure play a crucial role in designing transformations that effectively reduce quantization error. Building on this analysis, we introduce \MN{}, a method that learns from the class of invertible affine transformations. An illustration of the proposed approach is shown in Fig. \ref{fig:main_idea}.

%We first theoretically and numerically study the quantization error of MX quantization under various transformations, including random orthogonal, Hadamard, and general inverse transforms at different granularity of MX block sizes. Based on our analysis, we then present \MN{} a method that ta
\subsection{Theoretical \& Numerical Analysis}\label{sec:theory}
Here, we provide a theoretical and numerical analysis of the transformation effect on the microscaling  quantization error. We begin by defining the affine transformation $\matsym{T}$, its inverse and the MSE error under MX quantization. 
\begin{definition}[General Affine Transformation]
Let $\matsym{A} \in \mathbb{R}^{d \times d}$ be an invertible matrix and $\shiftvector$ be a translation vector. Then an affine transformation is defined as $\matsym{T}\brackets{\x}=\matsym{A}\x+\shiftvector$
% \begin{equation}
% \matsym{T}\brackets{\x}=\matsym{A}\x+\shiftvector,
% \end{equation}
and its inverse
% \begin{equation}
$\matsym{T}^{-1}\brackets{\y}=\matsym{A}^{-1}\y-\matsym{A}^{-1}\shiftvector.$
% \end{equation}  
\end{definition}
To examine various types of transformations, we employ the Mean Squared Error (MSE), which approximates changes in the objective function induced by perturbations in weights and activations \cite{nagel2020up, gordon2024eptq}. \edit{This approach relies on the following assumptions: (i) the model is well trained, so the gradients are close to zero; (ii) a second-order Taylor expansion is used; and (iii) the layers are assumed to be independent of one another, resulting in a block-diagonal Hessian. We would like to stress though that we use the MSE here solely to analyze different transformations and these assumptions are not required by \MN{}.}
\begin{definition}[Transformation Mean Squared Error] The Transformation Mean Squared Error for $\matsym{T}$ is defined as:
    \begin{equation}\label{eq:q_error}
        % \mathcal{E}\brackets{\matsym{T}}\triangleq \frac{1}{d}\expectation{\norm{\x-\matsym{T}^{-1}\brackets{Q\brackets{\matsym{T}\brackets{\x}}} }_2^2}{},
        \mathcal{E}\brackets{\matsym{T}}\triangleq \tfrac{1}{d}\expectation{||\x-\matsym{T}^{-1}(Q(\matsym{T}\brackets{\x}))||_2^2}{},
    \end{equation}
where $\norm{\cdot}_2$ denotes the Euclidean norm of a vector.
\end{definition}

\textbf{Theoretical Analysis}. We first derive an upper bound for the quantization error defined in Eq. \ref{eq:q_error} for general distributions, and then present its concrete form in the case of \edit{$\psi_{\alpha}$ distributions}
% sub-Gaussian distributions
, allowing us to draw practical insights. The class of \edit{$\psi_{\alpha}$ distributions} %sub-Gaussian distributions 
is broad \edit{and generalized, encompassing both sub-Gaussian and sub-Exponential distributions}; in particular, any centered random variable with bounded support is sub-Gaussian \cite{vershynin2018high}. 

As our interest lies in extreme-value behavior, \edit{$\psi_{\alpha}$ distributions} are well suited due to their Gaussian-dominated tails.  %This enables the use of the maximum sub-Gaussian theorem (Proposition 2.7.6 in \cite{vershynin2018high}).

\begin{theorem}[MX Quantization Error]\label{thm:q_error_base} 
Assume that $\x$ is a continuous random vector, $\matsym{T}$ is an affine transformation and $Q$ is the quantization of MX as defined in Eq. \eqref{eq:q_mx}. Then, under regularity assumptions on $\x$, %outlined in Appendix~\ref{app:assumption}, 
\begin{equation}
\begin{aligned} 
\label{eq:bound_mx}
&\mathcal{E}\brackets{\matsym{T}}\lesssim{\tfrac{\norm{\matsym{A}^{-1}}_{\sigma}^2}{N_B}\sum_{i=1}^{N_B}M_i}, \quad\text{where}\\
&M_i\triangleq\expectation{\brackets{\max\limits_{j\in\mathcal{I}_i}\abs{\squareb{\matsym{T}\brackets{\x}}_j}}^2}{}.
\end{aligned}
\end{equation}
 Here, $f(x) \lesssim g(x)$  denotes that  $f(x)$ is less than $g(x)$ up to a fixed multiplicative constant, and ~$\norm{\cdot}_{\sigma}$ denotes the spectral norm. Furthermore, if we assume that $\x$ is a  \edit{$\psi_{\alpha}$}  random vector with mean $\vectorsym{\mu}$ then, 
%\scriptsize
 \begin{equation}
     \begin{aligned}
    M_i\lesssim\Bigg(&\matsym{T}_{max}(\vectorsym{\mu}) + {{  h_{\alpha}\brackets{B}\max\limits_{j \in \mathcal{I}_i} \norm{\squareb{\hat{\matsym{T}}\brackets{\x}}_j}_{\psi_\alpha}}\Bigg)^2}.
    \end{aligned}
    \label{eq:error_sub_max}
 \end{equation}
 %\normalsize
 Where $\norm{z}_{\psi_{\alpha}}\triangleq \inf\set{K>0:\expectation{\exp{\brackets{\tfrac{z^{\alpha}}{K^\alpha}}}}{}\leq 2}$ is the Orlicz norm of the random variable $z$, $h_{\alpha}\brackets{B}$ is a constant depending on the block size B and the decay of the tail $\alpha$,
 %$h_{\alpha}\brackets{B}=\brackets{\frac{4 B}{\alpha} \cdot \Gamma\brackets{2/\alpha,\log\brackets{2B}}
   % +\brackets{\log\brackets{2B}}^{2/\alpha}}^{1/2}$,   
 $\matsym{T}_{max}(\x) \triangleq \max\limits_{j\in\mathcal{I}_i}\abs{\squareb{\matsym{T}\brackets{\x}}_j}$, and $\squareb{\hat{\matsym{T}} \brackets{\x}}_j \triangleq \squareb{\matsym{T}\brackets{\x}-\matsym{T}\brackets{\vectorsym{\mu}}}_j$.
\end{theorem}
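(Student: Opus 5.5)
The argument has two stages. The first is a deterministic reduction of $\mathcal{E}(\matsym{T})$ to block maxima. The second bounds those maxima probabilistically for $\psi_\alpha$ vectors.

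\textbf{Stage 1.} Write $\y=\matsym{T}(\x)=\matsym{A}\x+\shiftvector$. Then
\[
\x-\matsym{T}^{-1}(Q(\y))=\matsym{A}^{-1}\bigl(\y-Q(\y)\bigr),
\]
because the translation cancels. Hence
\[
\norm{\x-\matsym{T}^{-1}(Q(\y))}_2^2\le\norm{\matsym{A}^{-1}}_\sigma^2\,\norm{\y-Q(\y)}_2^2 .
\]
Next split $\norm{\y-Q(\y)}_2^2$ over the blocks $\mathcal{I}_i$. In block $i$ the element quantizer $Q_e$ acts on $\y_j/s_i$, whose magnitude is at most $2^{r_{\max}+1}$ by the choice of $s_i$. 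I will use the regularity assumptions (no overflow/saturation, or saturation error controlled the same way) to get a per-element error of at most $c\,s_i$, where $c$ depends only on the element format (its grid spacing at the top binade). Since $s_i\le 2^{-r_{\max}}\max_{j\in\mathcal{I}_i}|\y_j|$, each element error is bounded by $c'\max_{j\in\mathcal{I}_i}|\y_j|$. Summing over the $B$ entries of a block gives a block error $\lesssim B\,(\max_{j\in\mathcal{I}_i}|\y_j|)^2$. Summing over blocks, dividing by $d=N_B B$, and taking expectations yields
\[
\mathcal{E}(\matsym{T})\lesssim\frac{\norm{\matsym{A}^{-1}}_\sigma^2}{N_B}\sum_i M_i .
\]
Continuity of $\x$ is used only so that $\log_2$ of the block max is almost surely finite, i.e.\ the block max is nonzero.

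\textbf{Stage 2.} Decompose $\y=\matsym{T}(\vectorsym{\mu})+\hat{\matsym{T}}(\x)$, where $\hat{\matsym{T}}(\x)=\matsym{A}(\x-\vectorsym{\mu})$ is a centered $\psi_\alpha$ vector, since linear images of $\psi_\alpha$ vectors are $\psi_\alpha$. The triangle inequality gives
\[
\max_{j\in\mathcal{I}_i}|\y_j|\le\matsym{T}_{max}(\vectorsym{\mu})+\max_{j\in\mathcal{I}_i}\bigl|[\hat{\matsym{T}}(\x)]_j\bigr|.
\]
Apply Minkowski in $L^2$ to get
\[
M_i^{1/2}\le\matsym{T}_{max}(\vectorsym{\mu})+\Bigl(\mathbb{E}\max_{j\in\mathcal{I}_i}|Z_j|^2\Bigr)^{1/2},\qquad Z_j=[\hat{\matsym{T}}(\x)]_j .
\]
Squaring gives the claimed shape of Eq.~\eqref{eq:error_sub_max}.

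\textbf{Main obstacle.} The remaining task is a maximal inequality: for $B$ (possibly dependent) variables with $\norm{Z_j}_{\psi_\alpha}\le K:=\max_j\norm{Z_j}_{\psi_\alpha}$,
\[
\mathbb{E}\max_j Z_j^2\le h_\alpha(B)^2K^2 .
\]
I will prove it by a union bound on tails, in the spirit of Proposition 2.7.6 of \cite{vershynin2018high}. By Markov on $\exp(|Z_j|^\alpha/K^\alpha)$,
\[
\mathbb{P}(|Z_j|>t)\le 2e^{-(t/K)^\alpha},
\qquad\text{so}\qquad
\mathbb{P}\bigl(\max_j|Z_j|>t\bigr)\le\min\bigl(1,\,2Be^{-(t/K)^\alpha}\bigr).
\]
Then write $\mathbb{E}\max_j Z_j^2=\int_0^\infty 2t\,\mathbb{P}(\max_j|Z_j|>t)\,dt$. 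Split this integral at $t_0=K(\log 2B)^{1/\alpha}$: bound the probability by $1$ below $t_0$ and use the union bound above $t_0$. The lower part contributes $K^2(\log 2B)^{2/\alpha}$. Substituting $u=(t/K)^\alpha$, the upper part becomes
\[
\tfrac{4B}{\alpha}K^2\,\Gamma\bigl(2/\alpha,\log 2B\bigr).
\]
This gives
\[
h_\alpha(B)=\Bigl(\tfrac{4B}{\alpha}\Gamma\bigl(2/\alpha,\log 2B\bigr)+(\log 2B)^{2/\alpha}\Bigr)^{1/2}.
\]
I expect this tail integration and its bookkeeping of constants to be the main technical step. Stage 1 also needs care in stating exactly which regularity/saturation assumption makes the per-element error proportional to $s_i$.
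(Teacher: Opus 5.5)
Your proof is correct and has the same two-stage architecture as the paper. Stage 2 is essentially the paper's lemma on the expected block maximum: centering plus Minkowski in $L^2$, the union-bound tail $2Be^{-(t/K)^\alpha}$, the split at $t_0=K(\log 2B)^{1/\alpha}$, and the same incomplete-Gamma formula for $h_\alpha(B)$ at $p=2$.

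The genuine difference is in Stage 1, in how the per-element rounding error is controlled.

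The paper conditions on the block scale $s_i$ and changes variables to $z=y_j/s_i$. It assumes $z$ stays inside the representable range of $Q_e$. It then uses its bounded-density assumption on $\mathbf{x}$ to bound $\mathbb{E}[(z-Q_e(z))^2\mid s_i]$ by the format constant $C_Q=\sum_k\int_{l_k}^{u_k}(z-q_k)^2\,dz$, and only afterwards invokes $s_i\le 2^{-r_{\max}}\max_{j\in\mathcal{I}_i}|y_j|$.

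You instead use that $|z|<2^{r_{\max}+1}$ always holds. Hence $|z-Q_e(z)|$ is bounded pointwise by a format constant: half the largest grid gap on that range, or the clamping gap for a saturating format. For example, in MXFP4 scaled magnitudes in $[6,8)$ are clamped to $6$.

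What your route buys is weaker hypotheses:
\begin{itemize}
\item You need no bounded-density assumption at all.
\item You need no no-overflow assumption for saturating formats, so the hedge in your Stage 1 is unnecessary.
\item You need continuity only to make the block maximum nonzero almost surely.
\end{itemize}
It also sidesteps the paper's step of bounding the conditional density of $z$ given $s_i$ by a constant. That bound does not follow directly from a bounded density of $\mathbf{x}$, because conditioning on a rare scale value can concentrate that density.

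In principle, the paper's averaging argument yields a mean-squared rather than worst-case rounding constant. Since all such constants are absorbed into $\lesssim$, this makes no difference to the statement as given.
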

% \todo[inline]{1. Add an example of $N_B=1$ to explain Thm\\
% 2. Highlight the need to learning the MX block stricture in general.}
From Theorem~\ref{thm:q_error_base} (see the proof in Appendix~\ref{proof:thm:q_error_base}), we see that the MSE is determined by two factors: (1) the norm $||\matsym{A}^{-1}||_{\sigma}^2$, and (2) the average, across MX blocks, of the expected maximum absolute channel value within each block. Reducing either term leads to a lower transformation MSE. However, these factors can be at odds. Reducing the first term, by increasing the smallest singular value of $\matsym{A}$, may increase $\sum_{i=1}^{N_B} M_i$, while transformations that reduce this sum can simultaneously reduce the smallest singular value, thereby increasing $||\matsym{A}^{-1}||_{\sigma}^2$. The existence of this tradeoff hints that learning the transformation is required to navigate and balance between the terms.

Furthermore, Eq.~\eqref{eq:bound_mx} provides key insights into the design of the transformation $\matsym{T}$. In particular, choosing $\matsym{A}$ as a random invertible matrix can be detrimental, as indiscriminate channel mixing may increase quantization error. To illustrate, consider a length-four vector with a Dirac delta distribution centered at $[10,1,0.5,0.5]$, and an MX block size of $B=2$. Applying the Walsh–Hadamard matrix $\matsym{T}=\matsym{H}_4$ yields the rotated vector $[6,4.5,5,4.5]$, which reduces the error in the first block but increases it in the second block. Motivated by such behavior, prior work~\cite{egiazarian2025bridging, shao2025block} advocates the use of block-diagonal transformations. This choice is supported by Eq.~\eqref{eq:bound_mx}: owing to the additive structure of the bound, the error contribution of the $i^{\text{th}}$ MX block depends on $||\matsym{A}^{-1}_i||_{\sigma}^2 M_i$, where $\matsym{A}_i^{-1}$ denotes the submatrix corresponding to block $i$. Operating on each block independently can therefore reduce the per-block error and, consequently, the total error. 

However, the block-diagonal approach implicitly assumes independence of the channels across blocks. As a result, even if the average per-block error decreases, the overall error may remain large. To address both sources of error, we argue for using full (non–block-diagonal) transformations that explicitly account for both the MX block structure and the channel distribution. One way to account for both the MX structure and the data distribution is to learn the transformation coupled with MX quantization on activations. From cardinality considerations, as  $\mathrm{O}\brackets{d} \subset \mathrm{GL}\brackets{d} \subset \mathrm{Aff}\brackets{d}$, namely the group of orthogonal matrices is a subset of the general linear group which itself is a subset of the general affine group, then it immediately follows that $\min_{\matsym{T}\in\mathrm{O}\brackets{d}}\mathcal{E}\brackets{\matsym{T}}
\geq \min_{\matsym{T}\in\mathrm{GL}\brackets{d}}\mathcal{E}\brackets{\matsym{T}}
\geq \min_{\matsym{T}\in\mathrm{Aff}\brackets{d}}\mathcal{E}\brackets{\matsym{T}}$. Hence, we suggest to define the set of admissible transformations as the general affine group. 

The benefit of adding the bias term, namely preferring $\mathrm{Aff}\brackets{d}$ over $\mathrm{GL}\brackets{d}$, is further corroborated 
when considering the sub-Gaussian case (Eq. \eqref{eq:error_sub_max}). If $\matsym{T}\brackets{\vectorsym{\mu}} = \vectorsym{0}$ before quantization, then $\max\limits_{j\in\mathcal{I}_i}|[\matsym{T}\brackets{\vectorsym{\mu}}]_j|$ is eliminated from $M_i$. In principle, it can be obtained using the linearity of both the transformation and the expectation by setting the bias term according to $\matsym{T}\brackets{\vectorsym{\mu}}=\matsym{A}\vectorsym{\mu}+\shiftvector=\vectorsym{0} \Rightarrow \shiftvector=-\matsym{A}\vectorsym{\mu}$. 

We note that while prior studies~\cite{egiazarian2025bridging, shao2025block} empirically investigated learning full transformations using the method proposed in \citet{Liu0FSCKCTB25}, which was not developed in the context of MX quantization, there are two important distinctions from our approach. First, the transformations in these studies are restricted to the set $\mathrm{O}(d)$, whereas we consider the broader class $\mathrm{Aff}(d)$. Second, the optimization procedure differs, as we optimize over free-form matrices and use a distillation loss instead of a cross-entropy loss. We discuss both points in detail in Section~\ref{sec:learn-affine}. %Empirically, we observe that these design choices can have a significant impact in Section~\ref{sec:experiments}, but first we analyze them numerically as well next.
These design choices can have a significant empirical impact, as demonstrated in Section~\ref{sec:experiments}, but first we analyze them numerically.

%(2) The set of proposed transformations was limited to $\mathrm{O}\brackets{d}$; and (3) it did not stem from the theoretical derivation presented here. %Furthermore, albeit reasonable, This decision is justified according to  this decision as it amounts to using $B=d$, namely having only one block.

% Beyond simply adding a shift, we argue that allowing any invertible linear transformation  leads to an improved MSE, since the group of invertible matrices (the general linear group $\mathrm{GL}\brackets{d}$) contains the orthogonal group (the group of orthogonal matrices $\mathrm{O}\brackets{d}$) as a subgroup. Consequently, it follows that
% %\small
% $$
% \mathcal{E}\brackets{\matsym{H}}
% \geq \min_{\matsym{T}\in\mathrm{O}\brackets{d}}\mathcal{E}\brackets{\matsym{T}}
% \geq \min_{\matsym{T}\in\mathrm{GL}\brackets{d}}\mathcal{E}\brackets{\matsym{T}}
% \geq \min_{\matsym{T}\in\mathrm{AGL}\brackets{d}}\mathcal{E}\brackets{\matsym{T}}\,,
% $$ 
% %\normalsize
% where $\matsym{H}$ is the Hadamart matrix of size $d$ and $\mathrm{AGL}$ is the general affine group.

\textbf{Numerical Analysis}. \edit{To further illustrate the advantage of learning affine transformations, we evaluate several common transformations in a numerical study in
% Figures~\ref{sfig:qe_vs_block_size}–\ref{sfig:mx_block_error_vs_index} 
Figure~\ref{fig:mx-block-analysis}
on features from Llama3.2-1B}. Figure~\ref{sfig:qe_vs_block_size} shows the transformation MSE as a function of the MX block size, 
Fig.~\ref{sfig:ppl_anaylsis} further presents the perplexity for various block sizes \edit{showing favorable perplexity to the learned affine transformations approach}, and
Fig. ~\ref{sfig:mx_block_error_vs_index} shows the block-wise quantization error defined as, $\mathcal{E}_{B}^{i}(\matsym{T})\triangleq \frac{1}{B}\sum_{j\in\mathcal{I}_i}([\x-\matsym{T}^{-1}(Q(\matsym{T}\brackets{\x}))]_j)^{2}$. \edit{In consistent with previous studies \cite{egiazarian2025bridging, shao2025block}, Figure~\ref{sfig:qe_vs_block_size} shows that for smaller block sizes block Hadamard transformation attains a lower error compared to the learned and non-learned full rotation transformations. Nevertheless, block Hadamard transformation under-performs learned affine transformation which achieve the smallest error on all block sizes.  
In addition, from Fig.~\ref{sfig:mx_block_error_vs_index}, compared to the vanilla transformation, full rotation-based transformations produce nearly uniform quantization error across blocks, thus increasing the overall error in most MX blocks. Block-Hadamard transformation mainly reduces the error in the dominant MX blocks (blocks with a high error) while increasing the error in the remaining MX blocks.
%From Fig. ~\ref{sfig:mx_block_error_vs_index}, compared to the vanilla transformation, rotation transformations yield a nearly uniform quantization error across all blocks, but increase the overall error for most blocks; The block-Hadamard transformation lowers the error of the dominant blocks (initial indices) usually causing a minor increase in error for the remaining blocks. 
And, learned affine transforms is the only methods that reduces the error across all MX blocks uniformly.} In summary, across all the results, we observe that the learned affine transformations consistently outperform other approaches. This underscores the advantage of learning general affine transformations that consider both the MX block structure and the distribution of feature maps.

\subsection{Learning General Affine Transformation}
\label{sec:learn-affine}
Our analysis in Section \ref{sec:theory} reveals important design choices for constructing transformations that take into account both MX data format and the features distribution. Most notably, defining the set of admissible functions to be invertible affine transformations. This is in contrast to previous studies that restrict the set of invertible transformations, usually to rotation matrices only. We now introduce \MN, a method that uses and learns \textit{general} invertible affine transformations for reducing activations outliers. We stress here two important aspects of \MN: (1) \edit{Its inference runtime is similar to that of common rotation-based methods thanks to transformation folding} 
% It does not incur additional computational costs at inference time compared to common rotation-based methods for models with bias terms; %It may require only negligible additional computational and memory costs at inference time compared to common rotation-based methods 
\footnote{See Appendix~\ref{sec:folding} for further details.}; 
(2) Although our approach was developed for MX data format, it can seamlessly be applied to other data formats.

We follow previous studies \cite{tseng2024quip, ashkboos2024quarot,Liu0FSCKCTB25} and define invertible transformations that can be absorbed into weight matrices, $\matsym{T}_1$ and $\matsym{T}_2$, and an online transformation $\matsym{T}_3$ (see Fig. \ref{fig:main_all_parts} for an illustration). Specifically, $\matsym{T}_1$ acts globally on the input activations of all transformer blocks, while $\matsym{T}_2$ and $\matsym{T}_3$ act locally per transformer block on the activations of the scaled dot-product layer in the attention layer and inside the feed-forward network, respectively. As in previous work \cite{Liu0FSCKCTB25, egiazarian2025bridging, shao2025block} we learn $\matsym{T}_1$ and $\matsym{T}_2$ only; %, and set $\matsym{T}_3$, the online transformation, to a random block-diagonal Hadamard matrix (without a bias term), where each block corresponds to an MX block.
However, there are two important distinctions from prior studies, aside from the set of admissible invertible functions. First, we apply the optimization on free-form matrices followed by deterministic transformations. As a result, common deep learning optimization algorithms can be used instead of constrained optimization on manifolds \cite{Liu0FSCKCTB25}. Second, we soften the requirement of functional equivalence between the unquantized network and the transformed network, allowing them to be approximately equivalent by optimizing a distillation loss. We discuss both points in detail next.

\textbf{Parametrization of the transformations}. We start by introducing our approach to constructing invertible transformations. We present here two parameterizations for $\matsym{A}$; the first is inspired by \cite{kingma2018glow} which uses LU decomposition, and the second uses QR decomposition. Starting with the former, $\matsym{A}$ is decomposed according to:
\begin{equation}
    \begin{aligned}
\matsym{A}=\matsym{P}\matsym{L}\brackets{\matsym{U}+\mathrm{diag}\brackets{\vectorsym{s}}},
\end{aligned}
\label{eq:gen_inv_rep}
\end{equation}
where $\mathrm{diag}(\cdot)$ means constructing a diagonal matrix with  $\vectorsym{s}$ on its diagonal, $\matsym{P} \in \mathbb{R}^{d \times d}$ is a fixed permutation matrix, and the learned parameters are: $\matsym{L} \in \mathbb{R}^{d \times d}$ a lower unitriangular matrix, $\matsym{U} \in \mathbb{R}^{d \times d}$ an upper triangular matrix with zeros on its diagonal, and the vector $\vectorsym{s} \in \mathbb{R}^{d}$.
%where $\matsym{P} \in \mathbb{R}^{d \times d}$ is a fixed permutation matrix,  $\matsym{L} \in \mathbb{R}^{d \times d}$ is a lower unitriangular matrix,  $\matsym{U} \in \mathbb{R}^{d \times d}$ is an upper triangular matrix with zeros on the diagonal, and $\vectorsym{s} \in \mathbb{R}^{d}$ is a vector. The optimization is applied on the upper off-diagonal values of $\matsym{L}$, the lower off-diagonal values of $\matsym{U}$, and $\vectorsym{s}$.
%The operation $\mathrm{diag}(\cdot)$ here means constructing a diagonal matrix with  $\vectorsym{s}$ on its diagonal. 
Although the LU parametrization gives good empirical results, it is limited in the class of invertible transformations it can represent. Hence, we suggest a parametrization based on QR decomposition which spans the entire $\mathrm{GL}$ group. Concretely, 
\begin{equation}
    \begin{aligned}
    \matsym{Q} = \exp
    \{\frac{1}{2}(\matsym{G} - \matsym{G}^\top)\}; ~~
\matsym{A}=\matsym{Q}\brackets{\matsym{R}+\mathrm{diag}\brackets{\vectorsym{s}}},
\end{aligned}
\label{eq:gen_inv_rep_qr}
\end{equation}
where $\exp(\cdot)$ is the matrix exponential and the learned parameters are $\matsym{G} \in \mathbb{R}^{d \times d}$,  $\matsym{R} \in \mathbb{R}^{d \times d}$ an upper triangular matrix with zeros on the diagonal, and the vector $\vectorsym{s} \in \mathbb{R}^{d}$. $\matsym{Q}$ is ensured to be orthogonal as the exponent operates on a skew-symmetric matrix, and $\matsym{R}+\mathrm{diag}\brackets{\vectorsym{s}}$ by design is an upper triangular matrix.

Using the parameterizations in Eq. \ref{eq:gen_inv_rep} \& \ref{eq:gen_inv_rep_qr}, ensuring that $\matsym{A}$ is invertible amounts to enforcing that $\det\brackets{\mathrm{diag}(\matsym{s})} > 0$. 
We control it by initializing $\matsym{A}$ to be a block-diagonal rotation matrix and using the following regularization,
\begin{equation}\label{eq:reg}
\begin{aligned}
    \mathcal{L}_{vol} = (\prod_{i=1}^{d}|\vectorsym{s}_i| - 1)^2.
\end{aligned}
\end{equation}
As $\abs{\det\brackets{\matsym{A}}} = \prod_{i}\abs{\vectorsym{s}_i}$, this regularization essentially attempts to keep the absolute determinant of the transformation close to one, pushing the transformation to be volume preserving. The goal of the regularization is two-fold: (1) mitigate numerical instabilities; and (2) allowing individual directions to expand or contract while preserving the overall volume of the activation region. In practice, instead of learning $\vectorsym{s}$ directly, we learn $\log~|\vectorsym{s}|$ and regularize ($\sum_{i=1}^{d}\log |\vectorsym{s}_i|)^2$ to be close to zero. This objective shares the same minima as that in Eq. \ref{eq:reg}, but is more stable.

\textbf{Learning the transformations}.
We now turn to describe the optimization procedure of \MN{}. 
% Denote the full precision LLM by $f(\cdot)$ having $N$ transformer layers,  our goal is to learn $N + 1$ sets of parameters of the form $\{\matsym{L},\matsym{U},\vectorsym{s},\vectorsym{v}\}$ for the LU parametrization and $\{\matsym{G},\matsym{R},\vectorsym{s},\vectorsym{v}\}$ for the QR parametrization. 
Let $f(\cdot)$ denote the full-precision LLM with $N$ transformer layers. Our goal is to learn $N + 1$ sets of parameters of the form $\{\matsym{L},\matsym{U},\vectorsym{s},\vectorsym{v}\}$ for the LU parametrization and $\{\matsym{G},\matsym{R},\vectorsym{s},\vectorsym{v}\}$ for the QR parametrization. 
One set for parameterizing $\matsym{T}_1$ and $N$ such sets, one for each transformer block, for parameterizing $\matsym{T}_2$. Denote by $\Omega$ the set of all learnable parameters and by $\tilde{f}_\Omega(\cdot)$ the corresponding network to $f(\cdot)$ with the activations transformed according to $\matsym{T}_1$ and $\matsym{T}_2$ and then quantized using MX. In the literature, orthogonal transformations are chosen to uphold the computational invariance theorem \cite{AshkboosCNHH24, ashkboos2024quarot}, as they do not modify the output of RMSNorm layers, namely $\frac{\vectorsym{x}}{||\vectorsym{x}||_2} = \matsym{A}^{-1} \frac{\matsym{A}\vectorsym{x}}{ ||\matsym{A}\vectorsym{x}||_2}$. In our case, since we allow general invertible matrices, it is no longer true as $\matsym{A}$ does not have to be orthogonal and we have a bias term $\vectorsym{v}$. Hence, we relax this restriction by initializing $\matsym{T}$ to be a rotation transformation and learn $\Omega$ using a distillation loss on a small set of examples \cite{hinton2015distilling, liu2024llm, huostquant},
\begin{equation}
\label{eq:kd_loss}
    \begin{aligned}
\mathcal{L}_{dist}=\mathrm{KL}\brackets{f\brackets{\x},\tilde{f}_{\Omega}\brackets{\x}}.
    \end{aligned}
\end{equation}
Here, $\mathrm{KL}$ refers to Kullback-Leibler divergence, $f\brackets{\x}$ is the teacher and $\tilde{f}_{\Omega}\brackets{\x}$ is the student. 
%The intuition here is that 
At the start of training, the initialization enforces consistency between the models. During optimization, the loss allows traversing the transformation space to reach a better solution while approximately maintaining consistency between the models. 
%Note that many common quantization methods, such as GPTQ, require a calibration set of unlabeled examples to estimate quantization parameters, typically drawn from large, readily available text corpora. We therefore 
Since common quantization methods (e.g., GPTQ) already rely on an unlabeled calibration set, we reuse a subset of these examples for the purpose of learning $\Omega$ as well and apply it to various tasks, including zero-shot tasks. 
To summarize, the final loss for learning $\Omega$ is:
%$\mathcal{L} = \mathcal{L}_{dist} + \lambda\cdot \mathcal{L}_{vol}$,
 \begin{equation}
     \begin{aligned}
         \mathcal{L} = \mathcal{L}_{dist} + \lambda\cdot \mathcal{L}_{vol},
     \end{aligned}
 \end{equation}
where $\lambda$ is a hyper-parameter controlling the effect of the regularization. In our experiments, we found that \MN{} was extremely robust to the selection of $\lambda$. In Appendix \ref{apx:loss_func}, we further compare the loss in Eq.~\ref{eq:kd_loss} to per-block MSE loss, and theoretically connect it to standard next token prediction cross-entropy loss \cite{Liu0FSCKCTB25}. We found that the KL loss is preferred in zero-shot settings over both alternatives, where the transformations are learned by solving a language modeling tasks and used for other tasks.
%As the distillation loss aims towards approximating the teacher it can better accommodate to generalization in zero-shot settings.
\edit{Importantly, this does not come at the expense of efficiency, as \MN{} training runtime is comparable to that of popular calibration-based approaches~\cite{Liu0FSCKCTB25}, requiring approximately two hours to train Qwen3-14B on 4 A100 GPUs. }

\begin{table*}[t]
\centering
\caption{Zero-shot average accuracy (Acc.) and average recovery (Rec.) in percentages on seven commonsense
and reading-comprehension benchmarks. All methods were evaluated under the same experimental setup. All methods except RTN and QuaRot-RTN were combined with GPTQ quantization scheme. FlatQuant$^\dagger$ - Using FlatQuant's matrix structure in our experimental setup (further details in Appendix~\ref{apx:ref_settings}).
Within each model, the best method is in \textbf{bold} and the second best is \underline{underlined}.}
\label{tab:benchmarks}
\resizebox{0.95\textwidth}{!}{%
\begin{tabular}{llcccccccccccccccc}
\toprule
\textbf{Format} & \textbf{Method} & \multicolumn{2}{c}{\textbf{Llama-3.2-1B}} & \multicolumn{2}{c}{\textbf{\begin{tabular}[c]{@{}c@{}}Llama-3.2-3B-\\ Instruct\end{tabular}}} & \multicolumn{2}{c}{\textbf{\begin{tabular}[c]{@{}c@{}}Llama-3.1-8B-\\ Instruct\end{tabular}}} & \multicolumn{2}{c}{\textbf{Qwen3-1.7B}} & \multicolumn{2}{c}{\textbf{Qwen3-4B}} & \multicolumn{2}{c}{\textbf{Qwen3-8B}}
& \multicolumn{2}{c}{\textbf{Qwen3-14B}} & \multicolumn{2}{c}{\textbf{Qwen3-32B}} \\
\cmidrule(lr){3-4}\cmidrule(lr){5-6}\cmidrule(lr){7-8}\cmidrule(lr){9-10}\cmidrule(lr){11-12}\cmidrule(lr){13-14}\cmidrule(lr){15-16}\cmidrule(lr){17-18}
& & Acc. & Rec. & Acc. & Rec. & Acc. & Rec. & Acc. & Rec.
& Acc. & Rec. & Acc. & Rec. & Acc. & Rec. & Acc. & Rec. \\
\midrule
& FP16 & 57.53 & 100 & 63.54 & 100 & 70.90 & 100 & 60.04 & 100 & 67.02 & 100 & 70.01 & 100 & 73.29 & 100 & 74.29 & 100 \\
\midrule
\multirow{10}{*}{MXFP4} 
& RTN & 48.78 & 84.58 & 59.97 & 94.06 & 66.77 & 93.59 & 51.39 & 85.30 & 59.76 & 88.93 & 64.72 & 92.21 & 69.80 & 94.88 & 70.91 & 95.66 \\
 & QuaRot-RTN & 46.39 & 80.00 & 54.25 & 84.13 & 62.42 & 87.40 & 46.72 & 77.02 & 56.00 & 83.84 & 63.33 & 89.95 & 68.30 & 92.85 & 69.89 & 94.01 \\
 & GPTQ & 49.22 & 85.29 & 59.42 & 92.92 & 67.44 & 94.73 & 50.99 & 84.71 & 62.11 & 92.54 & 65.04 & 92.74 & 70.60 & 96.12 & 71.40 & 96.95 \\
 & QuaRot & 51.74 & 89.64 & 58.72 & 91.74 & 66.66 & 93.32 & 51.81 & 86.48 & 61.61 & 91.47 & 65.46 & 93.15 & 71.33 & 97.18 & 72.32 & 97.39 \\
 & SpinQuant & 51.81 & 90.06 & 59.68 & 93.17 & 68.10 & 95.56 & 52.70 & 87.20 & 60.37 & 90.04 & 66.22 & 94.53 & 70.95 & 96.59 & 70.83 & 95.56 \\
 & OSTQuant & 52.57 & 91.22 & 59.36 & 92.64 & 67.02 & 94.04 & 55.48 & 92.35 & 62.42 & 93.10 & 67.28 & 96.05 & 71.74 & \underline{97.67} & 73.25 & \underline{98.82} \\
 & FlatQuant$^\dagger$ & 52.45 & 90.54 &  60.55 & 94.73 & 67.80 & 94.92 &54.85& 90.98 & 62.46  & 92.96 &  66.89 & 95.34 & 70.56 & 96.07 & 71.27 & 95.60 \\
 & MR-GPTQ & 53.52 & 93.07 & 60.65 & 95.02 & 68.93 & \textbf{96.71} & 52.23  & 86.59 & 62.21 & 92.70 & 67.71 & \underline{96.91} & 71.33 & 97.39 & 72.67 & 98.01 \\
 \cdashline{2-18}
\addlinespace[0.3em]
& \textbf{\MN{}-LU (Ours)} & 54.04 & \textbf{93.88} & 62.61 &\textbf{97.95} & 68.45 & \underline{{96.17}} & 57.42 & \textbf{95.62} & 64.74 & \textbf{96.64} & 67.94 & \textbf{97.07}  & 71.82 & \textbf{97.88} & 73.44 & \textbf{99.04} \\
& \textbf{\MN{}-QR (Ours)} & 53.79 & \underline{93.42} & 61.61 & \underline{96.59} & 68.29 & 95.81 & 56.68 & \underline{93.74} & 64.08 & \underline{95.64} & 67.48 & 96.36 & 71.36 & 97.17 & 72.19 & 97.01 \\

\midrule
\multirow{10}{*}{MXINT4} 
& RTN & 43.98 & 76.32 & 55.88 & 87.31 & 64.62 & 90.64 & 46.40 & 76.94 & 50.92 & 76.31 & 59.97 & 85.25 & 66.81 & 90.88 & 63.33 & 85.19 \\
 & QuaRot-RTN & 44.69 & 77.90 & 50.08 & 78.02 & 59.33 & 83.18 & 42.22 & 69.77 & 53.74 & 79.42 & 59.23 & 84.17 & 65.76 & 88.92 & 68.43 & 91.91 \\
 & GPTQ & 43.61 & 75.16 & 55.96 & 86.79 & 63.68 & 89.04 & 46.18 & 76.75 & 55.09 & 82.49 & 61.89 & 88.42 & 67.59 & 91.83 & 70.02 & 94.15 \\
 & QuaRot & 49.90 & 86.95 & 57.58 & 90.13 & 65.76 & 92.20 & 49.72 & 82.80 & 58.31 & 86.62 & 63.97 & 91.35 & 70.74 & 96.19 & 71.84 & 96.64 \\
 & SpinQuant & 49.54 & 85.81 & 60.02 & 93.83 & 66.65 & 93.59 & 49.53 & 81.55 & 56.29 & 84.14 & 63.68 & 90.43 & 70.11 & 95.39 & 69.97 & 94.15 \\
 & OSTQuant & 50.43 & 87.51 & 57.70 & 90.11 & 66.13 & 92.87 & 54.37 & 90.47 & 61.06 & 90.82 & 67.02 & 95.82 & 71.19 & \underline{96.97} & 73.06 & \underline{98.32} \\
 & FlatQuant$^\dagger$ & 50.34 & 87.25 &  58.31 & 90.77 & 66.04 & 92.35 & 53.12 & 88.56 & 61.50  & 91.50 &  65.80 & 93.76 & 69.14 & 94.36 & 70.66 & 95.07 \\
 & MR-GPTQ & 51.07 & 88.21 & 60.68 & 95.11 & 68.24 & \underline{95.89} & 51.77 & 86.27 & 61.01 & 91.02 & 66.68 & 95.01 & 70.98 & 96.53 & 72.30 & 97.51 \\
 \cdashline{2-18}
\addlinespace[0.3em]
& \textbf{\MN{}-LU (Ours)} & 52.24 & \underline{90.34} & 60.77 &\underline{95.15} & 68.44 & \textbf{96.20} & 55.43 &\underline{92.24} & 63.63 & \textbf{94.89} & 68.10 & \textbf{97.16} & 71.79 & \textbf{97.96} & 73.84 &\textbf{99.49} \\
& \textbf{\MN{}-QR (Ours)} & 52.39 & \textbf{91.17} & 61.17 &\textbf{95.94} & 68.15 & 95.54 & 55.87 &\textbf{92.96} & 63.70 &\underline{94.73} & 67.09 &\underline{96.00} & 70.39 &95.86 & 71.54 & 96.16 \\

\bottomrule
\end{tabular}%
}
\end{table*}

\textbf{Weight quantization}. After completing the optimization stage, the transformations $\matsym{T}_1$ and $\matsym{T}_2$ are folded into the linear operations of the model. A detailed description of the folding process is provided in Appendix~\ref{sec:folding}. A quantization procedure is then applied, such as GPTQ~\cite{FrantarAHA23}, to compensate for weight quantization error.

\section{Related Work}
%The computational cost of inference in large language models has increased in recent years due to the growth in model sizes. Consequently, there is a need for efficient methods that enable inference with reduced computational cost. 
A prominent approach to reduce the inference cost in LLMs is post-training quantization (PTQ) \cite{lin2016fixed, nagel2020up}. In recent years, several studies proposed methods for quantizing transformer-based LMs, \cite{shen2020q, yao2022zeroquant, lee2023flexround, FrantarAHA23}, to name a few. 
Two central aspects of quantization are the numerical representation format of model elements (e.g., FP4, MXFP4), and the mitigation of outliers to reduce errors prior to quantization. In this study, we focus mainly on MX \cite{mxfp} as it was shown to be highly effective for language models \cite{rouhani2023microscaling}.

%\textbf{Outlier reduction}. 
As outliers are prominent in LLMs, many approaches were presented to tackle them. Several attempts used grouping and mixed precision quantization \cite{bondarenko2021understanding, dettmers2022gpt3, dettmersspqr, ashkboos2024quik, kim2024squeezellm, zhao2024atom, ramachandran2025microscopiq}, which limits their applicability in cases of hard constrained low-bit precision for all quantities and requires costly operations. Other approaches build on invertible transformations, such as reparameterizing activations and weights through shifting and scaling operations \cite{wei2022outlier, wei2023outlier, shaoomniquant}, scaling the weights based on activations information \cite{xiao2023smoothquant, liu2023llm, lin2024awq}, and multiplying the weights per-layer with random orthogonal \cite{chee2023quip}, Hadamard \cite{tseng2024quip}, and rotation \cite{ashkboos2024quarot} matrices to spread the magnitude and sensitivity across coordinates. \MN{} generalizes these approaches by allowing \textit{general} affine transformation. 
Instead of using random rotations to mitigate outliers,  several studies proposed different alternatives to construct and learn rotation matrices \cite{lin2024duquant, Liu0FSCKCTB25, akhondzadeh2025kurtail, shaodartquant}. \citet{lin2024duquant} proposed using greedy search combined with permutation of (non-MX) blocks to obtain a balanced distribution of outliers. However, this method fails to account for the MX block structure.
%\citet{lin2024duquant} proposed using greedy search based on activation information to approximate and ideal rotation matrix, followed by permutation of blocks to obtain a balanced distribution of outliers across different blocks. \MN{} instead uses a \textit{shared} matrix across all blocks, allowing the model to balance the distribution via optimization.
\citet{Liu0FSCKCTB25} proposed learning rotation matrices on the Stiefel manifold \edit{, while \citet{shaodartquant} learns square matrices and retain only the rotation component using QR decomposition.} %As constrained optimization on the manifolds can be costly \cite{ablin2022fast} and challenging to combine with advances in deep learning optimization \cite{KingmaB14, LoshchilovH19}, 
\MN{} instead learns free-form matrices, allowing us to use the full machinery of deep learning optimization \cite{KingmaB14, LoshchilovH19}. 
% Recently both \cite{egiazarian2025bridging} and \cite{shao2025block} investigated the quantization performance under MX format. In both studies it is advocated to use block diagonal rotation matrices, instead of a full matrix operating on the 
The studies closest to ours are \cite{MaLZLX0W0J24} and \cite{SunLBBZLHY0Y0LY25}, which used invertible transformations, although not under MX quantization. In both \cite{MaLZLX0W0J24} and \cite{SunLBBZLHY0Y0LY25}, the transformations were restricted only to a subclass of all invertible matrices, namely strictly diagonally dominant matrices in the former and Kronecker product of lower rank matrices in the latter. Also, unlike \MN, both approaches learn only local matrices per transformer block online using a local loss. As a result, some transformations are limited to scaling transformation only \cite{MaLZLX0W0J24}, or are computationally more costly, as folding is not supported \cite{SunLBBZLHY0Y0LY25}.  In Section~\ref{sec:experiments}, we compare to the latter, showing a consistent benefit for \MN{} over it.
%\textbf{Outlier reduction with MX}. 
Several studies have explored outlier reduction under MX quantization. \citet{sharify2024post} adapt GPTQ \cite{FrantarAHA23} to the block-wise setting and combine it with SmoothQuant \cite{xiao2023smoothquant} and AWQ \cite{lin2024awq} for outlier reduction. \citet{lee2025amxfp4} identified that rotation of activations and microscaling lack synergy and proposed a new format using asymmetric scales shared within a quantization group. Likewise, \citet{lee2025mx} proposed to modify the MX format by re-purposing the exponent field to store more mantissa bits. 
Unlike these studies, we focus on outlier mitigation without modifying the PTQ method or data format. Lastly, recently \cite{egiazarian2025bridging} and \cite{shao2025block} study quantization under the MX format.  Both studies advocate for using block diagonal rotation matrices, instead of full rotation matrices. Our analysis shows that it indeed can be beneficial; however, different from these studies, we found that adopting and learning full affine transformations that account for the feature distribution and the MX block structure can mitigate outliers more effectively.

%indeed corroborate this finding, but different from these studies we show that it is mainly true for rotation matrices, while adopting affine invertible transformations can
%mitigate outliers effectively under MX quantization.
%Both studies advocate for a block diagonal rotation matrix, instead of a full matrix. 
%Our theoretical and numerical analysis shows that this is mainly true for transformations that do not take into account both feature distribution and the MX block structure, while learning an affine transformation can mitigate outliers effectively under MX quantization.
% indeed corroborate this finding, 
% but different from these studies we show that it is mainly true for rotation matrices, while adopting affine invertible transformations can mitigate outliers effectively under MX quantization.

% Our approach generalizes most of these previous attempts, we learn general invertible matrices and we model the dependencies between blocks 

% block rotation is not all you need

\section{Experiments}
\label{sec:experiments}

%Here, we present a set of numrical experiments to valide the effect of \MN{} and highlight its benefits. We begin with an evalution of Perplexity on various models shown and befint of \MN. Then  
We evaluate \MN{} performance on various language models and tasks to demonstrate its effectiveness.
We conducted experiments on Llama3.2~\cite{meta2024llama32} (1B/3B) Llama3.1~\cite{dubey2024llama} (8B), %(1.5B/3B) 
and Qwen3~\cite{yang2025qwen3} (1.7B/4B/8B) models. We focus primarily on small-scale models, as our interest lies on regimes of extreme quantization. 
%, with weights and activations quantized to MXFP4 and MXINT4 using \MN{}.
In the main text, we present results on $7$ zero-shot commonsense reasoning tasks. 
Appendix~\ref{apx:additional_results} presents additional results and ablation studies, supporting different design choices for \MN{}, including  perplexity on WikiText2 dataset~\cite{merity2016pointer} in Appendix~\ref{apx:perplexity}.

%an extensive ablation study and empirical analysis, supporting our motivation and algorithm design choices.

\subsection{Experimental Settings}
The optimization of \MN{} proceeds in two stages. 
First, invertible affine transformation are learned for $\matsym{T}_1$ and $\matsym{T}_2$ as described in Section~\ref{sec:learn-affine}. 
In the second stage, block-wise GPTQ is applied to the transformed weights similarly to previous studies (e.g., ~\cite{egiazarian2025bridging}). 
% The calibration set used for both stages consists of 256 randomly sampled text sequences of length 1,024 from the WikiText2 training set.
% We learn the transformations $\matsym{T_1}$ and $\matsym{T_2}$ using a standard gradient descent procedure with 1,000-2,500 optimization steps.
% The optimization minimizes a knowledge distillation loss between the outputs of the quantized model and those of the floating-point teacher model.
% The GPTQ implementation and setup are based on the code provided by~\cite{egiazarian2025bridging}.
% All experiments are conducted on a single Nvidia A100 GPU.
% The combined runtime of the two optimization stages for Llama3.2-1B is approximately 1.5 hours.
We compare \MN{} against several  methods spanning both MX quantization alone and MX combined with different transformation-based approaches for outlier suppression.
We first evaluate MX quantization without transformations. 
These experiments are denoted as \textbf{RTN} for plain round-to-nearest quantization, and as \textbf{GPTQ} when using GPTQ as the underlying quantization scheme.
% Next, we combine each approach with either tensor-wise or block-wise transformations to simulate prior methods, including %SpinQuant~\cite{Liu0FSCKCTB25}, 
We then combine each method with tensor-wise or block-wise transformations to emulate prior approaches, including
QuaRot~\cite{ashkboos2024quarot}, SpinQuant~\cite{Liu0FSCKCTB25}, OSTQuant~\cite{huostquant}, FlatQuant~\cite{SunLBBZLHY0Y0LY25} and MR-GPTQ~\cite{egiazarian2025bridging}, and evaluate them on the same benchmarks. \edit{In the main text, we compared against FlatQuant’s matrix structure under the same transformation pipeline used for all methods, including \MN{}. This enabled a direct evaluation of our proposed affine transformation constructions based on QR and LU decompositions against alternative approaches, such as Kronecker products of low-rank matrices. A comparison with the original FlatQuant formulation is provided in Appendix~\ref{apx:ref_settings}, where we observe better performance for \MN{} in that case as well.}
Transformations are learned using $256$ samples from the WikiText2 dataset \cite{merity2016pointer}.
All experiments were conducted using the codebase provided by~\cite{egiazarian2025bridging} with MX block size set to $32$.
We follow the implementation of~\cite{Liu0FSCKCTB25} and execute all methods under the same experimental setup to ensure fairness in the comparisons. Appendix~\ref{apx:impl-details} provides full implementation details. 

\begin{table}[t]
\centering
\caption{Transformation and granularity ablation on WikiText2.}
\label{tab:transform_ablation}
\resizebox{1.0\columnwidth}{!}{%
\begin{tabular}{llcc}
\toprule
\textbf{Transformation} & \textbf{Granularity} & \textbf{Llama3.2-1B} & \textbf{Qwen3-1.7B} \\
\midrule
None & -- & 18.80 & 21.29 \\
\midrule
\multirow{2}{*}{Random Hadamard} 
    & Block & 12.25 & 21.30 \\
    & Full  & 12.87 & 20.24 \\
\midrule
\multirow{2}{*}{Learned Orth. Matrix (Ours)} 
& Block & 13.89 & 18.20 \\
& Full & 13.35 & 17.70 \\
\midrule
\multirow{2}{*}{\edit{Learned Orth. Matrix + bias (Ours)}}
& Block & 14.43 & 24.04 \\
& Full & 11.99 & 17.28 \\
\midrule
\multirow{2}{*}{Learned Inv. Matrix (Ours)} 
    & Block & 12.16 & 21.11 \\
    & Full  & 11.75 & 17.50 \\
\midrule
\multirow{2}{*}{\textbf{\MN{}-LU (Ours)}} 
    & Block & 12.08 & 18.13 \\
    & \textbf{Full} & \textbf{11.64} & \textbf{15.11} \\
\bottomrule
\end{tabular}
}
\end{table}

\subsection{Zero-shot Reasoning Tasks Evaluation}
We evaluate zero-shot accuracy on a suite of commonsense and reading-comprehension benchmarks: ARC (Easy and Challenge) \cite{clark2018think}, HellaSwag \cite{zellers2019hellaswag}, WinoGrande \cite{sakaguchi2021winogrande}, PIQA \cite{bisk2020piqa}, BoolQ \cite{clark2019boolq}, and OpenBookQA (OBQA) \cite{mihaylov2018can}. 
All results are obtained using the LM Evaluation Harness \cite{eval-harness} with default task settings. 
Performance is summarized in Table~\ref{tab:benchmarks}, where we report, for each quantized model, the mean accuracy and mean recovery relative to the floating-point baseline averaged across tasks. Full per-benchmark results are provided in Appendix~\ref{apx:full-exp-results}. From the table, consistent with prior work \cite{egiazarian2025bridging, shao2025block}, combining RTN with QuaRot leads to performance degradation. In addition, while learning the transformations as in SpinQuant and FlatQuant or using block-wise transformations improve quantization performance, these methods remain limited and do not fully exploit the potential of transformation-based outlier suppression under MX quantization. In contrast, \MN{} achieves the best overall performance, indicating that explicitly accounting for both the MX quantization structure and the activation distribution via invertible affine transformations is more effective. Appendix~\ref{apx:perplexity} further reports perplexity results on WikiText2, showing that \MN{} outperforms baseline methods on that benchmark as well.  \edit{And Appendix \ref{apx:nvfp} compares \MN{} to baseline approaches under NVFP quantization. Here, as well, \MN{} outperforms baseline methods in most cases, at times almost reaching FP16 performance.}

Furthermore, examining both LU and QR variants of \MN{}, we observe a small advantage for the LU over QR parameterization. We attribute to the optimization process of the QR, as it requires backpropagation through a matrix exponential. \edit{We view this as a promising direction for future research, as in principle the QR variant should obtain better performance throughout. In Appendix~\ref{apx:condition-number} we show that both LU and QR parametrization of $\matsym{T}_1$ are well-conditioned, indicating that they can be inverted stably.}

\begin{table}[t]
\centering
\caption{Qwen3-8B full-precision model perplexity after fusing the learned $\matsym{T}_1$ and $\matsym{T}_2$, at multiple training steps.}
\label{tab:comp-equivalence}
\resizebox{1.0\columnwidth}{!}{%
\begin{tabular}{lcccccc}
\toprule
\textbf{Training steps} & \textbf{FP16} & \textbf{0} & \textbf{1} & \textbf{100} & \textbf{500} & \textbf{1000} \\
\midrule
\textbf{PPL ($\downarrow$)} & 10.246 & 10.243 & 10.242 & 10.240 & 10.248 & 10.236 \\
\bottomrule
\end{tabular}
}
\end{table}
\subsection{Ablation \& Analysis }
\textbf{Transformation Type Analysis}.
%We study different activation transformation methods for reducing the effect of outliers during activation quantization. 
% To study the effect of different components in \MN{}, we evaluate the perplexity on WikiText2 of MXFP4-quantized models under different types of transformations for both $\matsym{T}_1$ and $\matsym{T}_2$. Two types as baselines: (1) None - without transformations; and (2) Random Hadamard matrices. And three transformations which are variants of our approach: (3) Orthogonal transformations - learning only $\matsym{Q}$ in the QR parametrization; (4) Invertible transformations - learning $\matsym{A}$ only using LU decomposition; and (5) The LU variant of \MN{}.
To study the effect of different components in \MN{}, we evaluate the perplexity on WikiText2 of MXFP4-quantized models under different types of transformations for both $\matsym{T}_1$ and $\matsym{T}_2$. 
We consider two baselines: (1) None - without transformations; and (2) Random Hadamard matrices. We also evaluate three variants of our approach: (3) Orthogonal transformations \edit{(with and without learned bias)}, learning only $\matsym{Q}$ in the QR parametrization; (4) Invertible transformations, learning $\matsym{A}$ only using LU decomposition; And (5) the LU variant of \MN{}.
%no transformation, Hadamard rotation, learned orthogonal transformation, a learned invertible transformation, and a \MN{}'s learned affine transformation. 
%All three learned transformations are different parametrization of our suggested learnable transformation method. 
Each method is applied either at the tensor level (Full) or in a block-wise manner (Block). Results on WikiText2 perplexity are shown in Table~\ref{tab:transform_ablation}. 
From the table, learning invertible transformations is preferred over both learnable orthogonal transformations and random rotations but is subpar to learning affine transformations. 
\MN{} applied at the tensor level consistently achieves the best performance across models.
These findings indicate that learning both a general invertible linear transform and a bias term at full granularity is more effective than fixed rotations or block-wise alternatives for mitigating activation outliers. %These findings support the motivation and theory proposed in this work. 
Moreover, we observe that block-wise Hadamard rotation outperforms full Hadamard rotation on Llama3.2-1B, while this trend does not hold for Qwen3-1.7B. This emphasizes that both the quantization structure and the feature distribution should be taken into account. 

\begin{figure}[t]
    \centering
    \begin{subfigure}[b]{0.45\linewidth}
        \includesvg[width=1.0\linewidth]{images/t1_ort_dist_combined.svg}
        \caption{Orthogonal Distance}
        \label{sfig:spec_dist}
    \end{subfigure}%
    \begin{subfigure}[b]{0.45\linewidth}
        \includesvg[width=1.0\linewidth]{images/t1_off_block_norm_combined.svg}
        \caption{Off-Diagonal Norm}
        \label{sfig:off_diag_norm}
    \end{subfigure}
    \caption{ \MN{} learns a transformation that spreads the energy across the tensor. %\textbf{Left}: distance of  from orthogonal matrix; \textbf{Right}: $\matsym{A_1}$'s off block-diagonal spectral norm.
    }
    \label{fig:analysis}
\end{figure}

% \todo[inline,color=red]{I don't like this part about the experiment setting. So I rewrote this. 
% \ia{are you sure? what is the difference?}. 
% \og{All the RMS shenanigans}
% Old text:"One possible explanation is that our experimental setup follows SpinQuant~\cite{Liu0FSCKCTB25}, which differs from the setups used in prior work that report stronger gains from block-wise rotations (see Appendix~\ref{} for details) We leave a more detailed analysis of this behavior to future work" 
% }
% \todo[inline]{reference to appendix or to where the RMS is discussed in the method?}

\textbf{Learned Transformation Analysis}.
%Most existing approaches for mitigating activation outliers rely solely on rotation matrices.
To better understand the behavior of our learned transformation, we report two metrics in Fig.~\ref{fig:analysis}. Fig.~\ref{sfig:spec_dist} shows the orthogonality deviation of the learned transform matrix $\matsym{A_1}$ by measuring the distance from an orthogonal transformation using the spectral norm. %with spectral norm, given by $||\matsym{A_1}^T\matsym{A_1}-\matsym{I}||_{\sigma}$.
%This quantity measures the extent to which the transformation deviates from being strictly orthogonal. 
Fig.~\ref{sfig:off_diag_norm} measures the spectral norm of the off-block-diagonal components of the learned transformation, achieved by assigning zeros to the block-diagonal elements. An increase in this metric suggests that the learned matrix departs from purely block-diagonal behavior, enabling interactions across blocks.
From Fig.~\ref{sfig:spec_dist}, the orthogonality deviation increases rapidly during the early stages of training from the initialized orthogonal matrix. 
This indicates that the optimization quickly moves away from the orthogonal constraint. 
After this initial phase, the deviation stabilizes, suggesting convergence to a non-orthogonal but more effective solution for quantization.
Furthermore, Fig.~\ref{sfig:off_diag_norm} shows that the optimization is indeed adaptable to the MX quantization structure, moving beyond the redistribution of outliers intra-block and allowing the transfer of energy between blocks. %that improves quantization effectiveness.
Additional analysis of the learned transformation deviation from the orthogonal initial point and the effect of each transformation can be found in Appendix~\ref{apx:init} and Appendix~\ref{apx:single_transform} respectively.

\edit{
\textbf{Computational Invariance}. A distinguishing feature of \MN{} relative to prior work is that it relaxes the requirement for strict computational invariance \cite{AshkboosCNHH24}. Here, we show that \MN{}, using the proposed distillation loss, keeps the transformed quantized model approximately consistent with the float model by showing
that the learned transformations neither alter the network behavior nor lead to overfitting on the calibration dataset.
To assess the method’s sensitivity, we first note that zero-shot accuracy consistently improves over existing baselines and, in some cases, remains close to full-precision performance, as shown in Table~\ref{tab:benchmarks}.
Since calibration uses only the WikiText2 training set, these results suggest that harmful overfitting does not occur in practice.
In addition, Table~\ref{tab:comp-equivalence} reports the FP16 performance when applying the learned transformations under MXFP4.
Specifically, we evaluate the Qwen3-8B FP16 model after fusing the learned $\matsym{T}_1$ and $\matsym{T}_2$, without any quantization, at multiple training steps.
Compared to the FP16 baseline, the transformed models show negligible changes in WikiText2 test perplexity, indicating that the transformations preserve network behavior. Appendix~\ref{apx:extended-ablation} further presents additional ablation studies showing that \MN{} performance remains robust to the choice of calibration set and its number of samples. 

\begin{figure}[!t]
    \centering
     \includesvg[width=0.9\linewidth]{images/qwen32_comp_throughput.svg}
    \caption{Throughput of different approaches.}
    \label{fig:throughput}
\end{figure}
\textbf{Computational Cost}.
Here we benchmark \MN{} inference against competing approaches on an NVIDIA RTX 6000 PRO using optimized kernels from \cite{egiazarian2025bridging} and vLLM \cite{kwon2023efficient}. In particular, we compare \MN{} to: (a) BF16, (b) MR-GPTQ, and (c) Learned Inv transformation (\MN without the bias term). Figure~\ref{fig:throughput} reports throughput in tokens per second on a range of batch sizes. We observe comparable throughput across all quantized methods, suggesting that, in practice, \MN{} incurs at most negligible inference overhead.

% to better understand the effect of bias

%Additional ablations showing \MN{}'s stability to varying the number and subset of calibration samples selected randomly from the WikiText2 training set are presented in Appendix~\ref{apx:cal_set_size}. These ablations also show the robustness of \MN{} to the chosen calibration set.
%To further examine sensitivity to the calibration data, Appendix~\ref{apx:extended-ablation} presents additional ablations in which both the number and the randomly selected subset of calibration samples from the WikiText2 training set are varied. These results further confirm that \MN{} is stable and robust to the choice of calibration set.
}

\section{Conclusion}
In this work, we analyze the quantization error of the MX format and demonstrate the importance of incorporating both structural knowledge and feature distributions. Building on this analysis, we introduce \MN{}, a method that enables affine transformations to be applied without runtime and memory overhead, fully comparable to leading approaches in this area. %We then apply \MN{} in the micro-scaling quantization setting to learn a transformation that effectively models both the feature distribution and the MX structure. 
In our experiments, we showcase the effectiveness of \MN{} across seven benchmarks, as well as perplexity evaluation on WikiText2, achieving up to a 4\% improvement in average accuracy recovery.
Overall, our results indicate that learning affine transformations for MX formats in LLMs can yield substantial gains. While we did not evaluate \MN{} on non–MX data types in this work, its performance in such settings remains an interesting direction for future study. One limitation of \MN{}, shared with other optimization-based transformation methods, is the need for training; nevertheless, empirical results clearly demonstrate its benefits.

%Nonetheless, several questions remain unresolved: (i) Can \MN{} also improve performance for non–micro-scaling data types by learning an effective affine transformation? (ii) Is an affine transformation likewise useful in other domains such as audio, vision, etc., and is \MN{} applicable in those settings? \ia{add shortcomings of the method}
\newpage
% \newpage
\section*{Impact Statement}
This paper presents work whose goal is to advance the field of Machine
Learning. There are many potential societal consequences of our work, none
which we feel must be specifically highlighted here.

\bibliographystyle{icml2026}
\bibliography{main}

%%%%%%%%%%%%%%%%%%%%%%%%%%%%%%%%%%%%%%%%%%%%%%%%%%%%%%%%%%%%%%%%%%%%%%%%%%%%%%%
%%%%%%%%%%%%%%%%%%%%%%%%%%%%%%%%%%%%%%%%%%%%%%%%%%%%%%%%%%%%%%%%%%%%%%%%%%%%%%%
% APPENDIX
%%%%%%%%%%%%%%%%%%%%%%%%%%%%%%%%%%%%%%%%%%%%%%%%%%%%%%%%%%%%%%%%%%%%%%%%%%%%%%%
%%%%%%%%%%%%%%%%%%%%%%%%%%%%%%%%%%%%%%%%%%%%%%%%%%%%%%%%%%%%%%%%%%%%%%%%%%%%%%%
\begingroup

\newpage
\appendix
\onecolumn

% The appendix is organized in the following way:

%\input{files/related_work}
% \input{files/appendix/analysis_toy_example}
\apxsection{MX Quantization Error Upper Bound (Theorem~\ref{thm:q_error_base})} \label{proof:thm:q_error_base}
We split the proof of Theorem~\ref{thm:q_error_base} into two parts. In the first part, we derive an upper bound without making any assumptions about the distribution of the feature map (i.e., the random vector $\x$). In the second part, we compute the expectation of the maximum absolute value of a \edit{$\psi_{\alpha}$ distribution} vector and assume that $\x$ follows a \edit{$\psi_{\alpha}$ distribution} to gain additional insight. We then combine these two parts to complete the proof. 

The main assumption required for the proof is that the density of $\x$ is bounded. More formally, 
\begin{assumption} \label{app:assumption}
Assume that $\x$ is a continuous random vector and that there exists a finite constant $C<\infty$ such that its probability density function $f(\x)$ satisfies
$0 \leq f(\x) < C \quad \forall\, \x \in \mathbb{R}^d$.
\end{assumption}

\begin{lemma}\label{lemma:q_error_base}
    Assume that $\matsym{T}$ is an affine transformation and $Q$ is MX quantization according to \eqref{eq:q_mx}. Then,
    \begin{align} \label{eq:bound_mx_lemma}
\mathcal{E}\brackets{\matsym{T}}&\lesssim{\tfrac{\norm{\matsym{A}^{-1}}_{\sigma}^2}{N_B}\sum_{i=1}^{N_B}\expectation{\brackets{\max\limits_{j\in\mathcal{I}_i}\abs{\squareb{\matsym{T}\brackets{\x}}_j}}^2}{}}.
    \end{align}
\end{lemma}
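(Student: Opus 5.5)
We write $\y=\matsym{T}\brackets{\x}$ and pull the inverse out of the error. Since $\matsym{T}^{-1}\brackets{\y}=\matsym{A}^{-1}\y-\matsym{A}^{-1}\shiftvector$, we have
\begin{equation*}
\x-\matsym{T}^{-1}\brackets{Q\brackets{\y}}=\matsym{T}^{-1}\brackets{\y}-\matsym{T}^{-1}\brackets{Q\brackets{\y}}=\matsym{A}^{-1}\brackets{\y-Q\brackets{\y}},
\end{equation*}
because the translation cancels. Submultiplicativity of the spectral norm then gives
\begin{equation*}
\mathcal{E}\brackets{\matsym{T}}\leq \tfrac{\norm{\matsym{A}^{-1}}_{\sigma}^2}{d}\,\expectation{\norm{\y-Q\brackets{\y}}_2^2}{}.
\end{equation*}
This reduces the lemma to bounding the MX quantization error of the transformed vector $\y$ alone, with no dependence on $\matsym{A}$ beyond the prefactor.

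Next we split $\norm{\y-Q\brackets{\y}}_2^2$ over the MX blocks as $\sum_{i}\sum_{j\in\mathcal{I}_i}(\y_j-s_iQ_e(\y_j/s_i))^2$. Inside block $i$, every $\y_j/s_i$ lies in the representable range of the element format, since $|\y_j|\leq\max_{k\in\mathcal{I}_i}|\y_k|<2^{\lfloor\log_2\max_k|\y_k|\rfloor+1}$, which gives $|\y_j/s_i|<2^{r_{\max}+1}$. Saturation at the top binade therefore adds at most a constant factor. The element quantizer $Q_e$ then has a uniformly bounded absolute error $|u-Q_e(u)|\leq c_e$ on this range. The constant $c_e$ is the largest gap between adjacent grid points, for both INT and FP formats, and it depends only on the element format. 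Hence
\begin{equation*}
(\y_j-Q(\y)_j)^2\leq c_e^2 s_i^2\leq c_e^2\,2^{-2r_{\max}}\brackets{\max_{k\in\mathcal{I}_i}|\y_k|}^2,
\end{equation*}
using $2^{\lfloor\log_2 m\rfloor}\leq m$.

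Summing over the $B$ entries of each block gives $\norm{\y-Q(\y)}_2^2\leq c_e^2 2^{-2r_{\max}}B\sum_i(\max_{j\in\mathcal{I}_i}|\y_j|)^2$. Taking expectations and using $B/d=1/N_B$ yields Eq.~\eqref{eq:bound_mx_lemma}, with the format-dependent constant absorbed into $\lesssim$.

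The step needing the most care is the edge behaviour of the scale. When $\max_j|\y_j|=0$ the logarithm is undefined. Assumption~\ref{app:assumption} (a bounded density) together with the invertibility of $\matsym{A}$ makes $\y$ continuous, so this event has probability zero and we may ignore it. Finiteness of the expectations is not needed for the inequality itself: if the right-hand side is infinite the bound holds trivially. The other point to handle explicitly is the saturation check in the top binade, namely confirming that $2^{r_{\max}+1}$ exceeds the format maximum by at most a constant factor (for FP4, the maximum is $6$ and $2^{r_{\max}+1}=8$). I expect this to be the main obstacle, because it requires committing to the precise convention in Eq.~\eqref{eq:q_mx}. Either way it only changes the constant.
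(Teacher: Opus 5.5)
Your proof is correct. Its outer structure matches the paper's proof: the paper also cancels the translation to get $\matsym{A}^{-1}(\y-Q(\y))$, pulls out $\|\matsym{A}^{-1}\|_\sigma^2$, splits the error over MX blocks, and finishes with $s_i\le 2^{-r_{\max}}\max_{j\in\mathcal{I}_i}|\y_j|$. You differ in the per-element step.

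The paper conditions on the scale $s_i$ and changes variables to $z=\y_j/s_i$. It writes the conditional error as $\sum_k\int_{l_k}^{u_k}(z-q_k)^2 f_{z|s_i}(z|s_i)\,dz$ over the quantization cells, simply assuming $z$ lies in the range of $Q_e$. It then uses the bounded-density assumption to replace $f_{z|s_i}$ by a constant, giving $C_Q=\sum_k\int_{l_k}^{u_k}(z-q_k)^2\,dz$. You instead verify the range explicitly, $|\y_j/s_i|<2^{r_{\max}+1}$, and apply a deterministic worst-case error bound on it. Your inequality therefore holds pointwise before any expectation is taken.

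This makes your route more rigorous and more general. You need no density assumption beyond discarding the null event of an all-zero block, which a convention could also handle. The paper's step that bounds the \emph{conditional} density of $z$ given $s_i$ does not follow directly from a bounded density of $\x$, because conditioning on the discrete scale and rescaling by $s_i$ can inflate it. The paper's integral form is closer in spirit to an average-case, uniform-noise-style error model. However, once the density is bounded crudely it yields just another format-dependent constant, so under $\lesssim$ your worst-case constant $c_e^2 2^{-2r_{\max}}$ loses nothing.

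The saturation issue you flag as the main obstacle is harmless. On $|u|<2^{r_{\max}+1}$, any quantizer into the finite grid satisfies $|u-Q_e(u)|\le 2^{r_{\max}+1}+\max_q|q|$, a constant that depends only on the format.
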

\begin{proof}
We begin with the definition of quantization error in \eqref{eq:q_error}
\begin{align}\label{eq:matrix_projection}
    \mathcal{E}\brackets{\matsym{T}}&\triangleq\frac{1}{d}\expectation{\norm{\x-\matsym{T}^{-1}\brackets{Q\brackets{\matsym{T}\brackets{\x}}} }_2^2}{}=\frac{1}{d}\expectation{\norm{\matsym{A}^{-1}\brackets{\matsym{T}\brackets{\x}-{Q\brackets{\matsym{T}\brackets{\x}}} }}_2^2}{}  \leq \tfrac{\norm{\matsym{A}^{-1}}_{\sigma}^2}{d}\expectation{\norm{\matsym{T}\brackets{\x}-Q\brackets{\matsym{T}\brackets{\x}} }_2^2}{}\nonumber\\
    &=\tfrac{\norm{\matsym{A}^{-1}}_{\sigma}^2}{N_B\cdot B}\expectation{\norm{\y-Q\brackets{\y} }_2^2}{}=\tfrac{\norm{\matsym{A}^{-1}}_{\sigma}^2}{N_B\cdot B}\sum_{i=1}^{N_B}\expectation{\norm{\y^{(i)}-Q\brackets{\y^{(i)}} }_2^2}{},
\end{align}
where $\y=\matsym{T}\brackets{\x}$ is introduced for notational convenience and $\y^{(i)}=\begin{bmatrix}
    \y_{i\cdot B} & \hdots & \y_{(i+1)\cdot B-1}
\end{bmatrix}$ represents the vectorized features of the $i^{th}$ MX-Block. In Eq. \eqref{eq:q_error}, the first equality follows from the invertibility of $\matsym{A}$ and the definition of the transformation:
\begin{align*}
    \norm{\x-\matsym{T}^{-1}\brackets{Q\brackets{\matsym{T}\brackets{\x}}}}_2^2=\norm{\x-\matsym{A}^{-1}Q\brackets{\matsym{T}\brackets{\x}}+\matsym{A}^{-1}\vectorsym{v}}_2^2&= \norm{\matsym{A}^{-1}\brackets{\matsym{A}\x+\vectorsym{v}-Q\brackets{\matsym{T}\brackets{\x}}}}_2^2\nonumber\\   &=\norm{\matsym{A}^{-1}\brackets{\matsym{T}\brackets{\x}-Q\brackets{\matsym{T}\brackets{\x}}}}_2^2.
\end{align*}
Next, we examine the error in each block and MX Quantization definition in \eqref{eq:q_mx}:
\begin{align}\label{eq:split_block_error}
    &\expectation{s^2_i\norm{\tfrac{\y^{(i)}}{s_i}-Q_{e}\brackets{\tfrac{\y^{(i)}}{s_i}} }_2^2}{}=\expectation{s^2_i\sum_{j=1}^{B}\brackets{\tfrac{\y^{(i)}_j}{s_i}-Q_{e}\brackets{\tfrac{\y^{(i)}_j}{s_i}} }^2}{} = \expectation{s^2_i\sum_{j=1}^{B}\expectation{\brackets{\tfrac{\y^{(i)}_j}{s_i}-Q_{e}\brackets{\tfrac{\y^{(i)}_j}{s_i}} }^2 \Big| s}{}}{}.
\end{align}
In Eq. \eqref{eq:split_block_error}, the first step splits the error by block, and the second applies the law of total expectation. We now derive the error for each element within a block.
\begin{align}\label{eq:transformation_of_rv}
    &\expectation{\brackets{\tfrac{\y^{(i)}_j}{s_i}-Q_{e}\brackets{\tfrac{\y^{(i)}_j}{s_i}} }^2 \Big| s}{}=\int_{\y^{(i)}_j}\brackets{\tfrac{\y^{(i)}_j}{s_i}-Q_{e}\brackets{\tfrac{\y^{(i)}_j}{s_i}}}^2f_{\y^{(i)}_j | s_i}\brackets{\y^{(i)}_j | s_i}d\y^{(i)}_j.
\end{align}
Apply a change in variable $z=\tfrac{\y^{(i)}_j}{s_i}$, $d\y^{(i)}_j=s_i dz$ and combining with the definition of MX in \eqref{eq:q_mx} we assume that the domain of $z$ is in the range of $Q_e$, results in 
\begin{align}\label{eq:local_q_error_bound}
    &s_i\int_{z}\brackets{z-Q_{e}\brackets{z}}^2f_{\y^{(i)}_j|s_i}\brackets{z\cdot s_i|s_i}dz=s_i\sum_{k=1}^{\abs{\mathcal{Q}}}\int_{l_k}^{u_k}\brackets{z-q_k}^2f_{\y^{(i)}_j|s_i}\brackets{z\cdot s_i|s_i}dz\nonumber\\
    &=\sum_{k=1}^{\abs{\mathcal{Q}}}\int_{l_k}^{u_k}\brackets{z-q_k}^2f_{z|s_i}\brackets{z|s_i}dz\leq  C_Q \triangleq \sum_{k=1}^{\abs{\mathcal{Q}}}\int_{l_k}^{u_k}\brackets{z-q_k}^2 dz .
\end{align}
% \todo[inline]{Change to upper bound}
Where, $q_k$ is the quantized value in the interval between $l_k$ to $u_k$ and in the last step in Eq. \eqref{eq:local_q_error_bound} we upper bound  the probability over the interval between $u_k$ to $l_k$ as we assume that the probability density can be bounded by some constant.
% \begin{align}\label{eq:local_q_error_bound}
%    s_i\sum_{k=1}^{\abs{\mathcal{Q}}}\int_{l_k}^{u_k}\brackets{z-q_k}^2f_{z|s_i}\brackets{z\cdot s_i|s_i}dz \leq s_i C_Q \triangleq \sum_{k=1}^{\abs{\mathcal{Q}}}\int_{l_k}^{u_k}\brackets{z-q_k}^2 dz
% \end{align}
Combining Eq.\eqref{eq:local_q_error_bound}, \eqref{eq:transformation_of_rv}, \eqref{eq:split_block_error} and \eqref{eq:matrix_projection} results in:
\begin{align}\label{eq:last_step}
    &\mathcal{E}\brackets{\matsym{T}}\leq \norm{\matsym{A}^{-1}}_{\sigma}^2C_Q \tfrac{1}{N_B}\sum_{i=1}^{N_B}\expectation{s_i^2}{}=\norm{\matsym{A}^{-1}}_{\sigma}^2C_Q \tfrac{1}{N_B}\sum_{i=1}^{N_B}\expectation{2^{2\floor{\log_2\brackets{\max\limits_{j\in\mathcal{I}_i}\abs{\y_j}}}-2r_{max}}}{}\nonumber\\
    &\leq\norm{\matsym{A}^{-1}}_{\sigma}^2C_Q \tfrac{2^{-2r_{max}}}{N_B}\sum_{i=1}^{N_B}\expectation{\brackets{\max\limits_{j\in\mathcal{I}_i}\abs{\y_j}}^2}{}.
\end{align}
In \eqref{eq:last_step}, the first equality follows from the definition of MX quantization in \eqref{eq:q_mx}, and the second inequality follows from the definition of the floor function.
\end{proof}
Lemma~\ref{lemma:q_error_base} establishes the first part of Theorem~\ref{thm:q_error_base}, namely inequality~\eqref{eq:bound_mx}. 
To prove the second part of Theorem~\ref{thm:q_error_base}, we now present and prove an upper bound on the expected maximum of a $\psi_{\alpha}$ distributions.   

The $\psi_\alpha$ distributions form a class of probability distributions whose tails decay at least as fast as $\exp(-c \lvert x\rvert^{\alpha})$. We begin by stating a tail bound for $\psi_\alpha$ distributions \cite{vershynin2018high}:

\begin{theorem}[$\psi_\alpha$ distributions tail]
Let $\randomvec{Z}$ be distributed according to the $\psi_\alpha$ distribution and suppose that $\norm{\randomvec{Z}}_{\psi_\alpha} \leq  K$. Then:
\begin{equation}
    \probP\brackets{\abs{\randomvec{Z}}>t}\leq 2\exp\brackets{-\brackets{\frac{t}{K}}^{\alpha}},
\end{equation}
where $\norm{\randomvec{Z}}_{\psi_\alpha}\triangleq \inf\set{c>0:\expectation{\exp{\brackets{\tfrac{\abs{\randomvec{Z}}^\alpha}{c^\alpha}}}}{}\leq 2}$ is the Orlicz norm of the random variable $\randomvec{Z}$
\end{theorem}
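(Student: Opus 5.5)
The statement just given is the standard $\psi_\alpha$ tail bound: if $\norm{\randomvec{Z}}_{\psi_\alpha}\leq K$, then $\probP(\abs{\randomvec{Z}}>t)\leq 2\exp(-(t/K)^\alpha)$. The plan is a Chernoff-type argument using Markov's inequality on an exponential moment, with the Orlicz norm definition supplying the moment bound.

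\textbf{Step 1: exponential moment at any scale above the norm.} Fix any $c$ with $\norm{\randomvec{Z}}_{\psi_\alpha}<c$. By the definition of the infimum there is $c'\le c$ with $\mathbb{E}[\exp(\abs{\randomvec{Z}}^\alpha/c'^\alpha)]\le 2$. The map $s\mapsto \exp(\abs{\randomvec{Z}}^\alpha/s^\alpha)$ is pointwise nonincreasing in $s>0$, so monotonicity of expectation gives $\mathbb{E}[\exp(\abs{\randomvec{Z}}^\alpha/c^\alpha)]\le 2$.

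\textbf{Step 2: Markov's inequality.} The map $u\mapsto \exp(u^\alpha/c^\alpha)$ is increasing on $[0,\infty)$ for $\alpha>0$. Hence the event $\{\abs{\randomvec{Z}}>t\}$ is contained in $\{\exp(\abs{\randomvec{Z}}^\alpha/c^\alpha)\ge \exp(t^\alpha/c^\alpha)\}$. Applying Markov's inequality to this nonnegative variable,
\[
\probP\brackets{\abs{\randomvec{Z}}>t}\le \exp\brackets{-\brackets{\tfrac{t}{c}}^{\alpha}}\,\mathbb{E}\squareb{\exp\brackets{\tfrac{\abs{\randomvec{Z}}^\alpha}{c^\alpha}}}\le 2\exp\brackets{-\brackets{\tfrac{t}{c}}^{\alpha}}.
\]

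\textbf{Step 3: pass to the limit.} The bound holds for every $c>\norm{\randomvec{Z}}_{\psi_\alpha}$. Let $c\downarrow \norm{\randomvec{Z}}_{\psi_\alpha}$; the right-hand side is continuous in $c$, so the bound holds with $c=\norm{\randomvec{Z}}_{\psi_\alpha}$. The function $c\mapsto 2\exp(-(t/c)^\alpha)$ is nondecreasing, so we may then replace $\norm{\randomvec{Z}}_{\psi_\alpha}$ by any $K$ with $\norm{\randomvec{Z}}_{\psi_\alpha}\le K$.

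\textbf{Degenerate case.} If $\norm{\randomvec{Z}}_{\psi_\alpha}=0$, we cannot set $c=0$, so the limit in Step 3 must be justified directly. Letting $c\downarrow 0$ in Step 2 sends the right-hand side to zero for every $t>0$. Thus $\randomvec{Z}=0$ almost surely, and the claim is trivial.

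\textbf{Main obstacle.} The only delicate point is that the infimum defining the Orlicz norm need not be attained. That is why the argument runs through $c>\norm{\randomvec{Z}}_{\psi_\alpha}$ first and then takes a limit. One could alternatively show attainment via monotone convergence, but the limiting argument above avoids this. Everything else is a one-line Chernoff bound, the same argument as Proposition~2.5.2 in \cite{vershynin2018high} adapted to general $\alpha$.
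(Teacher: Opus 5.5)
Your proof is correct. The paper gives no proof of its own and simply quotes the bound from \cite{vershynin2018high}; your Markov-inequality argument on $\exp(|Z|^\alpha/c^\alpha)$, whose expectation is at most $2$, is the standard step in that reference that turns this exponential-moment bound into the tail bound, and your handling of the possibly non-attained infimum and of the degenerate case $\|Z\|_{\psi_\alpha}=0$ is sound.
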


\begin{lemma}\label{lamma:max_expection}
    Let $\y\in\mathbb{R}^B$ be a random vector with mean $\vectorsym{\mu}$,   $\norm{X}_{\psi_\alpha} $ is  Orlicz norm of random variable $X$. 
    Assume that $\y$ is a \edit{$\psi_{\alpha}$ } random vector. Then,
    \begin{equation}
\expectation{\brackets{\max\limits_{j}\abs{{\y}_j}}^p}{}^{1/p}\leq \max\limits_{j}\abs{\squareb{\vectorsym{\mu}}_j}+ {\max\limits_{i}\norm{\squareb{\y-\vectorsym{\mu}}_i}_{\psi_\alpha}}\brackets{\brackets{\log\brackets{2B}}^{p/\alpha} + \frac{2 B\cdot p}{\alpha} \cdot \Gamma\brackets{p/\alpha,\log\brackets{2B}}}^{1/p}
% #\brackets{3d\cdot p\brackets{p/2}^{p/2}+\brackets{\log\brackets{2d}}^{p/2}}^{1/p}
    \end{equation}
\end{lemma}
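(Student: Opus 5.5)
The plan is to separate the deterministic mean part from the random fluctuation. After that, the fluctuation part is handled by the layer-cake formula, using the $\psi_\alpha$ tail bound stated just above together with a union bound over the $B$ coordinates.

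\textbf{Step 1 (centering).} For every realization we have $\abs{\y_j}\le\abs{\squareb{\vectorsym{\mu}}_j}+\abs{\squareb{\y-\vectorsym{\mu}}_j}$. Taking the maximum over $j$ gives
\begin{equation*}
\max_j\abs{\y_j}\le \max_j\abs{\squareb{\vectorsym{\mu}}_j}+Z,\qquad Z\triangleq\max_j\abs{\squareb{\y-\vectorsym{\mu}}_j}.
\end{equation*}
Since the first term is a constant, Minkowski's inequality in $L^p$ (for $p\ge1$) yields $\expectation{(\max_j\abs{\y_j})^p}{}^{1/p}\le\max_j\abs{\squareb{\vectorsym{\mu}}_j}+\expectation{Z^p}{}^{1/p}$. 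It therefore suffices to show
\begin{equation*}
\expectation{Z^p}{}\le K^p\brackets{(\log(2B))^{p/\alpha}+\tfrac{2Bp}{\alpha}\Gamma\brackets{p/\alpha,\log(2B)}},\qquad K\triangleq\max_i\norm{\squareb{\y-\vectorsym{\mu}}_i}_{\psi_\alpha}.
\end{equation*}

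\textbf{Step 2 (tail of the maximum).} Each centered coordinate has Orlicz norm at most $K$. The $\psi_\alpha$ tail theorem stated above then gives $\probP(\abs{\squareb{\y-\vectorsym{\mu}}_j}>t)\le 2\exp(-(t/K)^\alpha)$ for each $j$. A union bound over the $B$ coordinates gives
\begin{equation*}
\probP(Z>t)\le\min\{1,\,2B\exp(-(t/K)^\alpha)\}.
\end{equation*}

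\textbf{Step 3 (layer-cake with a split).} We write $\expectation{Z^p}{}=\int_0^\infty p t^{p-1}\probP(Z>t)\,dt$ and split the integral at $t_0=K(\log(2B))^{1/\alpha}$. This is the point where the union bound becomes nontrivial.
\begin{itemize}[leftmargin=*]
\item On $[0,t_0]$ we bound the probability by $1$, which contributes $t_0^p=K^p(\log(2B))^{p/\alpha}$.
\item On $[t_0,\infty)$ we use $2B e^{-(t/K)^\alpha}$ and substitute $u=(t/K)^\alpha$, so that $t=Ku^{1/\alpha}$ and $dt=\tfrac{K}{\alpha}u^{1/\alpha-1}du$. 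The integral becomes $\tfrac{2Bp}{\alpha}K^p\int_{\log(2B)}^\infty u^{p/\alpha-1}e^{-u}\,du=\tfrac{2Bp}{\alpha}K^p\,\Gamma(p/\alpha,\log(2B))$.
\end{itemize}
Adding the two pieces and taking the $p$-th root gives exactly the claimed bound.

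No step is genuinely hard. The point needing most care is Step 1: Minkowski requires $p\ge1$, which covers the case $p=2$ used in Theorem~\ref{thm:q_error_base}, where this lemma is applied with $h_\alpha(B)$ equal to the bracketed factor. We also need the tail theorem to be applied to the centered coordinates, so that the Orlicz norms of $\y-\vectorsym{\mu}$, not of $\y$, appear. The choice of split point $t_0$ is what produces the $\log(2B)$ terms; any other choice would still give a valid bound, just not in the stated form.
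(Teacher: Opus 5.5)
Your proposal is correct and follows the paper's proof essentially step for step. The shared steps are: centering plus Minkowski to isolate $\max_j|\mu_j|$; the $\psi_\alpha$ tail bound with a union bound over the $B$ coordinates; and the layer-cake integral split at $t_0=K(\log(2B))^{1/\alpha}$, where the substitution $u=(t/K)^\alpha$ yields the incomplete Gamma term. Your remark that Minkowski needs $p\ge 1$, which the $p=2$ application satisfies, makes explicit a condition the paper uses without stating it.
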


\begin{proof}
We start by centering $\y$ and placing an upper bound on its absolute maximum.
    \begin{align}
    \max\limits_{j}\abs{{\y}_j}=\max\limits_{j}\abs{\squareb{\y-\vectorsym{\mu}+\vectorsym{\mu}}_j}\leq\max\limits_{j}\abs{\squareb{\y-\vectorsym{\mu}}_j}+\max\limits_{j}\abs{{\vectorsym{\mu}}_j}
    \end{align}

Next, by definition of the $L_p$ norm of a random variable, we obtain:
    \begin{align}\label{eq:step_two_lemma_max}
\expectation{\brackets{\max\limits_{j}\abs{\y_j}}^p}{}^{1/p}=\norm{\max\limits_{j}\abs{\y_j}}_{L_p}=\norm{\max\limits_{j}\abs{\squareb{\y-\vectorsym{\mu}}_j}+\max\limits_{j}\abs{\vectorsym{\mu}_j}}_{L_p} \leq \max\limits_{j}\abs{\vectorsym{\mu}_j}+\norm{\max\limits_{j}\abs{\squareb{\y-\vectorsym{\mu}}_j}}_{L_p}.
    \end{align}
    In Eq. \eqref{eq:step_two_lemma_max}, the first step follows from the definition of the random-variable $L_p$ norm (see \cite{vershynin2018high}, Chapter 1.3); the second uses the triangle inequality, and the last uses that $\vectorsym{\mu}$ is deterministic (i.e., not a random variable). For notational convenience, let us denote $z = \max\limits_j \abs{\squareb{\y - \vectorsym{\mu}}_j}$. We then only need to evaluate the second term in \eqref{eq:step_two_lemma_max}:
    \begin{align}
        \norm{\max\limits_{j}\abs{\squareb{\y-\vectorsym{\mu}}_j}}_{L_p}=\expectation{z^p}{}^{1/p}.
    \end{align}
    % To compute $\expectation{z^p}{}$ we first, split the integration into two parts as follows:
    %     \begin{align}\label{eq:z_expection}
    %     \expectation{z^p}{}=\int_{0}^{\infty} pt^{p-1}\probP\brackets{z\geq t}dt&=\int_{0}^{t_0}pt^{p-1}\probP\brackets{z\geq t}dt+\int_{t_0}^{\infty}pt^{p-1}\probP\brackets{z\geq t}dt\nonumber\\
    %     &\leq \int_{0}^{t_0}pt^{p-1}dt+\int_{t_0}^{\infty}pt^{p-1}\probP\brackets{z\geq t}dt
    % \end{align}
    % By selection $t_0=K\sqrt{\frac{1}{c}\log\brackets{2B}}$ we have:
    % \begin{equation}\label{eq:int1}
    %     \int_{0}^{t_0}pt^{p-1}dt=t_0^p=K^p\brackets{\frac{1}{c}\log\brackets{2B}}^{p/2}.
    % \end{equation}
    % As a second step we compute the tail bound of $z$. 
    % \begin{align}\label{eq:union_bound}
    %     \probP\brackets{z\geq t}=\probP\brackets{\bigcup_i \abs{\squareb{\y - \vectorsym{\mu}_y}_i}\geq t}\leq \sum_i \probP\brackets{ \abs{\squareb{\y - \vectorsym{\mu}_y}_i}\geq t}.
    % \end{align}
    % In Eq. \eqref{eq:union_bound}, the first equality is valid by definition, the second follows from the union bound.
Next, to derive and compute $\expectation{z^p}{}$, we assume that each component $\squareb{\y-\vectorsym{\mu}}_i$ follows a $\psi_{\alpha}$ distribution, and that the largest Orlicz norm among all these variables is bounded by
\[
K \triangleq \max_i \norm{\squareb{\y-\vectorsym{\mu}}_i}_{\psi_\alpha}.
\]
Using this and the tail bound for $\psi_{\alpha}$ variables, their tail probabilities decay as $\exp\brackets{-c\frac{t^\alpha}{K^\alpha}}$ \cite{vershynin2018high}. Observe that the $\psi_{\alpha}$ class generalizes sub-Gaussian and sub-exponential distributions: setting $\alpha=2$ yields the sub-Gaussian case, while $\alpha=1$ recovers the sub-exponential case.
To compute $\expectation{z^p}{}$ we first, split the integration into two parts as follows:
        \begin{align}\label{eq:z_expection}
        \expectation{z^p}{}=\int_{0}^{\infty} pt^{p-1}\probP\brackets{z\geq t}dt&=\int_{0}^{t_0}pt^{p-1}\probP\brackets{z\geq t}dt+\int_{t_0}^{\infty}pt^{p-1}\probP\brackets{z\geq t}dt\nonumber\\
        &\leq \int_{0}^{t_0}pt^{p-1}dt+\int_{t_0}^{\infty}pt^{p-1}\probP\brackets{z\geq t}dt
    \end{align}
    By selection $t_0=K\brackets{\log\brackets{2B}}^{1/\alpha}$ we have:
    \begin{equation}\label{eq:int1}
        \int_{0}^{t_0}pt^{p-1}dt=t_0^p=K^p\brackets{\log\brackets{2B}}^{p/\alpha}.
    \end{equation}
    As a second step we compute the tail bound of $z$. 
    \begin{align}\label{eq:union_bound}
        \probP\brackets{z\geq t}=\probP\brackets{\bigcup_i \abs{\squareb{\y - \vectorsym{\mu}_y}_i}\geq t}\leq \sum_i \probP\brackets{ \abs{\squareb{\y - \vectorsym{\mu}_y}_i}\geq t}= 2\cdot B\cdot \exp\brackets{-\tfrac{t^\alpha}{K^\alpha}},
    \end{align}
    In Eq. \eqref{eq:union_bound}, the first equality is valid by definition, the second follows from the union bound, and in the last step we use the fact that $\squareb{\y - \vectorsym{\mu}_y}_i$ is $\psi_{\alpha}$ distribution.  
% we have:
% \begin{equation}
%      \probP\brackets{z\geq t}\leq 2\cdot B\cdot \exp\brackets{-c\tfrac{t^\alpha}{K^\alpha}},
% \end{equation}
\begin{align}\label{eq:int2}
        \int_{t_0}^{\infty}pt^{p-1}\probP\brackets{z\geq t}dt&\leq 2 B p \int_{t_0}^{\infty}t^{p-1}\exp\brackets{-\tfrac{t^\alpha}{K^\alpha}}dt\nonumber\\
        &= \frac{2 B p}{\alpha} K^p \int_{\log\brackets{2B}}^{\infty}u^{p/\alpha-1} \exp{\brackets{-u}}du\nonumber \\
        % &=B\cdot p \brackets{\tfrac{K}{\sqrt{c}}}^p\int_{\log\brackets{2B}}^{\infty}u^{p/2-1}\exp\brackets{-u}du \nonumber\\
        &= \frac{2 B p}{\alpha} K^p \Gamma\brackets{p/ \alpha,\log\brackets{2B}} 
    \end{align}
In Eq. \eqref{eq:int2}, the first inequality follows from Eq. \eqref{eq:union_bound}, the second step applies the change of variables $u =  t^\alpha / K^\alpha$, which implies $t = Ku^{1/\alpha}$ and $dt = \frac{K}{\alpha }u^{1/\alpha-1}du$.  The third step uses the definition of the incomplete Gamma function. Combining Eq. \eqref{eq:int1} and Eq. \eqref{eq:int2} into Eq. \eqref{eq:z_expection} results in:
    \begin{align}\label{eq:bound_z_norm_exp}
        \norm{\max\limits_{j}\abs{\squareb{\y-\vectorsym{\mu}}_j}}_{L_p}=\expectation{z^p}{}^{1/p}\leq 
        K\brackets{\brackets{\log\brackets{2B}}^{p/\alpha} + \frac{2 B\cdot p}{\alpha} \cdot \Gamma\brackets{p/\alpha,\log\brackets{2B}}}^{1/p} .
    \end{align}
    Finally, combining Eq. \eqref{eq:bound_z_norm_exp} with Eq. \eqref{eq:step_two_lemma_max} yields the desired results.     
\end{proof}
Defining $\y=\matsym{T}\brackets{\x}$, it follows that $\vectorsym{\mu}_{\y}=\expectation{\y}{}=\matsym{T}\brackets{\expectation{\x}{}}=\matsym{T}\brackets{\vectorsym{\mu}}$. Applying Lemma~\ref{lamma:max_expection} with $p=2$, we obtain
\begin{align*}
M_i&\triangleq\expectation{\brackets{\max\limits_{j\in\mathcal{I}_i}\abs{\squareb{\matsym{T}\brackets{\x}}_j}}^2}{} \\
&\leq \brackets{\matsym{T}_{max}(\vectorsym{\mu})+ \max\limits_{j\in\mathcal{I}_i}\norm{\squareb{\hat{\matsym{T}} \brackets{\x}}_j }_{\psi_\alpha}\brackets{\frac{4 B}{\alpha} \cdot \Gamma\brackets{2/\alpha,\log\brackets{2B}}
    +\brackets{\log\brackets{2B}}^{2/\alpha}}^{1/2}}^2,
\end{align*}
where $\matsym{T}_{max}(\x) \triangleq \max\limits_{j\in\mathcal{I}_i}\abs{\squareb{\matsym{T}\brackets{\x}}_j}$, and $\squareb{\hat{\matsym{T}} \brackets{\x}}_j \triangleq \squareb{\matsym{T}\brackets{\x}-\matsym{T}\brackets{\vectorsym{\mu}}}_j$.
% Note that incomplete Gamma of $\Gamma(1,x)=\exp\brackets{-x}$ which in our case results in $\Gamma\brackets{1,\log(2B)}=\frac{1}{2B}$. 

% \begin{proof}
%     \begin{align}
% \expectation{\brackets{\max\limits_{j\in\mathcal{I}_i}\abs{\squareb{\matsym{T}\brackets{\x}}_j}}^3}{}
%     \end{align}
% \end{proof}
 \apxsection{Affine Transformation inside MHA}
Here, we show that affine transformation can be incorporated into MHA in the general case, and then we detail the corresponding folding operation. Concretely, let $\matsym{X}\in\mathbb{R}^{N_T\times d}$ denote the input to the MHA, $\matsym{P}_h\in\mathbb{R}^{N_T\times N_T}$ the attention matrix of the $h^{\text{th}}$ head, $\matsym{W}_V^{(h)}$ the value projection matrix of the $h^{\text{th}}$ head,  $\matsym{W}_o$ the output projection matrix abd $N_T$ is the number of tokens. Since the transformation is applied independently to each token, we define the transformation over multiple tokens as follows:
\begin{align}
\matsym{T}_2\brackets{\matsym{X}}=\matsym{X}\matsym{A}_2+\matsym{V}_2
\end{align}
and it inverse
\begin{align}
\matsym{T}_2^{-1}\brackets{\matsym{X}}=\matsym{X}\matsym{A}_2^{-1}-\matsym{V}_2\matsym{A}_2^{-1}
\end{align}
where $\matsym{V}_2\in\mathbb{R}^{N_T\times d}$ denotes the shift matrix, in which the shift vector is replicated across all tokens. Now, we apply it to the multi-head attention block:

\begin{equation} \matsym{Y}=\mathrm{Cat}\brackets{\begin{bmatrix}
    \matsym{P}_1\matsym{T}_2\brackets{\matsym{X}\matsym{W}_V^{(1)}}, &\hdots &, \matsym{P}_H\matsym{T}_2\brackets{\matsym{X}\matsym{W}_V^{(H)}}
\end{bmatrix}}
\end{equation}
Next, we apply the inverse transformation to $\matsym{Y}$ and then perform the output projection, yielding $\matsym{G}^{-1}\brackets{\matsym{Y}}\matsym{W}_O$. Because this transformation operates independently on each token, we can, without loss of generality, analyze a single head:
\begin{align} \label{eq:softmax_t2} \matsym{T}^{-1}_2\brackets{\matsym{P}_1\matsym{T}_2\brackets{\matsym{X}\matsym{W}_V^{(1)}}}&=\matsym{P}_1\matsym{T}_2\brackets{\matsym{X}\matsym{W}_V^{(1)}}\matsym{A}_2^{-1}-\matsym{V}_2\matsym{A}_2^{-1}=\matsym{P}_1\matsym{X}\matsym{W}_V^{(1)}\matsym{A}_2\matsym{A}_2^{-1}+\matsym{P}_1\matsym{V}_2\matsym{A}_2^{-1}-\matsym{V}_2\matsym{A}_2^{-1}\nonumber\\
&=\matsym{P}_1\matsym{X}\matsym{W}_V^{(1)}
\end{align}
In the final step of \eqref{eq:softmax_t2}, we exploit that $\matsym{P}$ arises from a softmax and that $\matsym{V}_{2}$ is replicated across tokens, which implies: $\matsym{P}_1\matsym{V}_2\matsym{A}_2^{-1}=\matsym{V}_2\matsym{A}_2^{-1}$. In this way, we can add an affine transformation to the MHA block.

\apxsection{Folding \& Transformation Structure } \label{sec:folding}
We adopt the same transformation locations as in \cite{Liu0FSCKCTB25} and adapt them to the micro-scaling setup of \cite{egiazarian2025bridging}. First, we fold the RMS-Norm parameter into the following linear layer as in \cite{ashkboos2024quarot}. Concretely, we apply three transformations at the same places as in \cite{egiazarian2025bridging}: (i) $\matsym{T}_1$, a transformation applied to the residual path of the model; (ii) $\matsym{T}_2$, a transformation applied to the value vectors in multi-head attention; and (iii) $\matsym{T}_3$ (shown in in Fig. ~\ref{fig:ffn}), an \emph{online} block Hadamard transformation as in \cite{egiazarian2025bridging} \edit{, with its inverse offline transformation which is folded into the weights}. 
Now, we describe the folding of the transformations $\matsym{T}_1$ and $\matsym{T}_2$ such that those transformations will add zero cost to the inference runtime and model size as they are folded to the exiting operation. As shown in \MN{}, we derived a way to learn an affine transformation in an LLM so that it can be absorbed into the adjacent linear operation. This eliminates the need to explicitly handle the RMSNorm (e.g. using orthogonal transformation), since it is already taken into account during the optimization process.
\subsection{Folding of $\matsym{T_1}$}
To fold $\matsym{T}_1$ we need to take into account the shift $\vectorsym{v}_1$ and the matrix $\matsym{A}_1$ due to the residual structure of LLM (see Fig.~\ref{fig:main_all_parts}), we only add the shift vector $\vectorsym{v}_1$ once  after the embedding layer and remove every block using the inverse transformation. The matrix $\matsym{A}_1$ is applied to the output of each block, and we define this transformation as $\widetilde{\matsym{T}}_1\brackets{\x}=\matsym{A}_1\x$. Next, we describe how to fold the $\matsym{T_1}$:
\begin{figure}[t]  
    \begin{subfigure}[b]{1.0\textwidth}
        \centering
        \begin{tikzpicture}[font=\small, line cap=round, line join=round]

% ---------- Styles ----------
\tikzset{
  arr/.style={-Latex, line width=1.2pt, draw=gray!70},
  thinarr/.style={-Latex, line width=0.9pt, draw=gray!70},
  flow/.style={draw=plin, line width=2.0pt},
  actflow/.style={draw=actc, line width=2.0pt},
  origflow/.style={draw=origc, line width=2.0pt},
  box/.style={draw=bline, line width=1.3pt, rounded corners=3pt, minimum height=8mm},
  w/.style={box, fill=wbox, inner xsep=6pt},
  r/.style={box, fill=rbox, inner xsep=4pt, minimum width=8mm},
  mod/.style={box, fill=mgray, inner xsep=6pt},
  dashedgroup/.style={draw=gray!35, dashed, rounded corners=10pt, line width=1.2pt, inner sep=4pt},
  dashbox/.style={draw=bline, dashed, rounded corners=4pt, line width=1.2pt, inner sep=4pt},
  dotbox/.style={draw=bline, dotted, rounded corners=4pt, line width=1.2pt, inner sep=4pt},
  pill/.style={box, fill=white, inner xsep=6pt},
  circ/.style={circle, draw=bline, line width=1.2pt, minimum size=5.5mm, inner sep=0pt},
  qbar/.style={draw=bline, line width=1.0pt, fill=gray!15, minimum width=2mm, minimum height=14mm},
  qbaract/.style={draw=bline, line width=1.0pt, fill=gray!30, minimum width=2mm, minimum height=14mm},
  legendbar/.style={draw=bline, line width=1.0pt, fill=gray!15, minimum width=2.5mm, minimum height=10mm},
  legendbaract/.style={draw=bline, line width=1.0pt, fill=gray!30, minimum width=2.5mm, minimum height=10mm},
}

% =========================================================
% (a) Top: high-level transformer block
% =========================================================
\node[w] (emb) {$W_e$};
\node[r, right=4mm of emb] (r1a) {$T_1$};
\node[mod, right=6mm of r1a,text width=0.7cm] (rms) {RMS Norm};
\node[r, right=3mm of rms] (r1m1) {$T_1^{-1}$};
\node[mod, right=3mm of r1m1,text width=1.5cm] (mha) {Multi-Head Attention};
\coordinate (resCtrl) at ($(mha.north)+(0,10mm)$);

\node[r, right=3mm of mha] (r1m2) {$\widetilde{T}_1$};

\def\resH{3mm}   % horizontal offset from boxes
\def\resV{17mm}  % vertical curvature height
% Residual around MHA
\draw[flow]
  ($(rms.west)+(-\resH,0)$)
    .. controls +(0,\resV) and +(0,\resV) ..
  ($(r1m2.east)+(\resH,0)$);

\node[mod, right=6mm of r1m2,text width=0.7cm] (rms2) {RMS Norm};
\node[r, right=3mm of rms2] (r1f1) {$T_1^{-1}$};
\node[mod, right=3mm of r1f1,text width=1.8cm] (ffn) {Feed-forward\\Network};

\node[r, right=3mm of ffn] (r1f2) {$\widetilde{T}_1$};

\node[r, right=6mm of r1f2] (r1out) {$T_1^{-1}$};
\node[w, right=3mm of r1out] (outw) {$W_{head}$};

% dashed capsules around MHA / FFN
\node[dashedgroup, fit=(rms) (r1m1)(mha)(r1m2)] (capA) {};
\node[dashedgroup, fit=(rms2) (r1f1)(ffn)(r1f2)] (capF) {};

% connectors (purple path with small "dotted continuation")
\draw[flow] (emb.east) -- (r1a.west);
\draw[flow] (r1a.east) -- ++(0,0) |- (rms.west);
\draw[flow] (r1m2.east) -- ++(7mm,0) |- (rms2.west);
\draw[flow] (r1f2.east) -- ++(8mm,0) (r1f2.east) -- (r1out.west);

\draw[flow] (rms.east) -- (r1m1.west);
\draw[flow] (rms2.east) -- (r1f1.west);
\draw[origflow] (r1m1.east) -- (mha.west);
\draw[origflow] (mha.east) -- (r1m2.west);

\draw[flow] (r1out.east) -- (outw.west);

\draw[flow]
  ($(rms2.west)+(-\resH,0)$)
    .. controls +(0,\resV) and +(0,\resV) ..
  ($(r1f2.east)+(\resH,0)$);
% =========================================================
% Legend (bottom)
% =========================================================
\coordinate (leg) at ($(emb.south west)+(0,-10mm)$);

% mergeable rotations R1,R2
\node[r, anchor=west] (lr1) at (leg) {$T_1$};
\node[r, below=2mm of lr1] (lr2) {$T_2$};
\node[right=1mm of lr1,text width=1.2cm] (in_t) {Invertible\\ Transformation};
\node[right=1mm of lr2,text width=1.2cm] (in_t2) {Invertible\\ Transformation};

% online rotations R3,R4 (dotted)
\node[dotbox, right=56mm of in_t] (lr3) {$T_3$};
% % \node[dotbox, below=1.6mm of lr3, anchor=west] (lr4) {$R_4$};
\node[anchor=west,text width=1.2cm] at ($(lr3.east)+(3mm,-0.0mm)$) {Online rotations};

% % KV cache quantization bar
% \node[legendbar, anchor=west] (lkv) at ($(lr3.east)+(28mm,0mm)$) {};
% \node[anchor=west,text width=2.4cm] at ($(lkv.east)+(3mm,0mm)$) {K-V cache\\quantization};

Activation quantization bar
\node[legendbaract, anchor=west] (lact) at ($(in_t.east)+(90mm,0mm)$) {};
\node[anchor=west,text width=2.4cm] at ($(lact.east)+(3mm,0mm)$) {Activation\\Quantization};

% % merged weights box
% \node[r, anchor=west] (lri) at ($(lact.east)+(34mm,0mm)$) {$T_1^{-1}$};
% \node[w, right=2mm of lri] (lwv) {$W_v$};
% \node[r, right=2mm of lwv] (lrj) {$T_2$};
% \node[dashbox, fit=(lri)(lwv)(lrj)] (lwbox) {};
% \node[anchor=west] at ($(lwbox.east)+(3mm,0mm)$) {Merge and\\quantize weights};

% line legend (rotated / activation / original)
\draw[flow]  ($(leg.east)+(34mm,0mm)$) -- ++(12mm,0);
\node[anchor=west] at ($(leg.east)+(48mm,0mm)$) {Transformed };

\draw[actflow] ($(leg.east)+(34mm,-4mm)$) -- ++(12mm,0);
\node[anchor=west] at ($(leg.east)+(48mm,-4mm)$) {Quantized activation};

\draw[origflow] ($(leg.east)+(34mm,-8mm)$) -- ++(12mm,0);
\node[anchor=west] at ($(leg.east)+(48mm,-8mm)$) {Original  Activation};

\label{fig:transformations}
\end{tikzpicture}
\caption{Location of $\matsym{T}_1$ transformation of LLM that enables the folding of general invertible transformation. 
}
\label{fig:main}
    \end{subfigure}
    \begin{subfigure}[b]{1.0\textwidth}
        \centering
        % requires: \usetikzlibrary{positioning,fit,calc,patterns}
\usetikzlibrary{patterns}
% \begin{figure}
%     \centering

\begin{tikzpicture}[font=\small, line cap=round, line join=round]
\tikzset{
  flow/.style={draw=plin, line width=2.0pt},
  actflow/.style={draw=actc, line width=2.0pt},
  origflow/.style={draw=origc, line width=2.0pt},
  box/.style={draw=bline, line width=1.3pt, rounded corners=3pt, minimum height=8mm},
  w/.style={box, fill=wbox, inner xsep=6pt},
  r/.style={box, fill=rbox, inner xsep=4pt, minimum width=8mm},
  pill/.style={box, fill=white, inner xsep=6pt},
  dashbox/.style={draw=bline, dashed, rounded corners=4pt, line width=1.2pt, inner sep=4pt},
  dotbox/.style={draw=bline, dotted, rounded corners=4pt, line width=1.2pt, inner sep=4pt},
  circ/.style={circle, draw=bline, line width=1.2pt, minimum size=6mm, inner sep=0pt},
  dashedgroup/.style={draw=gray!35, dashed, rounded corners=12pt, line width=1.2pt, inner sep=6pt},
  stripebar/.style={
    draw=bline, line width=1.0pt,
    minimum width=2.8mm, minimum height=14mm,
    pattern=north east lines, pattern color=gray!55, fill=gray!15
  },
}

% -------------------------
% Left KV-cache quant bar
% -------------------------
\node[stripebar] (kvbar) {};
\draw[origflow] ($(kvbar.west)+(-8mm,0)$) -- (kvbar.west);

% -------------------------
% Q row: (R1^{-1} Wq)  RoPE  R3
% -------------------------
\node[r, anchor=west] (rq) at ($(kvbar.east)+(8mm,16mm)$) {$\matsym{T}_1^{-1}$};
\node[w, right=2mm of rq] (wq) {$W_q$};
\node[dashbox, fit=(rq)(wq)] (dq) {};
\node[pill, right=6mm of wq] (ropeq) {RoPE};
% \node[dotbox, right=4mm of ropeq] (r3q) {$R_3$};

% -------------------------
% K row: (R1^{-1} Wk)  RoPE  R3
% -------------------------
\node[r, anchor=west] (rk) at ($(kvbar.east)+(8mm,0mm)$) {$\matsym{T}_1^{-1}$};
\node[w, right=2mm of rk] (wk) {$W_k$};
\node[dashbox, fit=(rk)(wk)] (dk) {};
\node[pill, right=6mm of wk] (ropek) {RoPE};
% \node[dotbox, right=4mm of ropek] (r3k) {$R_3$};

% -------------------------
% V row: (R1^{-1} Wv R2)
% -------------------------
\node[r, anchor=west] (rv) at ($(kvbar.east)+(8mm,-16mm)$) {$\matsym{T}_1^{-1}$};
\node[w, right=2mm of rv] (wv) {$W_v$};
\node[r, right=2mm of wv] (r2v) {$\matsym{T}_2$};
\node[dashbox, fit=(rv)(wv)(r2v)] (dv) {};

% feed from kvbar to the three rows
\draw[flow] (kvbar.east) -- ++(5mm,0) |- (rq.west);
\draw[flow] (kvbar.east) -- ++(5mm,0) |- (rk.west);
\draw[flow] (kvbar.east) -- ++(5mm,0) |- (rv.west);

% -------------------------
% Mid striped bars + softmax + circles
% -------------------------
% \node[stripebar] (barQK) at ($(r3k.east)+(10mm,8mm)$) {}; % Remove me

% \node[stripebar] (barV)  at ($(r2v.east)+(33mm,0mm)$) {};

\node[circ] (c1) at ($(ropek.east)+(9mm,9mm)$) {$\bullet$};
\node[pill, right=4mm of c1] (soft) {Softmax};
\node[circ] (c2) at ($(soft.south)+(0,-14mm)$) {$\bullet$};

\node[stripebar] (barAct) at ($(c2.east)+(14mm,0mm)$) {};

% -------------------------
% Output projection: (R2^{-1} Wo R1) in dashed box
% -------------------------
\node[r, anchor=west] (r2i) at ($(barAct.east)+(9mm,0mm)$) {$\matsym{T}_2^{-1}$};
\node[w, right=2mm of r2i] (wo) {$W_o$};
\node[r, right=2mm of wo] (r1o) {$\widetilde{\matsym{T}}_1$};
\node[dashbox, fit=(r2i)(wo)(r1o)] (dwo) {};

% -------------------------
% Connections (purple-ish)
% -------------------------
\draw[origflow] (wk.east)  -| (ropek.west);
\draw[origflow] (wq.east)  -| (ropeq.west);

% Q/K into shared bar, then to dot and softmax
\draw[origflow] (ropek.east)  -| (c1.south);
\draw[origflow] (ropeq.east)  -| (c1.north);
% \draw[actflow] (barQK.east) -- (c1.west);
\draw[origflow] (c1.east) -- (soft.west);

% Softmax down to c2
\draw[origflow] (soft.south)  -- (c2.north);

% V path into c2
% \draw[flow] (r2v.east) -- ++(6mm,0) |- (barV.west);
\draw[flow] (r2v.east) -- ++(6mm,0) |- (c2.west);

% c2 -> activation bar -> output projection
\draw[flow] (c2.east) -- (barAct.west);
\draw[actflow] (barAct.east) -- (r2i.west);
\draw[flow] (r1o.east) -- ++(4mm,0);

% -------------------------
% Outer dashed rounded region + label
% -------------------------
% \node[dashedgroup, fit=(kvbar)(dq)(ropeq)(r3q)(dk)(ropek)(r3k)(dv)(barQK)(c1)(soft)(c2)(barAct)(dwo)] (gb) {};
\end{tikzpicture}
%     \caption{Caption}
%     \label{fig:mha}
% \end{figure}
\caption{Location of $\matsym{T}_1$ transformation of LLM that enables the folding of general invertible transformation. 
}
\label{fig:mha}
    \end{subfigure}
        \begin{subfigure}[b]{1.0\textwidth}
        \centering
        % \begin{figure}
%     \centering

\begin{tikzpicture}[font=\small, line cap=round, line join=round]
% (use the same colors as in your main figure)
\tikzset{
  flow/.style={draw=plin, line width=2pt},
  actflow/.style={draw=actc, line width=2pt},
  origflow/.style={draw=origc, line width=2pt},
  box/.style={draw=bline, line width=1.3pt, rounded corners=3pt, minimum height=8mm},
  w/.style={box, fill=wbox, inner xsep=6pt},
  r/.style={box, fill=rbox, inner xsep=4pt, minimum width=8mm},
  dashbox/.style={draw=bline, dashed, rounded corners=4pt, line width=1.2pt, inner sep=4pt},
  dotbox/.style={draw=bline, dotted, rounded corners=4pt, line width=1.2pt, inner sep=4pt},
  pill/.style={box, fill=white, inner xsep=6pt},
  circ/.style={circle, draw=bline, line width=1.2pt, minimum size=6mm, inner sep=0pt},
  qbaract/.style={draw=bline, line width=1pt, fill=gray!30,
                  minimum width=2mm, minimum height=14mm},
}

% ---------------------------------------------------------
% left input activation bar
% ---------------------------------------------------------
\node[qbaract] (actin) {};
\draw[flow] ($(actin.west)+(-7mm,0)$) -- (actin.west);

% upper branch: R1^{-1} W_up
\node[r, anchor=west] (r1up)  at ($(actin.east)+(7mm,8mm)$)  {$\matsym{T}_1^{-1}$};
\node[w, right=2mm of r1up]   (wup)   {$W_{up}$};
\node[dashbox, fit=(r1up)(wup)] (dup) {};

% lower branch: R1^{-1} W_gate
\node[r, anchor=west] (r1ga)  at ($(actin.east)+(7mm,-8mm)$) {$\matsym{T}_1^{-1}$};
\node[w, right=2mm of r1ga]   (wga)   {$W_{gate}$};
\node[dashbox, fit=(r1ga)(wga)] (dga) {};

% Swish and multiply
\node[pill, right=5mm of wga] (swish) {Swish};
\node[circ, right=3mm of swish] (mul) {$\times$};

% R_4 box, mid activation bar
\node[dotbox, right=11mm of mul] (r4) {$\matsym{T}_3$};
\node[qbaract, right=8mm of r4] (actmid) {};

% merged down-projection R_4^{-1} W_down R_1
\node[r, anchor=west] (r4i) at ($(actmid.east)+(8mm,0)$) {$\matsym{T}_3^{-1}$};
\node[w, right=2mm of r4i] (wdn) {$W_{down}$};
\node[r, right=2mm of wdn] (r1dn) {$\widetilde{\matsym{T}}_1$};
\node[dashbox, fit=(r4i)(wdn)(r1dn)] (ddn) {};

% ---------------------------------------------------------
% connections
% ---------------------------------------------------------

% split from input bar to the two branches
\draw[actflow] (actin.east) -- ++(4mm,0)
                 |- (r1up.west);
\draw[actflow] (actin.east) -- ++(4mm,0)
                 |- (r1ga.west);

% upper branch to multiplier
\draw[origflow] (wup.east)  -| (mul.north);

% lower branch through Swish to multiplier
\draw[origflow] (wga.east) -- (swish.west);
\draw[origflow] (swish.east) -- (mul.west);

% through R4, activation bar, and merged down-projection
\draw[origflow] (mul.east) -- (r4.west);
\draw[flow] (r4.east) -- (actmid.west);
\draw[actflow]    (actmid.east) -- (r4i.west);
\draw[flow]    (r1dn.east) -- ++(3mm,0);

\end{tikzpicture}
%     \caption{Location of $\matsym{T}_1$ and $\matsym{T}_3$ transformation of LLM with inside view of Feed-Forward Block. }
%     \label{fig:ffn}
% \end{figure}
    \caption{Location of $\matsym{T}_1$ and $\matsym{T}_3$ transformation of LLM with inside view of Feed-Forward Block. }
    \label{fig:ffn}
    \end{subfigure}
\caption{Location of all transformations on a regular LLM with marking of folding operations.}
\label{fig:main_all_parts}

\end{figure}
\begin{itemize}
    \item \textbf{Folding $\matsym{T}_1^{-1}$}: $\matsym{T}_1^{-1}$ is folded into the K, Q, and V linear mappings within the multi-head attention block. Denote the linear operator for the K, Q, and V features by $f_{\Gamma}\brackets{\x}=\matsym{W}_{\Gamma}\x+\vectorsym{b}_{\Gamma}$, where $\Gamma\in\set{K,Q,V}$. After applying this transformation, we introduce $\widetilde{f}_{X}$ to represent the resulting folded operation:
    \begin{align}\label{eq:fold_kqv_1}
        \widetilde{f}_{\Gamma}\brackets{\x}&\triangleq f_{\Gamma}\brackets{\matsym{T}_1^{-1}\brackets{\x}}=\matsym{W}_{\Gamma}\matsym{T}_1^{-1}\brackets{\x}+\vectorsym{b}_{\Gamma}=\matsym{W}_{\Gamma}\brackets{\matsym{A}_1^{-1}\x-\matsym{A}_1^{-1}\vectorsym{v}_1}+\vectorsym{b}_{\Gamma}\nonumber\\
        &=\underbrace{\matsym{W}_{\Gamma}\matsym{A}_1^{-1}}_{=\widetilde{\matsym{W}}_{\Gamma}}\x+\underbrace{\vectorsym{b}_{\Gamma}-\matsym{W}_{\Gamma}\matsym{A}_1^{-1}\vectorsym{v}_1}_{=\widetilde{\vectorsym{b}}_{\Gamma}}.
    \end{align}
    From \eqref{eq:fold_kqv_1}, it follows that $\widetilde{f}_{\Gamma}$ can be represented as a linear operator with a weight matrix $\widetilde{\matsym{W}}_{\Gamma}$ and a bias vector $\widetilde{\vectorsym{b}}_{\Gamma}$.
    \item \textbf{Folding $\widetilde{\matsym{T}}_1$}: The transformation $\widetilde{\matsym{T}}_1$ can be folded into the linear output operation of the Feed-Forward block and the projection set in the MHA. Specifically, let $f_{O}\brackets{\x}=\matsym{W}_{O}\x+\vectorsym{b}_{O}$ be the linear output operation of the FFN and MHA. Applying the transformation to the output of the linear operation yields the following.
    \begin{align}\label{eq:fold_out_ffn}  \widetilde{f}_{O}\brackets{\x}=\widetilde{\matsym{T}}_1\brackets{{f}_{O}\brackets{\x}}=\matsym{A}_1{f}_{O}\brackets{\x}=\underbrace{\matsym{A}_1\matsym{W}_O}_{=\widetilde{\matsym{W}}_O}\x+\underbrace{\matsym{A}_1\vectorsym{b}_O}_{=\widetilde{\vectorsym{b}}_O}.
    \end{align}
        From \eqref{eq:fold_out_ffn}, it follows that $\widetilde{f}_{O}$ can be represented as a linear operator with a weight matrix $\widetilde{\matsym{W}}_{O}$ and a bias vector $\widetilde{\vectorsym{b}}_{O}$.
    \item \textbf{Folding $\matsym{T}_1$}: $\matsym{T}_1$ is applied only to the embedding matrix. Specifically, let $\matsym{W}_e$ denote the embedding matrix and $\vectorsym{w}_j$ the $j^{\text{th}}$ embedding vector. We apply the transformation to the feature after the embedding step, which is equivalent to applying the transformation to each embedding vector individually, resulting in:
    \begin{align}
\widetilde{\vectorsym{w}}_j=\matsym{T}_1\brackets{\vectorsym{w}_j}=\matsym{A}_1\vectorsym{w}_j+\vectorsym{v}_1.
    \end{align}
    This means that $\widetilde{\vectorsym{w}}_j$ represents the folded embedding vectors.  
\end{itemize}
\subsection{Folding of $\matsym{T_2}$}
After folding the transformation $\matsym{T}_1$ we describe here the folding of $\matsym{T}_2$ into $V$ linear operation and output linear transformation of the Multi-head attention block. Note that there is a separate  transformation $\matsym{T}_2$ for each Multi-Head Attention Block. 

Now we fold $\matsym{T}_2$ into the linear operation V and inverse the operation in the MHA output project. 
\begin{itemize}
    \item \textbf{Fold $\matsym{T}_2$ into Values linear projection} Since $\matsym{T}_1^{-1}$ has already been absorbed into the linear projection of Values as shown in \eqref{eq:fold_kqv_1}, we now add to it $\matsym{T}_2$, which we denote by $\dbtilde{f}\brackets{\x}$ and compute as follows:
    \begin{align}
        \dbtilde{f}\brackets{\x}\triangleq \matsym{T}_2\brackets{\widetilde{f}_V\brackets{\x}}=\matsym{A}_2\widetilde{f}_V\brackets{\x} +\vectorsym{v}_2=\underbrace{\matsym{A}_2\matsym{W}_{V}\matsym{A}_1^{-1}}_{=\dbtilde{\matsym{W}}_{V}}\x+\underbrace{\matsym{A}_2\vectorsym{b}_{V}-\matsym{A}_2\matsym{W}_{V}\matsym{A}_1^{-1}\vectorsym{v}_1 +\vectorsym{v}_2}_{=\dbtilde{\vectorsym{b}}_{V}}
    \end{align}
    Finally, the linear operation of $V$ has the weights matrix $\dbtilde{\matsym{W}}_{V}$ and the bias vector $\dbtilde{\vectorsym{b}}_{V}$.
    \item \textbf{Fold $\matsym{T}_2^{-1}$ into the output Projection} Analogous to the folding into \(V\), we have incorporated the transformation \(\matsym{T}_1\) into the output projection matrix, as indicated in \eqref{eq:fold_out_ffn}. We now add to it $\matsym{T}_2^{-1}$, which yields \(\dbtilde{f}_{O}\brackets{\x}\) and is given by:
    \begin{align}
        \dbtilde{f}_{O}\brackets{\x}=\widetilde{f}_O\brackets{\matsym{T}_2^{-1}\brackets{\x}}=\underbrace{\matsym{A}_1\matsym{W}_O}_{=\widetilde{\matsym{W}}_O}\matsym{T}_2^{-1}\brackets{\x}+\underbrace{\matsym{A}_1\vectorsym{b}_O}_{=\widetilde{\vectorsym{b}}_O}=\underbrace{\matsym{A}_1\matsym{W}_O\matsym{A}_2^{-1}}_{=\dbtilde{\matsym{W}}_O}\x-\underbrace{\matsym{A}_1\matsym{W}_O\matsym{A}_2^{-1}\vectorsym{v}_2+\matsym{A}_1\vectorsym{b}_O}_{=\dbtilde{\vectorsym{b}}_O}
    \end{align}
    Finally, the linear operation of the MHA output projections has the weights matrix $\dbtilde{\matsym{W}}_{O}$ and the bias vector $\dbtilde{\vectorsym{b}}_{O}$.
\end{itemize}

\section{Implementation Details}
\label{apx:impl-details}

The implementation of our algorithm, as well as all evaluated reference methods, is built on top of the microscaling quantization and GPTQ codebase provided by~\citet{egiazarian2025bridging}.

The transformations $\matsym{T_1}$ and $\matsym{T_2}$ are applied in the same manner as $\matsym{R_1}$ and $\matsym{R_2}$ in \citet{Liu0FSCKCTB25} and \citet{ashkboos2024quarot}. 
The exact mechanism is described in Section~\ref{sec:method} of the main paper. 
All compared reference methods are evaluated using the same transformation structure. In addition, an online Hadamard transformation is applied to the \emph{down} linear operator in the MLP component of each transformer block, following the setup of the cited works. This transformation is not part of the \MN{} algorithm and is included solely to ensure a fair and consistent comparison.

\edit{
Experiments on models up to 8B parameters were conducted on a single NVIDIA A100 GPU. We used 4 and 8 such GPUs for 14B and 32B parameters models, respectively.
For calibration, we used 256 randomly sampled sequences from the WikiText2 dataset~\cite{merity2016pointer}, each with a sequence length of 1,024, for up to 8B parameters models. We used a reduced sequence length of 512 and 128 for with an increased number of samples in the same ratio, for the 14B and 32B models, respectively.}
The same calibration set is used both for transformation training and for the GPTQ procedure.

Transformation training is performed using the AdamW optimizer~\cite{LoshchilovH19} with weight-decay, together with a cosine learning-rate scheduler and a linear warmup \edit{for 100 steps,} with start and end factors of $0.1$ and $1$, respectively. \edit{Optimization is done using 1,000 training steps.} The transformation matrix is initialized in a block-diagonal fashion, with small Gaussian noise added to the off-diagonal entries. Each block is a $32 \times 32$ matrix, initialized as a random Hadamard matrix for the LU parametrization or as a random orthogonal matrix for the QR parametrization.

% Additional model and parametrization-specific implementation details and hyperparameters are reported in Table~\ref{tab:impl_spec}.

\subsection{Training Hyperparameters}
We used the largest batch size permitted by GPU memory for each model, ranging from 1 for the largest models to 8 for smaller ones.
The learning rate was tuned separately for each model to ensure stable and effective optimization, with values selected from the range $10^{-3}$ to $10^{-5}$.

In addition, we incorporate two mechanisms into the training procedure to improve stability and perform a sweep to tune their hyperparameters. 
The first is a regularization term applied to the diagonal entries of the learned transformation matrix, encouraging them to remain close to one. 
This prevents the diagonal from growing or shrinking excessively, which could otherwise lead to unstable training behavior.

We observe that this regularization has a relatively small but consistent effect. While it does not substantially change the final performance, it helps guide training toward a more stable and better-performing solution.
We hypothesize that this limited impact is due to the initialization of the transformation as an orthogonal matrix, whose diagonal entries are already close to one and thus near a good operating point.
\edit{We used a regularization factor $\lambda=0.1$ for all models optimization.}

In addition, following common practice in knowledge-distillation-based methods~\cite{hinton2015distilling,huang2023normalization}, we introduce a \edit{calibrated} temperature parameter in the distillation loss. 
Similar to the diagonal regularization, we do not observe a strong sensitivity to the temperature value. 
However, in several cases it helps the optimization converge to a slightly improved solution.

\subsection{Evaluated references Setting}
\label{apx:ref_settings}
% As mentioned earlier, we evaluate a range of existing methods for LLM quantization. Achieving a fair comparison poses significant challenges, as many of these methods were not originally designed for MX quantization, and were evaluated under different quantization formats, transformation choices, and implementation settings. To address this, we re-implement most methods within the same experimental setup and codebase used to evaluate \MN{}, ensuring consistency across comparisons.

% It is important to note that some methods required modifications that deviate from their exact original configurations. In particular, for FlatQuant~\cite{SunLBBZLHY0Y0LY25}, we define the learnable matrix \(\matsym{A_1}\) as a global transformation, rather than a per-block transformation as proposed in the original paper. In addition, we evaluate FlatQuant using both the original loss defined in their work, which is computed per block, and our distillation-based loss (see Section~\ref{eq:kd_loss}). The results reported in Section~\ref{sec:experiments} correspond to the runs using our loss, which consistently outperform those obtained with the original FlatQuant loss.
% One additional deviation from the SpinQuant setup is the omission of the online Hadamard rotation they apply to the RoPE output, following the setup set by~\cite{egiazarian2025bridging}.

As mentioned earlier, we evaluate a range of existing methods for LLM quantization. Ensuring a fair comparison is challenging, since many of these methods were not originally designed for MX quantization and were evaluated under different quantization formats, transformation choices, and implementation settings. To address this, we re-implement most methods within the same experimental setup and codebase used to evaluate \MN{}, and use the same setup for all methods wherever possible.

\edit{
Nevertheless, some methods require modifications that depart from their original configurations. 
For FlatQuant~\cite{SunLBBZLHY0Y0LY25}, a direct one-to-one comparison is infeasible because the original method learns a separate matrix for each block. 
We therefore aim to remain as faithful as possible to the FlatQuant implementation while keeping the setup close to ours, which also matches common practice in the literature. 
In particular, we parametrize the learnable matrix \(\matsym{A}_1\) as described in Section 3.1 of the FlatQuant paper, namely as the Kronecker product of two lightweight matrices that are both learned during optimization, but apply it as a global transformation rather than a per-block one. 
This baseline also serves to assess the quality of the affine parametrizations proposed in \MN{}, and shows favorable results for our approach. Furthermore, to avoid bias from the choice of objective, we evaluate our implementation of FlatQuant using both its original proposed MSE loss and our distillation-based loss (see Section~\ref{eq:kd_loss}), and report the better result. In practice, the runs using our loss consistently outperform those using the original FlatQuant objective.
We further extend the comparison to the fully compressed FlatQuant setting by incorporating per-block (online) transformations together with the corresponding loss terms from the original method. The results for this variant are reported in Table~\ref{tab:per-clobl-flat}.

For SpinQuant, we follow the setup used by~\cite{egiazarian2025bridging} and therefore do not apply the online Hadamard rotation to the RoPE output. 
Since these online transformations are orthogonal to the method itself, adding them would likely benefit all methods. As with FlatQuant, we also test SpinQuant using both our distillation-based loss and the original objective used by the method, and report the better result. For SpinQuant, the best results are obtained with cross-entropy loss; the corresponding WikiText2 perplexities for both losses are shown in Table~\ref{tab:spin-loss}.
}

\begin{table*}[!h]
\centering
\caption{Zero-shot accuracy comparison of \MN{} to FlatQuant in its original formulation (marked in bold), namely having an invertible transformation for each transformer block with the associated loss function.
}
\label{tab:per-clobl-flat}
\resizebox{\textwidth}{!}{%
\begin{tabular}{lcccccc}
\toprule
\textbf{Method} & \textbf{Llama3.2-1B} & \textbf{Llama3.2-3B-Instruct} & \textbf{Llama3.1-8B-Instruct} & \textbf{Qwen3-1.7B} & \textbf{Qwen3-4B} & \textbf{Qwen3-8B} \\
\midrule
FlatQuant's matrix structure & 52.45 & 60.55 & 67.81 & 54.85 & 62.46 & 66.89  \\
\textbf{FlatQuant}          & 52.96 & 60.79 & 67.93 & 55.35 & 62.98 & 67.92 \\
LATMiX                      & \textbf{54.04} & \textbf{62.61} & \textbf{68.46} & \textbf{56.91} & \textbf{64.74} & \textbf{67.94} \\
\bottomrule
\end{tabular}
}
\end{table*}

\begin{table*}[!h]
\centering
\caption{Wikitext2 perplexity results for SpinQuant loss functions comparison.}
\label{tab:spin-loss}
\begin{tabular}{lcccc}
\toprule
\textbf{Loss Function} & \textbf{Llama3.2-1B} & \textbf{Qwen3-1.7B} & \textbf{Qwen3-8B} & \textbf{Llama3.1-8B-Instruct} \\
\midrule
LATMiX Loss & 13.20 & 26.22 & 11.16 & 8.89 \\
CE          & 12.74 & 19.21 & 9.55  & 8.84 \\
\bottomrule
\end{tabular}
\end{table*}

\newpage
\section{Additional Experiments} \label{apx:additional_results}
\subsection{Perplexity Results} \label{apx:perplexity}

\begin{table*}[!h]
\small
\centering
\caption{
Perplexity Comparison on WikiText2 for different quantization methods under MXFP4 quantization for both weights and activations.
Best results are highlighted in bold, and second best in underline.
}
\label{tab:mxfp4_wiki_ppl}
\resizebox{\textwidth}{!}{%
\begin{tabular}{lcccccccc}
\toprule
\textbf{Method} &
\textbf{Llama3.2-1B} & \textbf{Llama3.2-3B} & \textbf{Llama3-8B} &
\textbf{Qwen2.5-1.5B} & \textbf{Qwen2.5-3B} &
\textbf{Qwen3-1.7B} & \textbf{Qwen3-4B} & \textbf{Qwen3-8B}\\
\midrule

FP16 & 9.75 & 7.81 & 6.14 & 9.27 & 8.02 & 16.71 & 13.66 & 9.72 \\

\midrule
RTN & 17.04 & 10.43 & 8.22 & 15.26 & 12.18 & 22.22 & 17.78 & 11.30 \\
QuaRot-RTN & 21.08 & 13.04 & 11.01 & 14.58 & 11.28 & 38.17 & 20.19 & 13.24 \\
GPTQ & 18.09 & 45.56 & 7.97 & 13.53 & 10.86 & 21.29 & 15.76 & 10.91 \\
QuaRot & 13.05 & 9.56 & 8.08 & 11.63 & 9.62 & 20.00 & 19.26 & 11.14 \\
SpinQuant & 12.67 & 10.08 & 7.76 & 11.39 & 11.35 & 18.10 & 11.97 & 11.21 \\
FlatQuant$^\dagger$ & 12.79 & 9.66 & 7.81 & 11.53 & 9.60 & 19.28 & 15.10 & 11.02 \\
OSTQuant & 13.30 & 32.30 & 7.86 & -- & -- & 18.35 & 14.09 & 11.38 \\
BRQ & 11.93 & 9.08 & \underline{7.13} & 11.95 & 9.46 & -- & -- & -- \\
MR-GPTQ & 12.25 & \underline{8.95} & \underline{7.13} & 11.49 & 9.64 & 21.04 & 16.74 & 10.53 \\

\cdashline{1-9}
\addlinespace[0.3em]
\textbf{\MN{}-LU (Ours)} & \textbf{11.64} & \textbf{8.84} & \textbf{7.11} & \textbf{10.95} & \textbf{9.12} & \underline{15.11} & \textbf{11.77} & \textbf{9.20} \\
\textbf{\MN{}-QR (Ours)} & \underline{11.77} & 9.25 & 7.26 & \underline{11.02} & \underline{9.19} & \textbf{14.94} & \underline{12.10} & \underline{9.38} \\

\bottomrule
\end{tabular}
}
\end{table*}

We evaluate quantization performance on the WikiText2 language modeling benchmark~\cite{merity2016pointer}, a standard dataset of high-quality Wikipedia articles evaluated using perplexity. In the table, BRQ results refer to learned block-wise rotation transformation taken from~\cite{shao2025block}. As this study did not present results in Qwen3-1.7B, Qwen3-4B, and Qwen3-8B, we left the corresponding cells empty.
All other results were obtained using MR-GPTQ~\cite{egiazarian2025bridging} publicly released codebase, with adaptations following SpinQuant's~\cite{Liu0FSCKCTB25} transformation setting.
QuaRot refers to our implementation of tensor-wise Hadamard rotation following~\cite{ashkboos2024quarot}.
The results are presented in Table~\ref{tab:mxfp4_wiki_ppl}
Consistent with the trends observed in Section~\ref{sec:experiments} across other language benchmarks, this experiment shows that learning a general affine transformation is more effective than relying on standard rotation-based approaches.

In addition, we examine how these results change when learning is restricted to orthogonal matrix transformations, as in BRQ~\cite{shao2025block}.  
Following the comparison in Table~\ref{tab:mxfp4_wiki_ppl}, we find that limiting the transformation to be strictly orthogonal is insufficient and leads to suboptimal performance across models.  
In contrast, allowing a general invertible transformation provides additional flexibility, enabling more effective redistribution of the activation tensor energy beyond what is achievable with orthogonal rotations alone.  
% Further comparisons with BRQ across additional benchmarks are reported in Table~\ref{tab:brq}.

% \subsection{Initialization of the Transformations}\label{apx:init}
% \begin{table}[!h]
% \small
% \centering
% \caption{Ablation study on the different methods for initializing the learned transformation matrix, evaluated on WikiText2 perplexity.}
% \label{tab:init}
% \begin{tabular}{lcccc}
% \toprule
% \multirow{2}{*}{\textbf{Initialization Method}} &
% \multicolumn{2}{c}{\textbf{Llama3.2-1B}} &
% \multicolumn{2}{c}{\textbf{Qwen3-1.7B}} \\
% \cmidrule(lr){2-3} \cmidrule(lr){4-5}
% & \textbf{LU} & \textbf{QR} & \textbf{LU} & \textbf{QR} \\
% \midrule
% Identity                     & 12.23 & 12.12 & 17.81 & 17.11 \\
% Identity + Noise             & 12.25 & 12.11 & 18.17 & 17.91 \\
% Full Orthogonal             & 11.86 & 12.35 & 17.73 & 17.64 \\
% BD Orthogonal          & 12.04 & 11.87 & 17.34 & 17.90 \\
% BD Orthogonal + Noise  & 12.07 & \textbf{11.77} & 17.08 & \textbf{14.94} \\
% Full Hadamard               & 12.27 & 12.14 & 17.47 & 18.01 \\
% BD Hadamard            & 11.71 & 12.13 & 17.18 & 18.35 \\
% BD Hadamard + Noise    & \textbf{11.64} & 12.09 & \textbf{16.98} & 18.12 \\
% \bottomrule
% \end{tabular}
% \end{table}

\subsection{Initialization of the Transformations}\label{apx:init}
\begin{table}[!h]
\small
\centering
\caption{Ablation study on the different methods for initializing the learned transformation matrix, evaluated on WikiText2 perplexity.}
\label{tab:init}
\begin{tabular}{lcc}
\toprule
\multirow{2}{*}{\textbf{Initialization Method}} &
\multicolumn{2}{c}{\textbf{Llama3.2-1B}} \\
\cmidrule(lr){2-3}
& \textbf{LU} & \textbf{QR} \\
\midrule
Identity                & 12.23 & 12.12 \\
Identity + Noise        & 12.25 & 12.11 \\
Full Orthogonal         & 11.86 & 12.35 \\
BD Orthogonal           & 12.04 & 11.87 \\
BD Orthogonal + Noise   & 12.07 & \textbf{11.77} \\
Full Hadamard           & 12.27 & 12.14 \\
BD Hadamard             & 11.71 & 12.13 \\
BD Hadamard + Noise     & \textbf{11.64} & 12.09 \\
\bottomrule
\end{tabular}
\end{table}

We examined several strategies for initializing the learned transformation matrices.
Specifically, we considered initialization as the \textbf{Identity} matrix, a random \textbf{Orthogonal} matrix, or a random \textbf{Hadamard} matrix. For the latter two, we additionally evaluated block-diagonal (\textbf{BD}) variants of the initialization.
For all diagonal or block-diagonal initializations, we further considered adding small random Gaussian noise to the off-diagonal or off-block-diagonal entries, with the aim of improving optimization stability during training.

The results reported in Table~\ref{tab:init} indicate that block-diagonal initialization with small random noise consistently outperforms the other initialization strategies.
This behavior aligns with the intuition that a block-diagonal structure is well matched to the MX quantization regime, as it encourages more uniform redistribution of outliers within each quantization block during the early stages of training.
As observed in other ablations in Section~\ref{sec:experiments}, the optimization subsequently departs from the initial block-orthogonal structure, enabling controlled transfer of activation energy across blocks and converging to a more effective solution.

\subsection{Loss Function for \MN{}}\label{apx:loss_func}
% \todo[inline,color=red]{High risk section\\
% 1.Fix y notation\\
% 2. Not sure about how they handle different sequence length and is the exception is right where (need to find a reference).\\
% 3. \\
% Reference:\\
% 1. https://arxiv.org/pdf/1906.08237\\
% 2.https://openreview.net/pdf?id=lNcc1TypMd}

Here, we justify the choice of the Kullback–Leibler divergence as a measure of the discrepancy between the floating-point and quantized modes. Specifically, let $\X$ denote an input context to the model and let $\Y$ be a random matrix representing the expected response; we use the expected negative log-likelihood (NLL) as a measure of the model performance \cite{yang2019xlnet,li2025beyond}:
\begin{align}\label{eq:nll_zero}
\mathcal{L}\brackets{P_{\theta}}\triangleq\expectation{-\log\brackets{ P_{\theta}\brackets{\Y|\X}}}{\Y,\X}=\expectation{\expectation{-\log\brackets{ P_{\theta}\brackets{\Y|\X}}}{\Y\sim Q\brackets{\cdot|\X}}}{\X},
\end{align}
where $P_{\theta}\brackets{\Y|\X}$ represents the probability that the LLM $f$ assigns to each output token (typically computed via a softmax layer), and $Q$ denotes the probability distribution over all possible responses given the context $\X$. Let $\widetilde{P}_{\Phi}$ be the quantized model, parameterized by $\Phi=\set{\theta,\Omega}$, where $\theta$ are the original model parameters and $\Omega$ are the transformation parameters. We analyze the difference between the NNL of the quantized model and that of the full-precision (floating-point) model.

\begin{proposition}\label{prop:kl_bound}
Let $\epsilon>0$ denote a lower bound on the probability values. Suppose that $\epsilon\leq P_{\theta}\brackets{\Y|\X} \quad \forall \Y,\X$, and $\epsilon\leq \widetilde{P}_{\Phi}\brackets{\Y|\X} \quad \forall \Y,\X$. Then,
    \begin{equation}
        \begin{aligned}
        \min\limits_{\Omega}\Delta\brackets{\Omega}\triangleq\min\limits_{\Omega}\abs{\mathcal{L}\brackets{\widetilde{P}_{\Phi}}-\mathcal{L}\brackets{P_{\theta}}}\leq C+\min\limits_{\Omega}\expectation{\mathrm{KL}\brackets{P_{\theta}\brackets{\Y|\X},\widetilde{P}_{\Phi}\brackets{\Y|\X}} }{\X},
    \end{aligned}
    \label{appx:kl_bound_eq}
    \end{equation}
    
    where $C$ is constant independent of $\Omega$.
\end{proposition}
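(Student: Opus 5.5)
The plan is to decompose the difference of expected NLLs into a KL term plus a cross-entropy mismatch term. Then I would bound the mismatch term by a constant that does not depend on $\Omega$, using the lower bound $\epsilon$ on the probabilities. Fix $\Omega$ and a context $\X$. Add and subtract $\log P_{\theta}(\Y|\X)$ inside the expectation in Eq.~\eqref{eq:nll_zero}:
\begin{align*}
\mathcal{L}\brackets{\widetilde{P}_{\Phi}}-\mathcal{L}\brackets{P_{\theta}}
=\expectation{\expectation{\log\tfrac{P_{\theta}\brackets{\Y|\X}}{\widetilde{P}_{\Phi}\brackets{\Y|\X}}}{\Y\sim Q\brackets{\cdot|\X}}}{\X}.
\end{align*}
The inner expectation is over $Q$, not over $P_{\theta}$. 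I would therefore rewrite it as an expectation under $P_{\theta}$ plus a correction:
\begin{align*}
\expectation{g(\Y)}{\Y\sim Q}=\expectation{g(\Y)}{\Y\sim P_{\theta}}+\sum_{\Y}\brackets{Q(\Y|\X)-P_{\theta}(\Y|\X)}g(\Y),
\end{align*}
where $g=\log\brackets{P_{\theta}/\widetilde{P}_{\Phi}}$. The first term is exactly $\mathrm{KL}\brackets{P_{\theta}(\cdot|\X),\widetilde{P}_{\Phi}(\cdot|\X)}$, which is nonnegative.

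The second step bounds the correction uniformly in $\Omega$. Both probabilities lie in $[\epsilon,1]$, so $|g(\Y)|\le\log(1/\epsilon)$ for every $\Y$ and every $\Omega$. Hölder's inequality then gives
\begin{align*}
\abs{\sum_{\Y}(Q-P_{\theta})g}\le\log(1/\epsilon)\,\norm{Q(\cdot|\X)-P_{\theta}(\cdot|\X)}_1\le 2\log(1/\epsilon).
\end{align*}
Taking the expectation over $\X$ yields the constant
\begin{align*}
C\triangleq\log(1/\epsilon)\,\expectation{\norm{Q(\cdot|\X)-P_{\theta}(\cdot|\X)}_1}{\X}\le 2\log(1/\epsilon).
\end{align*}
This $C$ depends only on $Q$, $P_{\theta}$ and $\epsilon$, not on $\Omega$. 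The triangle inequality together with the nonnegativity of KL gives, for every $\Omega$,
\begin{align*}
\Delta(\Omega)\le C+\expectation{\mathrm{KL}\brackets{P_{\theta},\widetilde{P}_{\Phi}}}{\X}.
\end{align*}
Minimizing both sides over $\Omega$ finishes the argument, since $\min_{\Omega}\Delta\le\Delta(\Omega^\star)$ for the KL minimizer $\Omega^\star$.

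The main obstacle is making sure that $C$ is genuinely independent of $\Omega$. The correction term depends on $\widetilde{P}_{\Phi}$ through $g$, so one cannot simply pull it out. The uniform bound $|g|\le\log(1/\epsilon)$, which is where the $\epsilon$-assumption is essential, is what makes the bound $\Omega$-free. If $\Y$ ranges over sequences, sums become sums over the (finite) response space, or integrals with the same $L_1$ argument. Finiteness and measurability are routine.
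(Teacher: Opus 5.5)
Your proposal is correct and follows essentially the same route as the paper. Both insert $P_{\theta}$ to split the $Q$-expectation into $\mathrm{KL}\brackets{P_{\theta},\widetilde{P}_{\Phi}}$ plus a correction term, and both bound that correction by $\norm{g}_{\infty}$ times the $\ell_1$ (total-variation) distance between $Q(\cdot|\X)$ and $P_{\theta}(\cdot|\X)$, which is $\Omega$-free thanks to the $\epsilon$ lower bound. The differences are cosmetic: you use $\log(1/\epsilon)$ where the paper uses the slightly tighter $\log\brackets{\tfrac{1-\epsilon}{\epsilon}}$, and you bound $g$ via the hypotheses on $P_{\theta}$ and $\widetilde{P}_{\Phi}$, which are the ones actually needed (the paper's proof text cites $Q$ there, apparently a slip).
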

The proof is described in Appendix~\ref{app:loss_prop}. From Proposition~\ref{prop:kl_bound}, we observe that the mismatch between the quantized model and its floating-point counterpart is captured by an upper bound based on the expected KL divergence over all possible contexts. This establishes a connection between the KL loss and the NLL loss of the downstream task. In practice, however, we typically lack direct access to samples of the downstream task. Yet, it is common to assume access to a corpora of texts and solve a language modeling task instead. Under this assumption, the bound reduces to that in Eq.~\eqref{eq:kd_loss}, which is simply the KL divergence between the quantized model and the floating-point model.

Moreover, we investigated three numerical loss functions that are widely used in previous work: (i) MSE applied to the output of each transformer block \cite{SunLBBZLHY0Y0LY25} between the quantized and floating-point representations; (ii) cross-entropy loss for the prediction of the next-token \cite{Liu0FSCKCTB25}; and (iii) KL divergence between the predictions of the quantized and the floating-point networks. Specifically, we trained an affine transformation with all three losses on Llama3.2-1B and Qwen3-1.7B while quantizing to MXFP4. We report both perplexity on WikiText2 and the average accuracy on the following zero-shot tasks: RC Easy, WinoGrande, PIQA, BoolQ, and OpenBookQA. For all benchmarks, the transformations are learned using WikiText2, making evaluations on WikiText2 in-distribution and zero-shot task evaluations out-of-distribution.

From Table~\ref{tab:loss_func}, we see that the KL loss outperforms the other loss functions in terms of average accuracy on zero-shot tasks. In addition, while it surpass the MSE loss in terms of WikiText2 perplexity, it yields a higher perplexity compared to the CE loss. This suggests that using CE loss fits better when learning the transformations on the same dataset and task, but worse in zero-shot tasks.

\begin{table}[H]
\centering
\caption{Ablation study comparing WikiText2 perplexity ($\downarrow$) and zero-shot average accuracy (Avg. Acc. $\uparrow$) for mean squared error (MSE) loss, KL divergence on model predictions, and cross-entropy loss based on next token predictions.}
\label{tab:loss_func}
\begin{tabular}{lcccc}
\toprule
\multirow{2}{*}{\textbf{Loss Function}} &
\multicolumn{2}{c}{\textbf{Llama3.2-1B}} &
\multicolumn{2}{c}{\textbf{Qwen3-1.7B}} \\
\cmidrule(lr){2-3}\cmidrule(lr){4-5}
& \textbf{Wiki} & \textbf{Avg. Acc.} & \textbf{Wiki} & \textbf{Avg. Acc.} \\
\midrule
FP16 & 9.75 & 60.02 & 16.71 & 63.39 \\
\midrule
\textbf{MSE} & 12.18 & 55.09 & 20.86 & 59.72 \\
\textbf{CE}  & \textbf{11.27} & 56.19 & \textbf{13.42} & 58.87 \\
\textbf{KL}  & 11.64 & \textbf{56.89} & 15.11 & \textbf{60.86} \\
\bottomrule
\end{tabular}
\end{table}

\subsubsection{Proof Proposition ~\ref{prop:kl_bound}} \label{app:loss_prop}
\begin{proof} First, we apply the definition of the NNL loss given in \eqref{eq:nll_zero}
\begin{align}\label{eq:abs_loss_diff}
    \abs{\mathcal{L}\brackets{\widetilde{P}_{\Phi}}-\mathcal{L}\brackets{P_{\theta}}}=\abs{\expectation{\expectation{-\log\brackets{ \tfrac{\widetilde{P}_{\Phi}\brackets{\Y|\X}}{P_{\theta}\brackets{\Y|\X}}}}{\Y\sim Q\brackets{\cdot|\X}}}{\X}}.
\end{align}
Next, we insert and subtract the floating-point model probability $P_{\theta}\brackets{\Y|\X}$ inside the expectation, which gives:
\begin{align}\label{eq:base_i1}
    I_1&=\expectation{-\log\brackets{ \tfrac{\widetilde{P}_{\Phi}\brackets{\Y|\X}}{P_{\theta}\brackets{\Y|\X}}}}{\Y\sim Q\brackets{\cdot|\X}}=\underbrace{\expectation{-\log\brackets{ \tfrac{\widetilde{P}_{\Phi}\brackets{\Y|\X}}{P_{\theta}\brackets{\Y|\X}}}}{\Y\sim P_{\theta}\brackets{\cdot |\X} }}_{\mathrm{KL}\brackets{P_{\theta}\brackets{\Y|\X},\widetilde{P}_{\Phi}\brackets{\Y|\X}}}+I_2\quad \text{, where}\\
    I_2&=\expectation{-\log\brackets{ \tfrac{\widetilde{P}_{\Phi}\brackets{\Y|\X}}{P_{\theta}\brackets{\Y|\X}}}}{\Y\sim Q\brackets{\cdot|\X}}-\expectation{-\log\brackets{ \tfrac{\widetilde{P}_{\Phi}\brackets{\Y|\X}}{P_{\theta}\brackets{\Y|\X}}}}{\Y\sim P_{\theta}\brackets{\cdot |\X}}.\nonumber
\end{align}
By defining $g\brackets{\Y}=-\log\brackets{ \tfrac{\widetilde{P}_{\Phi}\brackets{\Y|\X}}{P_{\theta}\brackets{\Y|\X}}}$ we have
\begin{align}
    \abs{I_2}\leq \sum_{\Y} \abs{g\brackets{\Y}}\abs{Q\brackets{\Y|\X}-P_{\theta}\brackets{\Y |\X}}&\leq \norm{g}_{\infty}\sum_{\Y} \abs{Q\brackets{\Y|\X}-P_{\theta}\brackets{\Y |\X}}\nonumber\\
    &\leq 2\norm{g}_{\infty}\mathrm{TV}\brackets{Q\brackets{\cdot|\X},P_{\theta}\brackets{\cdot |\X}},
\end{align}
where $\norm{g}_{\infty}\triangleq\sup_{\y}\abs{g\brackets{\y}}$ is the infinite norm of the function $g$ and $\mathrm{TV}\brackets{Q\brackets{\cdot|\X},P_{\theta}\brackets{\cdot |\X}}\triangleq\frac{1}{2}\sum_{\Y} \abs{Q\brackets{\Y|\X}-P_{\theta}\brackets{\Y |\X}}$ is the total variation between the distribution $Q$ and $P$. Using $\epsilon< Q\brackets{\cdot|\X}$ and $\epsilon< P_{\theta}\brackets{\cdot|\X}$ we have $\norm{g}_{\infty}\leq \log\brackets{\tfrac{1-\epsilon}{\epsilon}}$ resulting in:
\begin{align}\label{eq:i_1_abs_bound}
    \abs{I_1}\leq \mathrm{KL}\brackets{P_{\theta}\brackets{\Y|\X},\widetilde{P}_{\Phi}\brackets{\Y|\X}} +2\log\brackets{\tfrac{1-\epsilon}{\epsilon}}\mathrm{TV}\brackets{Q\brackets{\Y|\X},P_{\theta}\brackets{\Y |\X}}
\end{align}
Finally, combining \eqref{eq:abs_loss_diff},\eqref{eq:base_i1} and \eqref{eq:i_1_abs_bound} yields the desired result.

\end{proof}

% In this work, we restrict our attention to auto-regressive LLMs, for which $P_{\theta}\brackets{\Y|\X}=\prod_{i}P_{\theta}\brackets{\y_i|\X,\Y_{<i}}$ resulting in:
% \begin{align}
%     \mathcal{L}\brackets{f}\triangleq\expectation{-\sum_{i}\log\brackets{ P_{\theta}\brackets{\y_i|\X,\Y_{<i}}}}{\Y,\X}. 
% \end{align}

% \begin{align}
%     0\leq  \mathcal{L}\brackets{\widetilde{f}}-\mathcal{L}\brackets{f}=\expectation{-\sum_{i}\log\brackets{ \tfrac{P_{\theta}\brackets{\y_i|\X,\Y_{<i}}}{\widetilde{P}_{\theta}\brackets{\y_i|\X,\Y_{<i}}}}}{\Y,\X}=\expectation{-\sum_{j,i}}{\X}
% \end{align}
% \begin{align}
%     \mathcal{L}\brackets{f}\triangleq-\expectation{\expectation{\log P_{\theta}\brackets{ֿ\Y|\X}}{\Y|\X}}{\X}
% \end{align}

% where $ P_{\theta}\brackets{ֿ\Y|\X}$ is the softmax probability of LLM. As we assume an auto regressive model meaning $ P_{\theta}\brackets{ֿ\Y|\X}=\prod_{i} P_{\theta}\brackets{ֿ\y_i|\X,\Y_{<i}}  $ 

% Table~\ref{tab:loss_func} presents a comparison between two loss functions, used as \MN{} optimization objective.
% Kl stands for computing the KL divergence over the quantized model's outputs and the floating point model outputs.
% MSE stands for computing the Mean-Squared Error between the output of the two models after each transformer block.
% \todo[inline]{Hai need to complete this observation regarding the results}

\newpage
\subsection{Transformation Parametrization Condition Number}
\label{apx:condition-number}
Figure~\ref{fig:condition-number} shows tracking of the condition number of the learned transformations $\matsym{T}_1$ under the LU and QR matrix parameterizations throughout training. The figure shows that both transformations largely remain well-conditioned. %Connecting that to the better performance of LU decomposition in some cases 
%This experiment provides intuition for our main results, where the LU parametrization generally yields better performance.
%One possible explanation is that the QR parametrization spans a broader class of transformations, which may make optimization more challenging, especially due to the use of the matrix exponential.
%We believe that more careful optimization of $\matsym{G}$ could lead to improved results under this parametrization.

\begin{figure}[!h]
    \centering
     \includesvg[width=1.0\linewidth]{images/condition_number.svg}
    \caption{$\matsym{A}_1$ matrix condition number throughout the training for the QR and LU matrix parametrization.}
    \label{fig:condition-number}
\end{figure}

\subsection{Extended Ablations}
\label{apx:extended-ablation}

\subsubsection{Calibration Set Size}
\label{apx:cal_set_size}

\begin{table}[!h]
\small
\centering
\caption{Wikitext2 perplexity ($\downarrow$) and Zero-shot ($\uparrow$) accuracy results with varying number of calibration samples selected randomly from the WikiText2 training set.}
\label{tab:calib-size}
\begin{tabular}{lcccc}
\toprule
\multirow{2}{*}{\textbf{Set Size}} &
\multicolumn{2}{c}{\textbf{Llama-3.1-8B-Instruct}} &
\multicolumn{2}{c}{\textbf{Qwen3-8B}} \\
\cmidrule(lr){2-3}\cmidrule(lr){4-5}
& \textbf{Wiki} & \textbf{Avg. Acc.} & \textbf{Wiki} & \textbf{Avg. Acc.} \\
\midrule
1   & 8.93 & 69.11 & 11.03 & 66.33 \\
4   & 8.53 & 70.11 & 10.00 & 67.24 \\
8   & 8.38 & 70.41 & 9.71 & 67.82 \\
% 32   & 8.09 & 71.07 & 11.03 & 66.33 \\
64  & 8.04 & 70.32 & 9.52  & 68.16 \\
128 & 8.03 & 70.31 & 9.35  & 68.12 \\
256 & 7.97 & 70.51 & 9.38  & 68.46 \\
512 & 8.07 & 71.99 & 9.40  & 69.68 \\
\bottomrule
\end{tabular}
\end{table}

\newpage

\subsubsection{Random Calibration Set Selection}

\begin{table}[!h]
\centering
\captionof{table}{Robustness to the choice of calibration subset. Presenting the mean and STD of five zero-shot benchmarks over the selection of five different random subsets of Wikitext2 for calibration.
}
\label{tab:calib-set}
\begin{tabular}{lcc}
\toprule
\textbf{Benchmark / Model} &
\textbf{Llama-3.1-8B-Instruct} &
\textbf{Qwen3-8B} \\
\midrule
ARC-E                & $72.85 \pm 0.97$ & $76.55 \pm 1.15$ \\
BoolQ                   & $83.37 \pm 0.64$ & $84.57 \pm 0.84$ \\
WinoGrande             & $72.71 \pm 0.89$ & $68.72 \pm 0.70$ \\
PIQA                    & $78.26 \pm 0.63$ & $76.14 \pm 0.58$ \\
OBQA              & $45.22 \pm 1.48$ & $40.54 \pm 0.91$ \\
Avg. Accuracy       & $70.48 \pm 0.35$ & $69.31 \pm 0.42$ \\
Avg. Recovery (\%)& $96.37 \pm 0.62$ & $97.18 \pm 0.63$ \\
\bottomrule
\end{tabular}
\end{table}

\subsubsection{Number of Training Steps}

\begin{table}[!h]
\centering
\caption{Number of optimization steps ablation on Llama-3.1-8B-Instruct.}
\label{tab:num-steps}
\begin{tabular}{ccc}
\toprule
\textbf{Training steps} & \textbf{PPL $\downarrow$} & \textbf{Avg. 0-Shot $\uparrow$} \\
\midrule
0    & 8.13 & 64.35 \\
100  & 8.07 & 67.48 \\
200  & 8.05 & 68.02 \\
500  & 8.01 & 68.55 \\
750  & 8.02 & 68.96 \\
\textbf{1000} & 7.97 & 68.46 \\
2500 & 8.03 & 68.12 \\
5000 & 8.02 & 68.94 \\
\bottomrule
\end{tabular}
\end{table}

\subsubsection{Regularization Factor}

\begin{table*}[!h]
\centering
\caption{Sensitivity to the regularization coefficient $\lambda$ on Qwen3-1.7B.}
\label{tab:lambda}
\begin{tabular}{ccc}
\toprule
\textbf{$\lambda$} & \textbf{Wiki $\downarrow$} & \textbf{Avg. Acc. $\uparrow$} \\
\midrule
0.001  & 17.71 & 56.73 \\
0.01   & 17.67 & 56.23 \\
0.05   & 17.32 & 56.36 \\
\textbf{0.1}    & 17.37 & 56.91 \\
0.5    & 17.40 & 56.15 \\
1      & 18.18 & 56.41 \\
10     & 17.90 & 56.57 \\
\bottomrule
\end{tabular}
\end{table*}

\newpage

\subsubsection{Loss Temperature}

\begin{table}[!h]
\centering
\caption{Sensitivity to the softmax temperature on Llama-3.1-8B-Instruct.}
\label{tab:temperature}
\begin{tabular}{ccc}
\toprule
\textbf{Temperature} & \textbf{Wiki $\downarrow$} & \textbf{Avg. Acc. $\uparrow$} \\
\midrule
0.1  & 38.20 & 68.30 \\
0.5  & 8.47  & 68.89 \\
0.75 & 8.09  & 68.19 \\
1    & 8.04  & 68.37 \\
\textbf{1.5}  & 7.97  & 68.46 \\
2    & 8.09  & 68.47 \\
5    & 8.08  & 67.87 \\
\bottomrule
\end{tabular}
\end{table}

\subsubsection{Single Transform Ablation}
\label{apx:single_transform}

Table~\ref{tab:single-transform} shows the benefit of each transformation on Wikitext2 perplexity. 
From the results, we can see the benefit of applying all $3$ transformations, as removing any of them degrades the perplexity results.

\begin{table}[!h]
\centering
\caption{Wikitext2 perplexity without each of the three transformations applied by \MN{}.}
\label{tab:single-transform}
\begin{tabular}{lcccc}
\toprule
\textbf{Model} & \textbf{All} & \textbf{No $\matsym{T}_3$} & \textbf{No $\matsym{T}_1$} & \textbf{No $\matsym{T}_2$} \\
\midrule
Llama3.1-8B-Instruct  & 7.97  & 8.14  & 8.17  & 8.07 \\
Qwen3-1.7B & 15.72 & 15.87 & 17.38 & 18.16 \\
Qwen3-8B   & 9.38  & 9.54  & 9.60  & 9.47 \\
\bottomrule
\end{tabular}
\end{table}

\newpage 
\subsection{NVFP Quantization Format Evaluation} 
\label{apx:nvfp}
\edit{In Table~\ref{tab:nvfp} we compare \MN{} against leading approaches under NVFP quantization. From the table, across most of the comparisons \MN{} attains the best or second best performance indicating its ability to generalize to this quantization setup as well.}

\begin{table*}[!h]
\centering
\caption{NVFP quantization results}
\label{tab:nvfp}
\resizebox{\textwidth}{!}{%
\begin{tabular}{llccccccccc}
\toprule
\textbf{Model} & \textbf{Method} & \textbf{ARC-C} & \textbf{ARC-E} & \textbf{HellaSwag} & \textbf{WinoGrande} & \textbf{PIQA} & \textbf{BOOLQ} & \textbf{OBQA} & \begin{tabular}[c]{@{}c@{}}\textbf{Avg.}\\ \textbf{Accuracy}\end{tabular} & \begin{tabular}[c]{@{}c@{}}\textbf{Avg.}\\ \textbf{Recovery (\%)}\end{tabular} \\
\midrule

\multirow{9}{*}{\textbf{Llama3.2-1B}}
& RTN         & 31.91 & 57.53 & 60.92 & 58.72 & 70.78 & 62.23 & 34.00 & 53.73 & 92.86 \\
% & QuaRot-RTN  & 29.95 & 53.54 & 54.43 & 57.70 & 66.70 & 57.92 & 30.40 & 50.09 & 86.40 \\
& GPTQ        & 33.62 & 57.24 & 61.62 & 59.67 & 70.13 & 59.60 & 33.80 & 53.67 & 93.03 \\
% & QuaRot      & 31.74 & 56.14 & 59.70 & 58.88 & 70.84 & 57.71 & 33.40 & 52.63 & 91.01 \\
& SpinQuant   & 34.64 & 57.45 & 60.52 & 59.35 & 71.49 & 61.62 & 35.40 & 54.35 & 94.49 \\
& FlatQuant$^\dagger$   & 33.96 & 55.85 & 59.85 & 58.72 & 70.18 & 61.93 & 34.20 & 53.53 & 92.93 \\
& MR-GPTQ     & 32.85 & 58.04 & 60.06 & 55.96 & 71.55 & 59.51 & 35.20 & 53.31 & 92.52 \\
\cdashline{2-11}
\addlinespace[0.3em]
& \textbf{\MN{}-LU (Ours)} & 35.24 & 59.47 & 60.95 & 59.43 & 72.25 & 61.41 & 35.80 & \textbf{54.94} & \textbf{95.56} \\
& \textbf{\MN{}-QR (Ours)} & 33.28 & 59.34 & 61.77 & 59.43 & 72.09 & 61.77 & 35.40 & 54.73 & 94.84 \\
\midrule

\multirow{9}{*}{\textbf{Llama3.2-3B-Instruct}}
& RTN         & 42.75 & 65.87 & 69.08 & 66.14 & 74.65 & 74.86 & 41.60 & 62.13 & 97.79 \\
% & QuaRot-RTN  & 26.28 & 26.52 & 25.82 & 51.46 & 52.01 & 62.17 & 34.00 & 39.75 & 63.24 \\
& GPTQ        & 42.32 & 65.66 & 69.93 & 65.59 & 74.86 & 76.12 & 40.40 & 62.12 & 97.53 \\
% & QuaRot      & 26.71 & 26.77 & 25.77 & 52.17 & 52.29 & 62.17 & 33.80 & 39.95 & 63.55 \\
& SpinQuant   & 43.17 & 62.96 & 70.57 & 64.48 & 74.10 & 73.30 & 40.20 & 61.26 & 96.37 \\
& FlatQuant$^\dagger$   & 40.96 & 60.10 & 69.80 & 66.38 & 73.34 & 74.01 & 38.20 & 60.40 & 94.62 \\
& MR-GPTQ     & 41.89 & 67.30 & 70.13 & 64.80 & 74.54 & 78.23 & 40.40 & \textbf{62.47} & \textbf{97.95} \\
\cdashline{2-11}
\addlinespace[0.3em]
& \textbf{\MN{}-LU (Ours)} & 43.77 & 65.82 & 68.83 & 67.40 & 73.88 & 78.50 & 38.80 & 62.43 & 97.78 \\
& \textbf{\MN{}-QR (Ours)} & 41.89 & 63.09 & 70.08 & 66.46 & 74.32 & 74.65 & 40.00 & 61.50 & 96.42 \\
\midrule

\multirow{9}{*}{\textbf{Llama3.1-8B-Instruct}}
& RTN         & 51.37 & 74.37 & 77.72 & 72.93 & 78.45 & 81.68 & 47.80 & 69.19 & 97.50 \\
% & QuaRot-RTN  & 50.09 & 70.88 & 76.71 & 71.59 & 77.58 & 81.07 & 43.20 & 67.30 & 94.46 \\
& GPTQ        & 51.88 & 74.66 & 78.50 & 71.74 & 78.78 & 83.15 & 47.80 & \textbf{69.50} & \textbf{97.92} \\
% & QuaRot      & 51.45 & 74.66 & 77.29 & 72.61 & 79.98 & 82.66 & 45.40 & 69.15 & 97.18 \\
& SpinQuant   & 51.62 & 73.32 & 77.54 & 71.51 & 78.56 & 82.23 & 46.20 & 68.71 & 96.72 \\
& FlatQuant$^\dagger$   & 51.96 & 74.79 & 77.48 & 72.22 & 78.56 & 81.44 & 45.40 & 68.84 & 96.84 \\
& MR-GPTQ     & 51.11 & 75.04 & 77.69 & 75.06 & 79.87 & 81.62 & 46.40 & 69.54 & 97.79 \\
\cdashline{2-11}
\addlinespace[0.3em]
& \textbf{\MN{}-LU (Ours)} & 52.05 & 73.70 & 77.38 & 75.06 & 79.16 & 83.49 & 45.20 & 69.43 & 97.57 \\
& \textbf{\MN{}-QR (Ours)} & 50.94 & 73.70 & 77.53 & 74.51 & 79.16 & 84.62 & 46.20 & 69.52 & 97.68 \\
\midrule

\multirow{9}{*}{\textbf{Qwen3-1.7B}}
& RTN         & 37.63 & 61.20 & 58.05 & 60.54 & 70.18 & 74.71 & 34.80 & 56.73 & 94.12 \\
% & QuaRot-RTN  & 26.19 & 23.82 & 26.61 & 49.25 & 52.39 & 62.17 & 27.60 & 38.29 & 63.94 \\
& GPTQ        & 37.46 & 63.17 & 56.68 & 58.56 & 69.21 & 76.94 & 33.80 & 56.55 & 93.52 \\
% & QuaRot      & 26.45 & 24.33 & 26.33 & 47.28 & 51.90 & 62.17 & 29.60 & 38.29 & 64.27 \\
& SpinQuant   & 34.56 & 54.88 & 56.07 & 56.27 & 68.72 & 73.70 & 34.00 & 54.03 & 89.55 \\
& FlatQuant$^\dagger$   & 36.35 & 56.99 & 55.62 & 57.85 & 68.06 & 73.94 & 33.00 & 54.54 & 90.37 \\
& MR-GPTQ     & 38.99 & 63.22 & 56.98 & 57.38 & 69.64 & 72.23 & 37.40 & 56.55 & 94.43 \\
\cdashline{2-11}
\addlinespace[0.3em]
& \textbf{\MN{}-LU (Ours)} & 39.08 & 60.44 & 58.94 & 59.19 & 69.91 & 76.64 & 36.40 & 57.23 & 95.26 \\
& \textbf{\MN{}-QR (Ours)} & 40.10 & 64.18 & 58.48 & 60.06 & 70.13 & 77.43 & 34.80 & \textbf{57.89} & \textbf{96.04} \\
\midrule

\multirow{7}{*}{\textbf{Qwen3-4B}}
& RTN         & 48.38 & 73.19 & 67.81 & 65.11 & 74.10 & 82.75 & 37.60 & 64.13 & 95.36 \\
& GPTQ        & 49.74 & 73.65 & 67.98 & 65.51 & 73.12 & 82.97 & 38.20 & 64.45 & 95.99 \\
& SpinQuant   & 44.37 & 68.90 & 64.67 & 62.75 & 71.33 & 83.24 & 36.00 & 61.61 & 91.34 \\
& FlatQuant$^\dagger$   & 46.67 & 69.44 & 65.96 & 62.43 & 72.25 & 83.43 & 38.80 & 62.71 & 93.46 \\
& MR-GPTQ     & 47.95 & 70.66 & 65.88 & 64.09 & 71.60 & 83.03 & 37.00 & 62.89 & 93.53 \\
\cdashline{2-11}
\addlinespace[0.3em]
& \textbf{\MN{}-LU (Ours)} & 47.95 & 74.12 & 69.15 & 65.67 & 75.14 & 83.49 & 39.00 & 64.93 & 96.62 \\
& \textbf{\MN{}-QR (Ours)} & 48.55 & 73.91 & 68.90 & 65.59 & 74.32 & 83.64 & 39.60 & \textbf{64.93} & \textbf{96.76} \\
\midrule

\multirow{9}{*}{\textbf{Qwen3-8B}}
& RTN         & 55.20 & 78.83 & 74.44 & 68.51 & 76.06 & 86.09 & 41.60 & 68.68 & 98.17 \\
% & QuaRot-RTN  & 25.34 & 24.83 & 26.43 & 48.70 & 52.07 & 49.08 & 29.40 & 36.55 & 53.39 \\
& GPTQ        & 55.20 & 79.55 & 74.29 & 67.01 & 76.39 & 86.36 & 41.80 & 68.66 & 98.13 \\
% & QuaRot      & 27.47 & 24.79 & 26.23 & 51.38 & 51.14 & 49.20 & 28.00 & 36.89 & 53.80 \\
& SpinQuant   & 53.75 & 77.40 & 73.05 & 68.03 & 75.08 & 85.75 & 39.60 & 67.52 & 96.27 \\
& FlatQuant$^\dagger$   & 51.79 & 78.75 & 73.23 & 67.56 & 76.50 & 85.84 & 42.00 & 67.95 & 97.05 \\
& MR-GPTQ     & 55.12 & 78.91 & 74.22 & 68.59 & 76.33 & 85.41 & 41.60 & 68.60 & 98.07 \\
\cdashline{2-11}
\addlinespace[0.3em]
& \textbf{\MN{}-LU (Ours)} & 55.38 & 78.41 & 75.60 & 70.64 & 77.64 & 86.24 & 42.00 & \textbf{69.41} & \textbf{99.24} \\
& \textbf{\MN{}-QR (Ours)} & 53.24 & 77.82 & 73.59 & 67.56 & 76.28 & 85.23 & 42.00 & 67.96 & 97.18 \\
\bottomrule
\end{tabular}
}
\end{table*}
\newpage
\section{Full Experimental Results}
\label{apx:full-exp-results}
Here we provide the complete benchmark breakdown corresponding to the results summarized in Table ~\ref{tab:benchmarks} for Llama3.2-1B, Llama3.2-3B-Instruct, Llama3.1-8B-Instruct, Qwen3-1.7B, Qwen3-4B and Qwen3-8B. For each model, we report per-task zero-shot accuracies on ARC (Easy/Challenge), HellaSwag, WinoGrande, PIQA, BoolQ, and OpenBookQA under both MXFP4 and MXINT4 weight and activation quantization, comparing LATMiX to all reference baselines under the same experimental setup. Alongside the per-task scores, we report the average accuracy and the average recovery relative to the FP16 baseline. For both \MN{} and FlatQuant, the transformations were learned using WikiText2 datset.
% Please add the following required packages to your document preamble:
% \usepackage{graphicx}
\begin{table*}[ht]
\label{tab:llama-1b-bf16}
\caption{Performance of Llama-3.2-1B across weight and activation quantization configurations.}
\resizebox{\textwidth}{!}{%
\begin{tabular}{llccccccccc}
\toprule
Format & Method & ARC-C & ARC-E & HellaSwag & WinoGrande & PIQA & BOOLQ & OBQA & \begin{tabular}[c]{@{}c@{}}Avg.\\ Accuracy\end{tabular} & \begin{tabular}[c]{@{}c@{}}Avg.\\ Recovery (\%)\end{tabular} \\ 
\midrule
 & FP16 & 36.86 & 61.70 & 65.79 & 62.75 & 74.81 & 63.82 & 37.00 & 57.53 & 100 \\
   \cline{2-11}
\addlinespace[0.3em]
\multirow{8}{*}{MXFP4} 
& RTN & 30.38 & 49.79 & 51.66 & 55.01 & 67.68 & 55.11 & 31.80 & 48.78 & 84.58 \\
& QuaRot-RTN & 27.13 & 45.03 & 46.88 & 52.57 & 64.96 & 58.75 & 29.40 & 46.39 & 80.00 \\
& GPTQ & 30.20 & 51.30 & 54.37 & 53.91 & 67.19 & 55.57 & 32.00 & 49.22 & 85.29 \\
& QuaRot & 31.74 & 55.68 & 57.03 & 58.48 & 69.10 & 56.91 & 33.20 & 51.74 & 89.64 \\
& SpinQuant & 30.72 & 53.41 & 56.54 & 54.78 & 69.59 & 61.22 & 36.40 & 51.81 & 90.06 \\
& OSTQuant & 31.82 & 56.94 & 56.61 & 56.35 & 69.69 & 61.86 & 34.80 & 52.58 & 91.22 \\
& FlatQuant$^\dagger$ & 30.89 & 55.56 & 58.57 & 57.93 & 69.26 & 62.14 & 32.80 & 52.45 & 90.54 \\
& MR-GPTQ & 34.30 & 57.32 & 57.53 & 59.51 & 69.70 & 61.65 & 34.60 & 53.52 & 93.07 \\
\cdashline{2-11}
\addlinespace[0.3em]
 & \textbf{\MN{}-LU (Ours)} & 34.72 & 58.50	& 59.09 &	58.40	& 70.89	& 62.29	& 34.40 & \textbf{54.04} & \textbf{93.88} \\
 & \textbf{\MN{}-QR (Ours)} & 34.04 & 58.54 & 57.50 & 58.40 & 71.16 & 62.32 & 34.60 & 53.79 & 93.42\\ \midrule
\multirow{8}{*}{MXINT4} & RTN & 27.22 & 43.22 & 43.35 & 54.30 & 61.92 & 48.65 & 29.20 & 43.98 & 76.32 \\
 & QuaRot-RTN & 29.35 & 44.19 & 43.38 & 52.41 & 59.30 & 54.80 & 29.40 & 44.69 & 77.90 \\
 & GPTQ & 25.94 & 43.73 & 46.72 & 50.75 & 60.45 & 50.89 & 26.80 & 43.61 & 75.16 \\
 & QuaRot & 31.83 & 52.02 & 54.39 & 53.99 & 66.10 & 57.19 & 33.80 & 49.90 & 86.95 \\
   & SpinQuant & 29.10 & 48.78 & 53.53 & 55.17 & 65.67 & 61.56 & 33.00 & 49.54 & 85.81 \\
   & OSTQuant & 30.72 & 54.04 & 53.90 & 54.38 & 67.46 & 59.11 & 33.40 & 50.43 & 87.51 \\
 & FlatQuant$^\dagger$ & 31.14 & 54.12 & 56.04 & 57.46 & 67.41 & 54.01 & 32.20 & 50.34 & 87.25 \\
 & MR-GPTQ & 31.23 & 52.10 & 57.22 & 57.46 & 68.44 & 59.63 & 31.40 & 51.07 & 88.21 \\
   \cdashline{2-11}
\addlinespace[0.3em]
 & \textbf{\MN{}-LU (Ours)} & 31.99	& 54.46	& 57.25	& 58.24	& 69.69	& 61.43	& 32.60 & 52.24 & 90.34 \\
 & \textbf{\MN{}-QR (Ours)} & 32.08	& 53.82	& 56.92	& 56.19	& 69.58	& 61.95	& 36.20 & \textbf{52.39} & \textbf{91.17} \\ 
 \bottomrule
\end{tabular}%
}
\end{table*}
% \subsection{LLama3.2-3B-Instruct}
\begin{table*}[ht]
\label{tab:llama-3b-instruct-bf16}
\caption{Performance of Llama3.2-3B-Instruct across weight and activation quantization configurations.}
\resizebox{\textwidth}{!}{%
\begin{tabular}{llccccccccc}
\toprule
Format & Method & ARC-C & ARC-E & HellaSwag & WinoGrande & PIQA & BOOLQ & OBQA & \begin{tabular}[c]{@{}c@{}}Avg.\\ Accuracy\end{tabular} & \begin{tabular}[c]{@{}c@{}}Avg.\\ Recovery (\%)\end{tabular} \\ 
\midrule
 & FP16 & 45.56 & 67.42 & 71.72 & 67.32 & 75.41 & 75.35 & 42.00 & 63.54 & 100 \\
   \cline{2-11}
\addlinespace[0.3em]
\multirow{8}{*}{MXFP4} & RTN & 43.00 & 60.65 & 67.43 & 62.59 & 73.18 & 73.76 & 39.20 & 59.97 & 94.06
 \\
 & QuaRot-RTN & 33.53 & 48.44 & 62.04 & 58.64 & 71.60 & 71.50 & 34.00 & 54.25 & 84.13
 \\
 & GPTQ & 41.21 & 61.87 & 67.25 & 62.83 & 71.98 & 73.03 & 37.80 & 59.42 & 92.92
 \\
 & QuaRot & 41.04 & 56.31 & 68.19 & 65.04 & 71.16 & 72.66 & 36.60 & 58.72 & 91.74
 \\
& SpinQuant & 39.93 & 63.30 & 66.12 & 62.12 & 72.52 & 75.60 & 38.20 & 59.68 & 93.17
 \\
 & OSTQuant & 40.18 & 61.82 & 66.52 & 63.45 & 72.14 & 74.46 & 36.60 & 59.31 & 92.64
 \\
 & FlatQuant$^\dagger$ & 40.44 & 63.09 & 68.45 & 64.48 & 74.32 & 74.28 & 38.80 & 60.55 & 94.73
 \\
 & MR-GPTQ & 43.17 & 65.45 & 69.06 & 64.17 & 73.50 & 70.58 & 38.60 & 60.65 & 95.02
 \\
   \cdashline{2-11}
\addlinespace[0.3em]
 & \textbf{\MN{}-LU (Ours)} & 43.43	& 67.46	& 69.59	& 64.24	& 74.59	& 79.93	& 39.00 & \textbf{62.61} & \textbf{97.95} \\
 & \textbf{\MN{}-QR (Ours)} & 42.57	& 65.26	& 67.72	& 63.77	& 73.88	& 78.04	& 40.00 & 61.61 & 96.59 \\ \midrule
\multirow{8}{*}{MXINT4} & RTN & 36.01 & 55.77 & 64.04 & 59.43 & 70.57 & 67.74 & 37.60 & 55.88 & 87.31
\\
 & QuaRot-RTN & 33.28 & 48.15 & 56.00 & 58.17 & 66.05 & 57.92 & 31.00 & 50.08 & 78.02
 \\
 & GPTQ & 35.24 & 54.42 & 65.06 & 62.35 & 70.73 & 70.09 & 33.80 & 55.96 & 86.79
 \\
 & QuaRot & 39.42 & 56.86 & 67.02 & 60.38 & 71.82 & 69.57 & 38.00 & 57.58 & 90.13
 \\
   & SpinQuant & 41.47 & 62.96 & 67.27 & 65.35 & 72.52 & 72.78 & 37.80 & 60.02 & 93.83 \\
   & OSTQuant & 37.20 & 55.98 & 64.65 & 63.46 & 72.42 & 71.83 & 38.40 & 57.70 & 90.11 \\
 & FlatQuant$^\dagger$ & 39.25 & 58.54 & 66.06 & 63.93 & 72.03 & 74.13 & 34.20 & 58.31 & 90.77
 \\

 & MR-GPTQ & 41.98 & 64.02 & 69.03 & 63.14 & 73.83 & 72.54 & 40.20 & 60.68 & 95.11
 \\
   \cdashline{2-11}
\addlinespace[0.3em]
 & \textbf{\MN{}-LU (Ours)} & 41.97	& 64.85	& 67.99	& 64.71	& 73.83	& 73.82	& 38.20 & 60.77 & 95.15 \\
 & \textbf{\MN{}-QR (Ours)} & 41.63	& 64.85	& 66.89	& 66.29	& 73.44	& 75.10	& 40.00 & \textbf{61.17} & \textbf{95.94} \\ 
 \bottomrule
\end{tabular}%
}
\end{table*}
% \subsection{LLama3.2-8B-Instruct}
\begin{table*}[ht]
\label{tab:llama-8b-instruct-bf16}
\caption{Performance of Llama-3.1-8B-Instruct across weight and activation quantization configurations.}
\resizebox{\textwidth}{!}{%
\begin{tabular}{llccccccccc}
\toprule
Format & Method & ARC-C & ARC-E & HellaSwag & WinoGrande & PIQA & BOOLQ & OBQA & \begin{tabular}[c]{@{}c@{}}Avg.\\ Accuracy\end{tabular} & \begin{tabular}[c]{@{}c@{}}Avg.\\ Recovery (\%)\end{tabular} \\ 
\midrule
 & FP16 &  53.58 & 75.76 & 78.51 & 76.09 & 79.49 & 83.88 & 49.00 & 70.90 & 100 \\
   \cline{2-11}
\addlinespace[0.3em]
\multirow{8}{*}{MXFP4} & RTN & 47.87 & 72.18 & 74.85 & 70.24 & 76.88 & 82.17 & 43.20 & 66.77 & 93.59 \\
 & QuaRot-RTN & 43.34 & 67.34 & 71.05 & 61.80 & 74.05 & 78.38 & 41.00 & 62.42 & 87.40 \\
 & GPTQ & 48.98 & 73.48 & 76.07 & 69.77 & 78.67 & 80.28 & 44.80 & 67.44 & 94.73 \\
 & QuaRot & 44.62 & 70.50 & 75.20 & 70.96 & 76.99 & 83.55 & 44.80 & 66.66 & 93.32 \\
 & SpinQuant & 48.98 & 74.66 & 75.74 & 72.30 & 77.48 & 82.75 & 44.80 & 68.10 & 95.57 \\
 & OSTQuant & 48.37 & 73.77 & 74.86 & 69.29 & 77.47 & 81.71 & 43.80 & 67.03 & 94.04 \\
 & FlatQuant$^\dagger$ & 50.09 & 72.73 & 75.22 & 71.03 & 77.75 & 82.08 & 45.80 & 67.81 & 95.40 \\
 & MR-GPTQ & 48.98 & 73.48 & 76.07 & 69.77 & 78.67 & 80.28 & 44.80 & 67.44 & 94.73 \\
 \cdashline{2-11}
\addlinespace[0.3em]
 & \textbf{\MN{}-LU (Ours)} & 49.82 & 73.02 & 76.53 & 72.53 & 78.40 & 83.30 & 45.60 & \textbf{68.46} & \textbf{96.17} \\
 & \textbf{\MN{}-QR (Ours)} & 49.23 & 73.23 & 76.63 & 73.16 & 78.85 & 82.35 & 44.60 & 68.29 & 95.81 \\
 \midrule
\multirow{8}{*}{MXINT4} 
& RTN & 45.90 & 70.16 & 72.31 & 67.48 & 76.77 & 77.09 & 42.60 & 64.62 & 90.64 \\
 & QuaRot-RTN & 41.64 & 61.11 & 71.04 & 59.19 & 72.74 & 70.18 & 39.40 & 59.33 & 83.18 \\
 & GPTQ & 44.45 & 67.63 & 71.16 & 65.43 & 76.88 & 79.57 & 40.60 & 63.68 & 89.04 \\
 & QuaRot & 44.62 & 70.50 & 75.20 & 70.96 & 76.99 & 83.55 & 44.80 & 66.66 & 93.32 \\
 & SpinQuant & 48.98 & 74.66 & 75.74 & 72.30 & 77.48 & 82.75 & 44.80 & 68.10 & 95.57 \\
 & OSTQuant & 46.50 & 69.36 & 74.93 & 71.51 & 75.95 & 79.63 & 45.00 & 66.13 & 92.87 \\
 & FlatQuant$^\dagger$ & 47.61 & 69.15 & 75.02 & 68.67 & 78.02 & 83.18 & 39.00 & 65.81 & 91.84 \\
& MR-GPTQ & 52.56 & 76.22 & 77.44 & 74.43 & 78.67 & 80.76 & 42.40 & 68.93 & 96.71 \\
   \cdashline{2-11}
\addlinespace[0.3em]
 & \textbf{\MN{}-LU (Ours)} & 49.23 & 73.06 & 76.63 & 72.92 & 78.18 & 82.69 & 46.40 & 68.44 & \textbf{96.20} \\
  & \textbf{\MN{}-QR (Ours)} & 49.14 & 75.21 & 75.80 & 71.03 & 78.29 & 83.57 & 44.00 & 68.15 & 95.54  \\ 
 \bottomrule
\end{tabular}%
}
\end{table*}

% \subsection{Qwen3-1.7B}
\begin{table*}[ht]
\label{tab:qwen3-1.7b-bf16}
\caption{Performance of Qwen3-1.7B across weight and activation quantization configurations.}
\resizebox{\textwidth}{!}{%
\begin{tabular}{llccccccccc}
\toprule
Format & Method & ARC-C & ARC-E & HellaSwag & WinoGrande & PIQA & BOOLQ & OBQA & \begin{tabular}[c]{@{}c@{}}Avg.\\ Accuracy\end{tabular} & \begin{tabular}[c]{@{}c@{}}Avg.\\ Recovery (\%)\end{tabular} \\ 
\midrule
 & FP16 & 43.17 & 69.57 & 60.17 & 60.38 & 72.20 & 77.61 & 37.20 & 60.04  & 100 \\
   \cline{2-11}
\addlinespace[0.3em]
\multirow{8}{*}{MXFP4} & RTN & 34.13 & 54.42 & 52.02 & 56.43 & 65.13 & 66.21 & 31.40 & 51.39 & 85.30 \\
 & QuaRot-RTN & 29.35 & 45.45 & 46.33 & 51.62 & 61.10 & 65.60 & 27.60 & 46.72 & 77.02  \\
 & GPTQ & 30.55 & 50.13 & 51.60 & 57.54 & 66.87 & 66.48 & 33.80 & 50.99 & 84.71 \\
 & QuaRot & 34.73 & 51.18 & 54.45 & 53.75 & 67.52 & 66.64 & 34.40 & 51.81 & 86.48 \\
  & SpinQuant & 34.90 & 53.45 & 54.30 & 59.12 & 66.16 & 70.40 & 30.60 & 52.70 & 87.20 \\
  & OSTQuant & 38.13 & 63.55 & 54.43 & 58.56 & 67.57 & 71.16 & 35.00 & 55.48 & 92.35 \\
 & FlatQuant$^\dagger$ & 36.69 & 57.58 & 56.04 & 57.06 & 68.72 & 74.07 & 33.80 & 54.85 & 90.98 \\
 & MR-GPTQ & 33.36 & 54.67 & 53.08 & 57.22 & 65.89 & 69.02 & 32.40 & 52.23 & 86.59 \\
   \cdashline{2-11}
\addlinespace[0.3em]
 & \textbf{\MN{}-LU (Ours)} & 40.27	& 62.62	& 57.26	& 60.29	& 69.04	& 76.17	& 36.20 & \textbf{57.42} & \textbf{95.62} \\
 & \textbf{\MN{}-QR (Ours)} & 38.31	& 63.51	& 56.48	& 59.43	& 69.64	& 76.20	& 33.20 & 56.68 & 93.74 \\ \midrule
\multirow{8}{*}{MXINT4} & RTN & 28.67 & 44.74 & 45.51 & 54.70 & 61.92 & 60.09 & 29.20 & 46.40 & 76.94 \\
 & QuaRot-RTN & 25.94 & 35.02 & 40.00 & 50.67 & 56.26 & 61.47 & 26.20 & 42.22 & 69.77 \\
 & GPTQ & 28.84 & 44.40 & 46.75 & 52.33 & 62.30 & 58.87 & 29.80 & 46.18 & 76.75 \\
 & QuaRot & 32.68 & 44.87 & 54.27 & 54.78 & 64.15 & 65.72 & 31.60 & 49.72 & 82.80 \\
  & SpinQuant & 29.52 & 48.48 & 50.59 & 56.04 & 64.42 & 68.44 & 29.20 & 49.53 & 81.55 \\
  & OSTQuant & 38.31 & 57.03 & 55.41 & 56.59 & 67.63 & 72.02 & 33.60 & 54.37 & 90.47 \\
 & FlatQuant$^\dagger$ & 34.47 & 51.68 & 54.57 & 58.64 & 67.36 & 69.91 & 35.20 & 53.12 & 88.56 \\
 & MR-GPTQ & 35.41 & 51.56 & 54.52 & 55.64 & 66.05 & 66.64 & 32.60 & 51.77 & 86.27 \\
   \cdashline{2-11}
\addlinespace[0.3em]
& \textbf{\MN{}-LU (Ours)} & 37.37	& 59.80	& 57.32	& 55.88	& 69.15	& 72.90	& 35.60 & 55.43 & 92.24 \\
 & \textbf{\MN{}-QR (Ours)} & 37.62	& 62.28	& 55.25	& 59.35	& 68.60	& 72.38	&  35.60 & \textbf{55.87} & \textbf{92.96} \\
 \bottomrule
\end{tabular}%
}
\end{table*}
% \subsection{Qwen3-4B}
\begin{table*}[ht]
\label{tab:qwen3-4b-bf16}
\caption{Performance of Qwen3-4B across weight and activation quantization configurations.}
\resizebox{\textwidth}{!}{%
\begin{tabular}{llccccccccc}
\toprule
Format & Method & ARC-C & ARC-E & HellaSwag & WinoGrande & PIQA & BOOLQ & OBQA & \begin{tabular}[c]{@{}c@{}}Avg.\\ Accuracy\end{tabular} & \begin{tabular}[c]{@{}c@{}}Avg.\\ Recovery (\%)\end{tabular} \\ 
\midrule
 & FP16 & 53.50 & 78.41 & 70.02 & 67.01 & 74.97 & 85.02 & 40.20 & 67.02 & 100 \\
   \cline{2-11}
\addlinespace[0.3em]
\multirow{8}{*}{MXFP4} & RTN & 45.14 & 67.00 & 64.24 & 61.01 & 71.11 & 74.80 & 35.00 & 59.76 & 88.93 \\
 & QuaRot-RTN & 39.33 & 57.24 & 57.41 & 61.64 & 69.15 & 70.64 & 36.60 & 56.00 & 83.84 \\
 & GPTQ & 46.50 & 68.52 & 65.44 & 63.85 & 72.03 & 80.86 & 37.60 & 62.11 & 92.54 \\
 & QuaRot & 46.16 & 70.79 & 62.88 & 62.90 & 70.24 & 82.32 & 36.00 & 61.61 & 91.47 \\
  & SpinQuant & 43.94 & 65.91 & 63.65 & 60.22 & 71.82 & 79.02 & 38.00 & 60.37 & 90.04 \\
  & OSTQuant & 45.90 & 70.03 & 65.29 & 63.61 & 72.47 & 80.55 & 39.00 & 62.40 & 93.10 \\
 & FlatQuant$^\dagger$ & 46.84 & 70.37 & 64.87 & 61.96 & 72.63 & 82.78 & 37.80 & 62.46 & 92.96 \\
 & MR-GPTQ & 46.50 & 68.52 & 65.44 & 63.85 & 72.03 & 80.86 & 37.60 & 62.11 & 92.54 \\
   \cdashline{2-11}
\addlinespace[0.3em]
 & \textbf{\MN{}-LU (Ours)} & 48.29 & 73.95 & 64.84 & 63.61 & 72.74 & 83.33 & 39.40 & \textbf{64.74} & \textbf{96.64} \\
 % & \textbf{\MN{}-LU (Ours) - NEW} & 48.03 & 73.10 & 65.63 & 65.50 & 72.95 &  83.54 & 39.20 & 63.99 & 95.47 \\
 & \textbf{\MN{}-QR (Ours)} & 48.55 & 74.07 & 65.98 & 63.38 & 73.67 & 83.73 & 38.80 & 64.08 & 95.64 \\ 
 % & \textbf{\MN{}-QR (Ours) - NEW} & 46.07 & 71.50 & 64.98 & 63.61 & 74.91 & 82.56 & 38.20 & 63.12 & 94.18 \\
 % & \textbf{\MN{}-QR (Ours) - NEW [t=1]} & 46.50 & 71.80 & 65.38 & 63.93 & 72.74 & 78.86 & 38.80 & 62.57 & 93.36 \\
 \midrule
\multirow{8}{*}{MXINT4} & RTN & 36.09 & 50.72 & 55.23 & 55.56 & 65.61 & 60.46 & 32.80 & 50.92 & 76.31 \\
 & QuaRot-RTN & 36.26 & 58.38 & 53.98 & 56.91 & 67.08 & 72.78 & 30.80 & 53.74 & 79.42 \\
 & GPTQ & 38.91 & 56.90 & 60.26 & 60.54 & 68.44 & 65.38 & 35.20 & 55.09 & 82.49 \\
 & QuaRot & 42.06 & 64.69 & 61.78 & 62.43 & 70.02 & 73.39 & 33.80 & 58.31 & 86.62 \\
  & SpinQuant & 41.89 & 60.61 & 59.42 & 59.27 & 68.88 & 68.96 & 35.00 & 56.29 & 84.14 \\
  & OSTQuant & 43.86 & 67.68 & 64.85 & 61.72 & 71.27 & 80.86 & 37.20 & 61.06 & 90.82 \\
 & FlatQuant$^\dagger$ & 42.32 & 71.38 & 63.76 & 61.64 & 71.22 & 81.41 & 38.80 & 61.50 & 91.50 \\
 & MR-GPTQ & 44.45 & 68.56 & 64.27 & 63.61 & 71.76 & 76.64 & 37.80 & 61.01 & 91.02 \\
   \cdashline{2-11}
\addlinespace[0.3em]
 & \textbf{\MN{}-LU (Ours)} & 46.42 & 70.45 & 66.36 & 63.93 & 73.39 & 78.44 & 38.80 & 63.63 & 94.89 \\
 % & \textbf{\MN{}-LU (Ours) - NEW} & 44.53 &  &  &  &  &  &  &  &  \\
 % & \textbf{\MN{}-LU (Ours) - NEW} & 48.54 & 71.96 & 65.56 & 61.48 & 73.55 & 82.11 & 39.60 & 63.26 & 94.38 \\
 & \textbf{\MN{}-QR (Ours)} & 47.27 & 72.60 & 65.10 & 62.19 & 72.69 & 80.95 & 38.40 & \textbf{63.70} & \textbf{94.73} \\ 
  % & \textbf{\MN{}-QR (Ours) - NEW} &  & 69.78 &  & 62.74 & 73.12 & 76.20 & 39.20 &  &  \\
  % & \textbf{\MN{}-QR (Ours) - NEW} & 45.56 & 72.05 & 65.89 & 63.22 & 72.30 & 80.67 & 38.40 & 62.58 & 93.37 \\

 \bottomrule
\end{tabular}%
}
\end{table*}
% \subsection{Qwen3-8B}
\begin{table*}[ht]
\label{tab:qwen3-8b-bf16}
\caption{Performance of Qwen3-8B across weight and activation quantization configurations.}
\resizebox{\textwidth}{!}{%
\begin{tabular}{llccccccccc}
\toprule
Format & Method & ARC-C & ARC-E & HellaSwag & WinoGrande & PIQA & BOOLQ & OBQA & \begin{tabular}[c]{@{}c@{}}Avg.\\ Accuracy\end{tabular} & \begin{tabular}[c]{@{}c@{}}Avg.\\ Recovery (\%)\end{tabular} \\ 
\midrule
 & FP16 & 56.74 & 80.68 & 76.53 & 70.17 & 77.69 & 86.64 & 41.60 & 70.01 & 100 \\
   \cline{2-11}
\addlinespace[0.3em]
\multirow{8}{*}{MXFP4} & RTN & 48.04 & 73.95 & 70.01 & 64.96 & 73.83 & 83.03 & 39.20 & 64.72 & 92.21 \\
 & QuaRot-RTN & 48.12 & 71.38 & 66.91 & 65.67 & 72.80 & 82.42 & 36.00 & 63.33 & 89.95 \\
 & GPTQ & 48.98 & 74.54 & 70.15 & 64.64 & 74.16 & 83.21 & 39.60 & 65.04 & 92.74 \\
 & QuaRot & 48.38 & 72.98 & 71.18 & 67.96 & 73.78 & 85.14 & 38.80 & 65.46 & 93.15 \\
  & SpinQuant & 50.51 & 74.87 & 71.17 & 67.64 & 74.81 & 84.13 & 40.40 & 66.22 & 94.53 \\
  & OSTQuant & 52.30 & 76.89 & 72.04 & 67.48 & 76.16 & 85.32 & 40.80 & 67.28 & 96.05 \\
 & FlatQuant$^\dagger$ & 52.13 & 76.09 & 72.55 & 68.03 & 75.41 & 84.62 & 39.40 & 66.89 & 95.34 \\
 & MR-GPTQ & 52.13 & 78.37 & 72.38 & 67.40 & 75.95 & 84.95 & 42.80 & 67.71 & 96.91 \\
   \cdashline{2-11}
\addlinespace[0.3em]
 & \textbf{\MN{}-LU (Ours)} & 54.69 & 77.57 & 74.06 & 69.93 & 76.33 & 85.81 & 42.20 & \textbf{67.94} & \textbf{97.07}\\
  % & \textbf{\MN{}-LU (Ours) - NEW} & 52.47 & 78.15 & 72.80 & 68.35 & 76.49 & 84.34 & 40.20 & 67.54 & 96.47 \\
 & \textbf{\MN{}-QR (Ours)} & 54.95 & 78.70 & 72.91 & 68.11 & 76.12 & 84.95 & 40.80 & 67.48 & 96.36 \\
 % & \textbf{\MN{}-QR (Ours) - NEW} & 53.15 & 75.21 & 72.52 & 68.19 & 75.29 & 84.03 & 40.80 & 67.03 & 95.74 \\
 % & \textbf{\MN{}-QR (Ours) - NEW [t=1]} & 51.36 & 74.53 & 71.69 & 67.87 & 75.51 & 84.15 & 40.40 & 66.50 & 94.98 \\
 \midrule
\multirow{8}{*}{MXINT4} & RTN & 43.34 & 68.14 & 65.52 & 59.59 & 73.07 & 74.71 & 35.40 & 59.97 & 85.25 \\
 & QuaRot-RTN & 41.21 & 64.27 & 64.64 & 62.75 & 71.16 & 75.60 & 35.00 & 59.23 & 84.17 \\
 & GPTQ & 45.48 & 66.04 & 66.16 & 62.90 & 73.50 & 80.12 & 39.00 & 61.89 & 88.42 \\
 & QuaRot & 46.59 & 68.31 & 70.67 & 65.75 & 74.10 & 82.39 & 40.00 & 63.97 & 91.35 \\
  & SpinQuant & 46.33 & 71.68 & 69.54 & 65.67 & 73.94 & 81.83 & 36.80 & 63.68 & 90.43 \\
  & OSTQuant & 52.99 & 75.55 & 72.08 & 67.48 & 75.79 & 84.25 & 41.00 & 67.02 & 95.82 \\
 & FlatQuant$^\dagger$ & 48.55 & 74.92 & 70.98 & 67.01 & 75.57 & 83.76 & 39.80 & 65.80 & 93.76 \\
 & MR-GPTQ & 51.88 & 76.09 & 72.10 & 68.27 & 75.68 & 83.73 & 39.00 & 66.68 & 95.01 \\
   \cdashline{2-11}
\addlinespace[0.3em]
 & \textbf{\MN{}-LU (Ours)} & 50.09 & 75.88 & 74.12 & 67.17 & 75.84 & 84.40 & 41.80 & \textbf{68.10} & \textbf{97.16} \\
 & \textbf{\MN{}-QR (Ours)}& 52.73 & 76.01 & 71.23 & 67.25 & 76.39 & 83.24 & 39.60 & 67.09 & 96.00 \\ 
 % & \textbf{\MN{}-QR (Ours) - NEW}& 53.07 & 73.94 & 72.79 & 65.74 & 74.91 & 82.04 & 38.80 & 65.90 & 94.12 \\ 
 % & \textbf{\MN{}-QR (Ours) - NEW [t=1]} & 45.98 & 71.17 & 68.78 & 64.71 & 74.42 & 79.78 & 39.20 & 63.43 &  \\
 \bottomrule
\end{tabular}%
}
\end{table*}
\begin{table*}[ht]
\label{tab:qwen3-14b-bf16}
\caption{Performance of Qwen3-14B across weight and activation quantization configurations.}
\resizebox{\textwidth}{!}{%
\begin{tabular}{llccccccccc}
\toprule
Format & Method & ARC-C & ARC-E & HellaSwag & WinoGrande & PIQA & BOOLQ & OBQA & \begin{tabular}[c]{@{}c@{}}Avg.\\ Accuracy\end{tabular} & \begin{tabular}[c]{@{}c@{}}Avg.\\ Recovery (\%)\end{tabular} \\ 
\midrule
 & FP16 & 60.23 & 82.82 & 79.86 & 74.58 & 79.86 & 89.32 & 46.40 & 73.29 & 100 \\
   \cline{2-11}
\addlinespace[0.3em]
\multirow{8}{*}{MXFP4} & RTN & 55.88 & 78.36 & 75.84 & 71.90 & 78.01 & 86.45 & 42.20 & 69.80 & 94.88 \\
  & QuaRot-RTN &  54.35 & 76.01 & 73.25 & 70.08 & 76.82 & 85.77 & 41.80 & 68.30 & 92.85 \\
 & GPTQ & 58.61 & 79.20 & 75.86 & 72.13 & 78.23 & 87.18 & 43.00 & 70.60 & 96.12 \\
 & QuaRot & 56.82 & 80.38 & 76.19 & 74.34 & 78.07 & 88.50 & 45.00 & 71.33 & 97.18  \\
 & SpinQuant & 57.25 & 80.00 & 76.79 & 72.13 & 78.61 & 87.88 & 44.00 & 70.95 & 96.59 \\
 & OSTQuant & 58.95 & 81.01 & 77.91 & 73.40 & 79.32 & 87.82 & 43.80 & 71.74 & 97.67 \\
 & FlatQuant$^\dagger$ & 55.80 & 79.41 & 76.47 & 73.48 & 77.04 & 87.55 & 44.20 & 70.56 & 96.07 \\
 & MR-GPTQ & 58.10 & 81.14 & 77.31 & 71.90 & 78.29 & 86.57 & 46.00 & 71.33 & 97.39 \\
   \cdashline{2-11}
\addlinespace[0.3em]

  & \textbf{\MN{}-LU (Ours)} & 59.30 & 81.77 & 77.81 & 71.66 & 79.16 & 88.25 & 44.80 & \textbf{71.82} & \textbf{97.99} \\
 & \textbf{\MN{}-QR (Ours)} & 56.82 & 80.93 & 77.22 & 72.77 & 78.34 & 88.65 & 44.80 & 71.36 & 97.36 \\
 \midrule
\multirow{8}{*}{MXINT4} & RTN & 53.15 & 74.53 & 72.51 & 67.08 & 76.98 & 82.20 & 41.20 & 66.81 & 90.88 \\
 & QuaRot-RTN &  49.06 & 72.09 & 72.00 & 68.19 & 75.84 & 84.92 & 38.20  & 65.76 & 88.92 \\
 & GPTQ & 53.49 & 77.52 & 72.89 & 68.82 & 76.65 & 83.02 & 40.80 & 67.59 & 91.83 \\
 & QuaRot &  57.42 & 82.28 & 76.05 & 71.03 & 77.91 & 87.52 & 43.00  & 70.74 & 96.19 \\
 & SpinQuant & 56.48 & 79.96 & 74.83 & 70.79 & 78.23 & 87.15 & 43.30 & 70.11 & 95.39 \\
 & OSTQuant & 58.36 & 81.14 & 77.74 & 72.13 & 78.34 & 86.63 & 44.00 & 71.19 & 96.97  \\
  & FlatQuant$^\dagger$ & 55.80 & 76.72 & 74.74 & 70.63 & 75.89 & 85.59 & 44.60 & 69.14 & 94.36 \\
 & MR-GPTQ & 58.61 & 80.30 & 76.56 & 72.05 & 78.99 & 87.76 & 42.60 & 70.98 & 96.53 \\
   \cdashline{2-11}
\addlinespace[0.3em]
 % & \textbf{\MN{}-LU (Ours)} & 57.59 & 80.85 & 77.74 & 72.61 & 79.76 & 88.22 & 45.60 & 71.77 & \textbf{97.92} \\
 & \textbf{\MN{}-LU (Ours)} & 58.36 & 80.80 & 78.21 & 72.05 & 79.59 & 87.52 & 46.00 & \textbf{71.79} & \textbf{97.95} \\
 % & \textbf{\MN{}-QR (Ours)} & 58.27 & 81.94 & 76.91 & 71.58 & 78.29 & 88.22 & 43.80 & 71.29 & 97.27 \\ 
  & \textbf{\MN{}-QR (Ours)} & 57.25 & 81.27 & 75.43 & 71.03 & 77.42 & 86.51 & 43.80 & 70.39 & 96.04 \\ 
\bottomrule
\end{tabular}%
}
\end{table*}
\begin{table*}[ht]
\label{tab:qwen3-32b-bf16}
\caption{Performance of Qwen3-32B across weight and activation quantization configurations.}
\resizebox{\textwidth}{!}{%
\begin{tabular}{llccccccccc}
\toprule
Format & Method & ARC-C & ARC-E & HellaSwag & WinoGrande & PIQA & BOOLQ & OBQA & \begin{tabular}[c]{@{}c@{}}Avg.\\ Accuracy\end{tabular} & \begin{tabular}[c]{@{}c@{}}Avg.\\ Recovery (\%)\end{tabular} \\ 
\midrule
 & FP16 & 60.92 & 83.29 & 83.95 & 76.87 & 82.15 & 86.45 & 46.40 & 74.29 & 100 \\
   \cline{2-11}
\addlinespace[0.3em]
\multirow{8}{*}{MXFP4} & RTN & 57.08 & 78.45 & 79.42 & 70.56 & 78.50 & 85.53 & 46.80 & 70.91 & 95.66 \\
 & QuaRot-RTN &  58.70 & 76.64 & 77.96 & 72.77 & 77.63 & 82.93 & 42.60  & 69.89 & 94.01 \\
 & GPTQ & 58.27 & 81.14 & 80.78 & 74.98 & 79.97 & 82.84 & 45.80 & 71.40 & 96.95 \\
 & QuaRot &  56.99 & 80.13 & 81.73 & 74.19 & 79.37 & 87.00 & 46.80 & 72.32 & 97.39  \\
 & SpinQuant &  59.64 & 78.45 & 80.24 & 72.05 & 77.63 & 82.66 & 45.20  & 70.83 & 95.56 \\
 & OSTQuant &  60.66 & 82.99 & 81.19 & 74.11 & 79.32 & 87.06 & 47.40 &  73.25 & 98.82 \\
 & FlatQuant$^\dagger$ & 57.42 & 79.04 & 80.76 & 74.66 & 78.56 & 85.84 & 42.60 & 71.27 & 95.60 \\
 & MR-GPTQ & 60.40 & 80.51 & 81.95 & 75.45 & 79.43 & 84.58 & 46.40 & 72.67 & 98.01 \\
   \cdashline{2-11}
\addlinespace[0.3em]
 % & \textbf{\MN{}-LU (Ours)} & 60.92 & 81.98 & 82.08 & 76.00 & 81.12 & 87.95* & 46.80* & 73.84 & \textbf{99.39} \\
 & \textbf{\MN{}-LU (Ours)} & 60.49 & 81.77 & 82.04 & 76.32 & 79.86 & 86.39 & 47.20 & \textbf{73.44} & \textbf{99.04} \\
 % & \textbf{\MN{}-QR (Ours)} & 59.12 & 79.88 & 80.01 & 76.55 & 79.21 & 87.40 & 44.60 & 72.40 & 97.45 \\
  & \textbf{\MN{}-QR (Ours)} & 59.81 & 81.22 & 80.20 & 74.03 & 78.61 & 87.67 & 43.80 & 72.19 & 97.01 \\
 \midrule
\multirow{8}{*}{MXINT4} & RTN & 50.59 & 69.14 & 66.60 & 63.14 & 73.55 & 80.09 & 40.20 & 63.33 & 85.19 \\
 & QuaRot-RTN &  55.63 & 77.10 & 74.66 & 69.37 & 76.60 & 83.85 & 41.80  & 68.43 & 91.91  \\
 & GPTQ & 55.54 & 77.35 & 78.26 & 70.79 & 78.81 & 85.16 & 44.20 & 70.02 & 94.15 \\
 & QuaRot & 60.15 & 79.92 & 81.21 & 73.40 & 79.70 & 84.49 & 44.00  & 71.84 & 96.64 \\
 & SpinQuant &  55.46 & 76.43 & 79.51 & 71.03 & 78.12 & 84.67 & 44.60  &  69.97 & 94.15 \\
  & OSTQuant &  60.75 & 82.07 & 81.11 & 75.76 & 78.89 & 87.67 & 45.20 &  73.06 & 98.32 \\
  & FlatQuant$^\dagger$ & 56.74 & 76.93 & 78.86 & 74.11 & 77.80 & 85.81 & 44.40 & 70.66 & 95.07 \\
 & MR-GPTQ & 58.70 & 80.93 & 81.65 & 73.95 & 78.99 & 84.89 & 47.00 & 72.30 & 97.51 \\
   \cdashline{2-11}
\addlinespace[0.3em]
 % & \textbf{\MN{}-LU (Ours)} & 60.66 & 81.48 & 81.72 & 75.13 & 81.01 & 87.82 & 46.00 & 73.40 & \textbf{98.80} \\
  & \textbf{\MN{}-LU (Ours)} & 62.62 & 82.36 & 81.57 & 76.32 & 79.86 & 88.16 & 46.00 & \textbf{73.84} & \textbf{99.49} \\
 % & \textbf{\MN{}-QR (Ours)} & 58.19 & 80.13 & 79.37 & 70.63 & 79.05 & 87.30 & 45.80 & 71.20 & 96.24 \\ 
 & \textbf{\MN{}-QR (Ours)} & 56.82 & 80.51 & 79.47 & 73.16 & 79.10 & 86.94 & 44.80 & 71.54 & 96.16 \\
\bottomrule
\end{tabular}%
}
\end{table*}

\endgroup
%%%%%%%%%%%%%%%%%%%%%%%%%%%%%%%%%%%%%%%%%%%%%%%%%%%%%%%%%%%%%%%%%%%%%%%%%%%%%%%
%%%%%%%%%%%%%%%%%%%%%%%%%%%%%%%%%%%%%%%%%%%%%%%%%%%%%%%%%%%%%%%%%%%%%%%%%%%%%%%

\end{document}